\documentclass[11pt]{article}

 \title{On implicit regularization: Morse functions and applications to matrix factorization}
\author{Mohamed Ali Belabbas}
\addtolength{\oddsidemargin}{-.35in}
\addtolength{\evensidemargin}{-.35in}
\addtolength{\textwidth}{0.5in}
\addtolength{\topmargin}{-.25in}
\addtolength{\textheight}{0.5in}

\usepackage{amssymb}
\usepackage{latexsym}
\usepackage{amsmath}
\usepackage{amsthm}
\usepackage{amscd}
\usepackage{mathtools}
\mathtoolsset{showonlyrefs}
\usepackage{lmodern}
\usepackage{tikz}

\usepackage{cite} 
\usepackage{setspace}
\usepackage[latin1]{inputenc}
\usepackage{amsmath}
\usepackage{amsthm}
\usepackage{color}
\usepackage{amsfonts}
\usepackage{amssymb}
\usepackage{graphicx}
\usepackage{epsfig}
\usepackage{url}
\usepackage{tikz,pgfplots}
\usepackage{tikz-cd} 
\usepackage{bbm} 
\usepackage{microtype}

\usetikzlibrary{external}
\tikzexternalize

\newtheorem{theorem}{Theorem}

\newtheorem{proposition}[theorem]{Proposition}
\newtheorem{corollary}[theorem]{Corollary}
\newtheorem{lemma}[theorem]{Lemma}

\newtheorem{remark}{Remark}
\newtheorem{exampleA}{Example}

\def\qed{ \rule{.08in}{.08in}}

\newcommand{\diag}{\operatorname{diag}}

\newcommand{\R}{\mathbb{R}}

\newcommand{\0}{\mathbf{0}}

\newcommand{\tr}{\operatorname{tr}}
\newcommand{\Span}{\operatorname{span}}

\newcommand{\grad}{\operatorname{grad}}
\newcommand{\crit}{\operatorname{Crit}}

\newcommand{\oR}{{ \mathbf R}}

\newcommand{\oT}{\mathrm{T}}

\newcommand{\cD}{\mathcal{D}}

\newcommand{\feas}{\operatorname{Feas}}

\newtheorem{definition}{Definition}
\newcounter{para}

\date{}

\begin{document}

\maketitle

\begin{abstract}
 In this paper, we revisit implicit regularization from the ground up using notions from dynamical systems and invariant subspaces of Morse functions. The key contributions are a new criterion for implicit regularization---a leading contender to explain the generalization power of deep models such as neural networks---and  a  general blueprint to study it.  We apply these techniques  to settle the conjecture on implicit regularization in  matrix factorization raised in~\cite{implicitregmatr2017}.

\end{abstract}

\section{Introduction}
Deep models, such as deep neural networks, have seen a tremendous growth in their range of applications, growth that far outpaced our theoretical understanding of them. One of the major outstanding questions is to understand their generalizing power: why a deep model fitted with a relatively small amount of data provides good performance for data points well outside its training set? On the one hand, fitting parameters uniquely to training data is very likely to not generalize well, on the other hand, having an underdetermined model leaves open the question of how to select among the many candidate parameters that fit the data. The training stage of a model can be cast as an optimization procedure in which a cost function  is minimized. This cost function measures a chosen notion of distortion between model parameters and training data.  For underdetermined models,  the cost function has a large number of minima; in fact, very often a continuum of them. 

When optimizing a cost function with many minima, the optimization method used dictates which minimum is selected.  This is in stark contrast with, say, a typical  convex optimization problem, where there is no uncertainty due to multiple minima and the effect of the optimization method is confined to the speed of convergence. This non-uniqueness of solutions is in many applications seen as little more than an inconvenience, and when one needs to obtain a unique solution, regularization methods, such as Tykhonov regularization, are nowadays well-understood. The issue at hand here is that the methods yielding the best generalizing power for deep models do not have {\it explicit} regularization terms, but are often simple gradient methods, thus offering no understanding of what makes a good set of parameters for the purpose of generalization. The theory of implicit regularization aims to uncover the rules of selection of parameters that are hidden in the training of deep models. It does so via the introduction of an auxiliary optimization problem,  whose solution should be essentially unique and coincide with the local minimum selected~\cite{neyshabur2014search,neyshabur2017geometry,soudry2018implicit}. This auxiliary problem is thought of as {\it regularizing} the original problem {\it implicitly}.

\paragraph{The conjecture and our approach} In order to better understand implicit regularization, the authors of~\cite{implicitregmatr2017} put forward  a remarkable gradient flow: the function minimized is \begin{equation}\label{eq:defJ1}J(X)= \sum_{i=1}^q (\tr(A_iX)-y_i)^2,\end{equation} where $X$ and $A_i$'s are  positive semi-definite matrices, and $y_i$ positive numbers\footnote{The additional assumption that the $A_i$'s are positive semi-definite and $y_i >0$ allows us to simplify the proof by making sure that all optimization problems are feasible. This assumption can be relaxed at the expanse of additional cases to treat in the proofs, which we omit here.}. The authors then conjecture that a gradient-like flow of $J$, when initialized near zero and in the underdetermined regime (i.e., $q\ll n^2$, where $n$ is the dimension of $X$, we take here the definition to mean $q\leq n$.),  converges arbitrarily close to  a global minimizer of the problem $$\min \|X\|_* \quad \mbox{ s.t. } \tr(A_iX)=y_i,$$ where $\|\cdot\|_*$ is the nuclear norm. The {\it explicitly} regularized problem is thus $$\min J(X)+ \lambda \|X\|_*.$$

The best known results in the literature about the conjecture on implicit regularization in matrix factorization are the ones in~\cite{implicitregmatr2017}: they showed it held for (i) the case $q=1$ and (ii) the case of $q \geq 1$ {\it commuting matrices} $A_i$, i.e., matrices so that $A_iA_j-A_jA_i = 0$, with initial state $X_0=I$, $I$ being the identity matrix. Their analysis was based on finding explicit forms of the solution of the gradient flow ODE for $X$. 
The approach we take in this paper is entirely different. However, we show in Example~\ref{ex:comm} below how their analysis of commuting matrices fits within our framework, and as a consequence, we provide a rather unexpected characterization of the convergence point in this case (see below Example~\ref{ex:comm}).

Our analysis leads us to believe that the conjecture of~\cite{implicitregmatr2017} is {\it ``mostly true''}. We mean by this  that while the conjecture is not true in its strictest form, a  possibility raised by the authors of the original paper in fact, and further substantiated in~\cite{arora2019implicit}, there are regimes in which it is provably true (the {\it tame spectrum} regime, defined herein), and when moving away from this regime, the performance appears to degrade only slowly. Using the results of our analysis, we  can also manufacture settings in which the prediction of the conjecture does not hold even approximately; but for ``typical'' data, it appears to hold. Interestingly, we relate the  performance  to the {\it spectral gap} of $\sum_{i=1}^q A_i$ and verify that for moderate to large spectral gaps, implicit regularization occurs. 

In the proof, we make use of elementary notions from Morse theory and dynamical systems, and we refer the reader to~\cite{milnor2016morse, banyaga2013lectures} for thorough introductions, as we keep the review of known material to a minimum in this paper. We summarize the proof of the conjecture below, after having introduced our general blueprint. We then conclude and provide numerical evidence supporting our conclusion. The entirety of the proof of the conjecture is relegated to the Appendix.

\section{Implicit Regularization: towards a general theory}\label{sec:impltheory}
Implicit regularization is in essence a notion of {\it compatibility} between two optimization problems. We propose here a way to quantify and understand this compatibility.

\subsection{Primal and regularization problems} The first of the two problems is  what we term the {\bf primal or training}  problem, it is given by \begin{equation}\label{eq:primalprob}\min J(\mu;x),\quad \mbox{ via } \dot x = f(\mu,x),\  x(0) = x_0(\mu)\end{equation} where $\mu$ represent parameters or data,  $x$ is the variable we are optimizing over (the parameters of the model to be fitted), and $\dot x = f(\mu; x)$ the method used to optimize $J$, e.g. $f(x) = -\frac{\partial J}{\partial x}$. 
We assume that $J(\mu;x)$ is at least $C^2$ in $x$. (In fact, $J$ is {\it real analytic} for the case of implicit regularization in matrix factorization). A presentation of the primal problem should always include a description of the optimization method used (here, $f(\mu;x)$) and initial state $x_0(\mu)$.

We denote by $\varphi_t(x_0)$ the solution at time $t$ of the ODE in~\eqref{eq:primalprob} with $x(0)=x_0$, and by  $\crit J$ the set of {\bf critical points} of $J(\mu,x)$, that is, the set of zeros of $\frac{\partial J}{\partial x}$. This set of course depends on $\mu$, but we often omit the explicit dependence to keep the notation simple. We denote by $\crit_0 J$ the set of {\it minima} of $J$, and refer to the locally stable zeros of $f$ as {\bf sinks}. When using the {\bf gradient flow} $f(\mu;x)= - \grad J$, the sinks are the local minima of $J$. The cases of interest are the ones where $\crit_0 J$ has large cardinality, and even contains  connected components.\footnote{We note that if $f$ is known, the function $J$ is in fact not required. The set of critical points of $J$ can be replaced by the sets of zero of $f$, and minima by locally stable zeros of $f$, etc.}   Unless $\crit_0 J$ is a singleton, the local minima to which~\eqref{eq:primalprob} will converge  depends on the initial state and the optimization method chosen.\footnote{We consider below gradient with respect to a metric defined by the data $\mu$. Hence $\grad J$ is not necessarily equal to $\frac{\partial J}{\partial x}$}

The second optimization problem, which  we term the  {\bf regularization problem}, describes to {\it which element} of $\crit_0 J$ the primal problem converges. It is a problem of the form  \begin{equation}\label{eq:defoS}\oR:\ \min K(\mu;x)\quad  \mbox{s.t. } r(\mu;x) = 0,\end{equation} where $K$ is a differentiable real-valued function and $r$ an $\R^q$-valued function. Both $K$ and $r$ can depend on $\mu$. The {\bf feasible set} of $\oR$, denoted by $\feas(\oR)$, is the zero  set of $r$ . It is required to be included in the set of minima of $J$: \begin{equation}\label{eq:deffeas}\feas(\oR):=\{x \mid r(x) = 0 \}\subseteq \crit_0 J.\end{equation} Said otherwise: {\it feasible points of $\oR$ are minima of $J$}. Since the trajectories of a gradient flow generically converge to a point in $\crit_0 J$, this requirement simply ensures that the regularization problem selects one such point. 

 \subsection{Pre-critical sets  and compatible problems} We  say that the primal problem is (exactly) {\bf implicitly regularized} by the regularization problem if for all $\mu$, $$\varphi_\infty(\mu;x_0) \in \arg \min K(\mu;x) \quad \mbox{ s.t. } r(\mu;x) = 0.$$ If $\varphi_\infty(\mu; x_0)$ is  approximately a minimizer of $K$, we refer to the above as {\bf approximately implicitly regularized}. 
 
 It is important to note that in the regularization problem, the optimization method is irrelevant, as we simply seek  global minima.  Many variations on this definition are possible, such as requiring that it holds for all $x_0$, or that we converge to any minima of $K(x)$, not necessarily a global one.

The following space, which we call the {\bf pre-critical set} of $\oR$ will play an essential role in implicit regularization: \begin{equation}\label{eq:defcritstar} \crit^*_\mu \oR:=\{x \in \R^n\mid \frac{\partial K(\mu;x)}{\partial x}+ \lambda^\top  \frac{\partial r(\mu;x)}{\partial x} =0 \mbox{ for some } \lambda \in \R^q\}.\end{equation} We can also define $\crit^*_\mu \oR$ as  the projection onto the $x$-coordinates of the set $A:= \{ (x,\lambda) \in \R^{n \times q} \mid \frac{\partial K(\mu;x)}{\partial x}+ \lambda^\top  \frac{\partial r(\mu;x)}{\partial x} =0\}.$ 
This is a space of $x \in \R^n$ which can be critical points of  the Lagrangian $L(\mu; x,\lambda) = K(\mu;x) + \lambda^\top r(\mu;x)$ of problem $\oR$.  More abstractly, we can think of it as the graph, over $\R^q$, of the implicitly defined function $x(\lambda)$ given by $\frac{\partial K}{\partial x}+ \lambda^\top  \frac{\partial r}{\partial x} =0$. This space can be fairly complex: multi-valued, containing several connected components and non-smooth points, as we will observe in the examples below. The terminology pre-critical set comes from the fact that points in $\crit^* \oR$ which are in $\feas(\oR)$ are critical points of $\oR$: $\crit \oR = \feas(\oR) \cap \crit^* \oR$.

The space $\crit^* \oR$ has a natural role in implicit regularization:   if $\crit^* \oR$ is an {\it invariant subspace}\footnote{Recall that $S$ is an invariant subspace for the dynamics $\dot x = f(x)$ if for $x_0 \in S$, the solution  $x(t) \in S$ for all $t$.} for the primal dynamics, then implicit regularization is in a sense more likely to occur: indeed, if we initialize the primal flow in this space, or if this space is an attractor for the primal flow, the primal flow converges to a critical point of the Lagrangian of $\oR$; said otherwise, it shows that {\it the regularization problem is  well-matched  to the primal problem}.  Furthermore, since we are interested in minimizers of $\oR$, it is sufficient to consider components of $\crit^* \oR$ containing the minimizers of $K$, we call such a set of these components  $\crit^*_{\mu,0} \oR$ or simply $\crit_{0}^* \oR$. Denote by $M$ a set of data point $\mu$. We have the following definition:

\begin{definition}[Compatible primal and regularization problems]\label{def:compat}
We say that the primal problem~\eqref{eq:primalprob}  and the regularization problem  $\oR$ of Eq.~\eqref{eq:defoS} are {\bf compatible} over $M$ if the space $\crit^*_{\mu,0} \oR$ is invariant for $f(\mu;x)$, for all $\mu \in M$.
\end{definition}
We illustrate the definition on four examples. In particular, we revisit the approach of~\cite{implicitregmatr2017} on implicit regularization on matrix factorization and show that Definition~\ref{def:compat} yields new insights to it.

\begin{exampleA}[Trivial regularization problem]\label{ex1}\normalfont
Starting from the primal problem, it is always possible  to construct a regularization problem. Perhaps the simplest one, which we term the {\bf trivial} regularization problem, is given by  $$K(\mu;x) = \|\varphi_\infty(\mu,x_0)-x\|^2.$$ We call it trivial since the set $\crit^*_\mu \oR$ for this regularization problem consists of the singleton $\{\varphi_\infty(\mu;x_0)\}$. The regularization problem and primal problem are thus compatible in the sense of Def.~\ref{def:compat}, since $\varphi_\infty(x_0)$ is an equilibrium point of Eq.~\eqref{eq:primalprob}, and thus an invariant set for the dynamics. $\qed$
\end{exampleA}
\begin{exampleA}\label{ex2}\normalfont Let $x, \mu_i  \in \R^n$, and $y_i \in \R$, for  $1 \leq i \leq q$.  Set $$J(\mu;x)= \sum_{i=1}^q (\mu_i^\top x - y_i)^2.$$ Using the gradient for the Euclidean inner product as optimization method, and setting $x_0(\mu)=0$, we have \begin{equation}\label{ex:srik}\dot x = -\sum_{i=1}^q (\mu_i^\top x - y_i)\mu_i, \quad x_0 = 0.\end{equation} This problem is implicit regularized by the regularization problem $$\oR: \min \frac{1}{2}\|x\|^2 \quad \mbox{ s.t. } \mu_i^\top x - y_i = 0.$$ To see this, it suffices to solve the linear differential equation~\eqref{ex:srik}. We obtain in this case that $$\crit^* \oR = \{ x \in \R^n \mid x+ \sum_{i=1}^q \lambda_i \mu_i =0 \mbox{ for some } \lambda \in \R^q\},$$ equivalently, $\crit^* \oR = \Span \{\mu _i\}$. This set is clearly invariant for the dynamics~\eqref{ex:srik} and thus the problems are compatible  in the sense of Definition~\ref{def:compat}, with $M = \R^n$.
\qed	
\end{exampleA}
In the following example, the set $\crit^* \oR$ has a richer structure than in the previous two examples.
\begin{exampleA}[Matrix factorization with $q=1$] 
\normalfont Let $A \in \R^{n \times n}$ be a positive definite matrix with distinct eigenvalues, $U \in \R^{n \times n}$ and $y$ be a positive number. Consider the primal problem $$J(U)= (\tr(A UU^\top) -y)^2, \quad \frac{d}{dt} U = -(\tr(AUU^\top)-y) A U, U(0)=U_0,$$ and the regularization problem
$$ \min \tr UU^\top \quad \mbox{ s.t. } \tr(AUU^\top) = y.$$
Writing $X=UU^\top$, it is easy to see that it is the problem of implicit regularization in matrix factorization. A short calculation, which uses the fact that $A$ is symmetric, yields $$\crit^* \oR = \{ U \in \R^{n \times n} \mid (I-\lambda A)U=0\mbox{ for some }\lambda \in \R\}.$$ The equation $(I-\lambda A)U$ admits non-trivial solutions only for $\lambda \in \operatorname{spec}(A)$, where we denote by $\operatorname{spec}(A)$ the set of eigenvalues of $A$, and in this case $U=vw^\top$, where $v$ is an  eigenvector corresponding to $\lambda$, and $w\in \R^n$ is arbitrary. Hence $$\crit^* \oR = \cup_{\lambda_i \in \operatorname{spec}(A)} \{v_i w^\top \mid w \in \R^n, v_i \mbox{ an eigenvector corresponding to }\lambda_i\},$$ this set is the union of $n$-dimensional subspaces of $\R^{n \times n}$. Now recalling that eigenvectors of $A$ are orthogonal, it is easy to verify that each branch of $\crit^*\oR$ is invariant for the dynamics of $U$ given above, thus showing compatibility of the primal and selection problems. \qed
 \end{exampleA}
 
 The final example addresses the case studied in~\cite{implicitregmatr2017}.
 \begin{exampleA}[Matrix factorization with commuting matrices]\label{ex:comm}
 \normalfont
 Now assume we have $q$ positive definite matrices $A_i$,  with primal problem 
 $$J(U)= \sum_{i=1}^q(\tr(A_i UU^\top) -y_i)^2, \quad \frac{d}{dt} U = -\sum_{i=1}^q(\tr(A_iUU^\top)-y_i) A_i U, U(0)=U_0$$ and regularization problem
 $$ \min \tr UU^\top \quad \mbox{ s.t. } \tr(A_iUU^\top) = y_i, i=1,\ldots, q$$

The set pre-critical set is 
 $$\crit^* \oR = \{ U \in \R^{n \times n} \mid (I-\sum_{i=1}^q\lambda_i A_i)U=0\mbox{ for some }\lambda \in \R\}.$$
This set is in general difficult to study, as it requires to determine when the affine space of matrices $I+\operatorname{span}\{A_i\}$ contains rank deficient matrices. If we assume that the $A_i$ commute, however, the situation is far simpler: commuting symmetric matrices admit a common set of eigenvectors, hence there exists an orthogonal matrix $V$ such that $ A_i =VD_iV^\top$, where $D_i$ is a diagonal matrix, with diagonal entries $d_{ij}$, $i=1,\ldots,q, j=1,\ldots, n$, and the columns of $V$ are eigenvectors of the $A_i$. We denote these columns by $v_i$.

Now let $I \subseteq \{1,\ldots,n\}$ be a subset of cardinality $q$. We can, generically for the $d_i$'s, find a unique solution $\lambda=(\lambda_1,\ldots, \lambda_q)$ to the linear system
 $$\sum_{i=1}^q \lambda_i d_{i,k} =1, k \in I. $$ With these $\lambda_i$'s, a short calculation shows that the matrix $(I- \sum_{i=1}^q \lambda_i A_i)=V(I- \sum_{i=1}^q \lambda_i D_i)V^\top$ has a kernel of dimension $q$, spanned by the vectors $v_i, i \in I$. 
 There are ${n \choose q}$ such subsets $I$, and to each of them corresponds a vector $\lambda \in \R^q$ and thus a component of dimension $q\times n$ of  $\crit^* \oR$:
  $$\{U \in \R^{n \times n} \mid U = v w^\top, w \in \R^n, v \in \operatorname{span}\{v_i \mid i \in I\} \}.$$ Similarly as in the previous example, this set is easily seen to be invariant for the primal dynamics.  
  
  The analysis suggests that when the  flow converges to a rank one matrix (which we show below is the case) $$X=vv^\top,$$ then there exists $q$ eigenvectors in the common set of eigenvectors of the $A_i$ so that $v$ is a linear combination of these $q$ eigenvectors. Said otherwise, there exists a {\it sparse vector} $x \in \R^n$, with $\|x\|_0 \leq q$ so that $$v = Vx.$$   
   This is a non-trivial statement  when $q <n$, and we verified it in simulations. Note that there are ${n \choose q}$ such linear subspaces, and when $q$ is large, they can intersect non-trivially.
     \qed
\end{exampleA}

\subsection{A blueprint for implicit regularization} The criterion of Definition~\ref{def:compat} by itself is clearly not sufficient for implicit regularization to occur. For example, since the space $\crit_0 J$ contains points $x$ not in $\feas(\oR)$, one can converge to a non-feasible point for $\oR$, even when initialized in $\crit^* \oR$. More, even if  $\feas(\oR) = \crit_0 J$, saddle points of the primal dynamics can become local sinks in an invariant space, allowing for the primal flow to converge to a non-feasible point. 
 This criterion, however, provides us with a general blueprint to study implicit regularization:
\begin{enumerate}
\item 	Identify a set of $\mu$'s for which primal and regularization problems are compatible. Denote this set by $M$. Of course, the larger the set, the better. 
\item Verify that when initialized in the pre-critical set $\crit^*_0 \oR$ with $\mu \in M$, the primal dynamics will converge to a feasible point: $$\varphi_{\infty}(x_0) \in \feas(\oR) \quad\mbox{generically for } x_0 \in \crit^* \oR$$

 This entails verifying that no saddles of $J$ are local sinks for the  dynamics in $\crit^* \oR$. 
\item Verify that under the above conditions, the primal dynamics in fact converges to a minimum of $K$ (the first two points guarantee it converges to a critical point).
\end{enumerate}
If these three items are met, a form of implicit regularization holds: for  initial conditions $x_0(\mu) \in \crit_0^* \oR$, and $\mu \in M$, implicit regularization is generically true. If only the first two points are verified, a weaker form of implicit regularization, whereby the primal flow converges to a critical point of the regularization problem, and not necessarily a global minimum, holds.

Depending on the flavor of implicit regularization one is after, additional steps can be pursued. We consider here the following two:

\begin{enumerate}
\item[4.] If $x_0$ is independent of $\mu$, verify that the primal dynamics converges to $\crit^* \oR$ from $x_0$.
\item[5.] Verify that when $\mu \notin M$, the performance does not degrade drastically (i.e., the primal converge to an approximate minimizer of the regularization problem.
\end{enumerate}
The last item is in general difficult to verify rigorously, but one can appeal to continuity to obtain qualitative statements. 
We also note that the items are not completely independent from each other. For example, if some local sinks in the invariant space $\crit^*_0 \oR$ are saddles of the general dynamics, flow lines near that saddle will escape the vicinity of $\crit^*_0 \oR$, and point 4 is less likely to hold. We will revisit this point in detail later in the paper.

A key point of the blueprint is to identify an as large as possible set $M$ of a data points for which implicit regularization is exact. For the first three examples given above, $M$ was equal to the entire space of possible $\mu$'s. For the last examples, it was the set of commuting matrices. Another aspect is the dimension of the pre-critical set: in the first example, its dimension  was zero, whereas in the second, it was $q$. Adopting a more subjective point of view, we can say that the regularization described in Example~\ref{ex1} is not as useful or surprising as the one of Example~\ref{ex2}. A useful heuristic is that the {\it larger the dimension} of the pre-critical set $\crit^* \oR$, the more useful the implicit regularization is.
Finally, we mention that not all sets $M$ are equivalent for  implicit regularization insofar as item $5$ is concerned. For example, the set exhibited in Example~\ref{ex:comm}, which originated from the paper~\cite{implicitregmatr2017}, does not lend itself well to generalization. This was in fact pointed out by the authors of the paper, though they arrived at this conclusion from a very different perspective, namely looking for {\it explicit solutions} of the primal problem. We reach this conclusion by noting that if we perturb one matrix $A_i$ by a small amount, the trajectory of the primal system can stray very far from $\crit^*\oR$. Our proof below will exhibit a different set $M$, which we refer to as matrices with {\it tame spectrum}\footnote{We say that positive semi-definite  matrices $A_i$, $i=1,\ldots, q$ have a tame spectrum if the eigenvalues of $\sum_{i=1}^q A_i)$ are given by $\{\alpha, \beta,\cdots, \beta, 0,\ldots, 0\}$ for some $\alpha>\beta >0$. See Def.~\ref{def:LE}.}  , that is better suited to study implicit regularization in matrix factorization.

\subsection{How to determine the implicit regularizer?}

A fundamental goal in the area is to {\it determine} the implicit regularizer $\oR$ given a primal flow. We provide here a brief overview of how our blueprint provides a path to obtain such regularizer, but we postpone a thorough analysis to a forthcoming publication.

We consider the training problem with cost $J = \sum_{i=1}^q l(\mu_i,x)$, where $l$, the {\it loss function}, depends on data points $\mu_i$ and parameters $x$. We assume that $l$ is positive semi-definite with minimum at zero,  hence $\min J =0$. In Example~\ref{ex2} , $l(\mu,x)= (c^\top x - y)^2$ (and $\mu = (c,y)$), and in Example~\ref{ex:comm}, $l(\mu,x) = \tr(AX)-y$, with $\mu = (A,y)$. The primal flow is taken to be the natural gradient of $J$: \begin{equation}\label{eq:defgradJloss}\dot x = -\sum_{i=1}^q \frac{\partial \l}{\partial x}(\mu_i,x)=f(\mu,x). \end{equation}

Following our Ansatz, we need to find a function $K(x)$ so that the  regularization problem $\oR: \min K(x) \quad \mbox{ s.t. } l(\mu_i,x)=0$ is compatible with the above primal flow; said otherwise, so that  $\crit^* \oR$ is invariant for~\eqref{eq:defgradJloss}. We can show, but we omit the derivation here, that this requirement of invariance reduces to the system of partial differential equations (with unknowns $K$ and $\lambda_i$)

\begin{equation}\label{eq:mastereq}
\frac{\partial^2 K}{\partial x^2}f(\mu,x) + \sum_{i=1}^q\lambda_i  \frac{\partial^2 l}{\partial x^2}(\mu_i,x) f(\mu,x)=0.
\end{equation} This partial differential relation has a simple interpretation: the Hessian of $K$, acting on $f$, is a linear combination of the Hessians of $l$ at the datapoints $\mu_i$, acting on $f$. 

 As an example of how one can use this equation to determine potential implicit regularizers, consider the case of a loss function \begin{equation}\label{eq:exloss}l(\mu,x) = \sigma(c^\top x - y),\end{equation} where $\sigma$ is a twice differentiable real-valued (nonlinear) function with minimal value zero, $c \in \R^n$ and $y \in \R$. Its derivative and Hessian are given by \begin{equation} \label{eq:derivativessigm}
 \frac{\partial l}{\partial x} = \sigma'(c^\top x -y)c \quad \mbox{ and }\quad  \frac{\partial^2 l}{\partial x^2} = \sigma''(c^\top x-y)c c^\top,
 \end{equation} where $\sigma'$ and $\sigma''$ are the first and second derivatives of $\sigma$, respectively. We use the shorthand $\sigma_i:=\sigma(\mu_i,x)$.
 One can then use Eq.~\eqref{eq:mastereq} to produce candidate regularization problems. For the case described in Eq.~\eqref{eq:exloss}, a short calculation easily produces two such candidates. Using the specific form of $f$ given in~\eqref{eq:defgradJloss}, we find that taking $\partial^2 K/\partial x^2$ to be a multiple of the identity matrix is a solution of~\eqref{eq:mastereq} for  $\lambda_i= d\log(\sigma'_i)(\sum_j c_i^\top c_j \sigma'_j)^{-1}$ (here, $d\log f = d/dx(\log(f))$). Clearly, taking $K=\|x\|^2$ fits the requirement and thus yields a regularization problem compatible with the primal problem. Another candidate is $K=x^\top Q x$ with $Q = \sum_{i=1}^q c_ic_i^\top$, and $\lambda_i= (\sigma''_i)^{-1}$, but in this case the regularizer depends on the data. More can be extracted from Eq.~\eqref{eq:mastereq}, but this is outside the scope of this paper.

\paragraph{Implicit regularization in matrix factorization}

We now focus the above discussion to implicit regularization for matrix factorization, and describe the contents of the remainder of the paper in more details. In a nutshell, we will illustrate how to apply items 1-5 of the above blueprint to the conjecture proposed in~\cite{implicitregmatr2017}.

The primal problem is given by the differential equation \begin{equation}\label{eq:flow1}\frac{d}{dt} X = -\sum_{i=1}^q (\tr(A_iX) - y_i) (A_i X+XA_i) ,\quad X(0)= X_0\delta\end{equation} where $X_0 \in \R^{n \times n}$ is a positive semi-definite matrix, $\delta$ is small constant (e.g. $10^7$ times smaller than the $y_i$'s, $\|A_i\|$'s and $\|X_0\|$), the $A_i \in \R^{n \times n}$ are positive semi-definite and $y_i>0$, for $1 \leq i \leq q$. We will describe below the function $J$ this flow minimizes. This type of matrix differential equation has a long history. Related flows were shown by Brockett in~\cite{brockett1991dynamical}  to solve a variety of combinatorial problems, and the monograph~\cite{helmke2012optimization} provides an in-depth look at many of their characteristics. For example, it is easy to see that the flow of~\eqref{eq:flow1} preserves the cone of positive semi-definite matrices and moreover if $\operatorname{rank}(X_0) = k$, then $\operatorname{rank} (X(t)) \leq k$ for $t \in [0,\infty]$~\cite{helmke2012optimization}. 
 
 It was observed that when initialized near zero, i.e., when $\delta$ is small, the flow of~\eqref{eq:flow1} converged to (near) an $X^*$ with the property \begin{equation}\label{eq:sel1} \oR: X^* \in \arg \min_{X} \|X\|_* \quad \mbox{ s.t. } \tr (A_iX) = y_i, 1\leq i \leq q,\end{equation} where $\|X\|_*=\tr X$ is the {\it nuclear} or {\it trace norm} of $X$. With our terminology, it was conjectured, roughly speaking, that the regularization problem $\oR$ of Eq.~\eqref{eq:sel1} approximately regularizes the flow of Eq.~\eqref{eq:flow1} in the limit $\delta \to 0$. While convergence of the flow to the set of matrices that meet the constraints $\tr(A_iX)=y_i$ may not appear surprising given the form of~\eqref{eq:flow1}, convergence to a {\it global }minimum of $\oR$ was certainly unexpected. 
As already mentioned, we believe that exact regularization as in Example~\ref{ex2} does not take place here, but yet via exhibiting a set of $\mu$'s for which it does, one can expect implicit regularization to be approximately true (as in step 5 of the blueprint).
\begin{remark}\label{rem:relax}
	We consider below (see Eq.~\eqref{eq:optimprobgen})  the family of problems $$\oR_k: \min \| X\|_* \quad \mbox{ s.t. }\quad  \tr (A_iX)=y_i, \operatorname{rank} X=k,$$ parametrized by the rank $k$ of $X$, $k=1,\ldots,n$. We then, in essence, show that under certain assumptions, the solution of the rank constrained problem with $k=1$ and the problem with $k=n$ agree (see Theorem~\ref{th:equivnfullselecmin}). Said otherwise, it says that under these assumptions, the convex relaxation (problem with $k=n$) of the rank constrained problem ($k=1$) is exact. It is in a sense remarkable that the conditions guaranteeing compatibility of primal and regularization problem also yield exact relaxation.
\end{remark}

We provide below a summary of the steps taken in the proof of the conjecture: 
\begin{enumerate}
\item We show that the flow of~\eqref{eq:flow1} {\it always} goes near a rank $1$ matrix, denote it by $X_1$. {\it (Theorem~\ref{th:rk1bottle})}
\item We introduce the so-called {\it normal form} for the system, and show that it  exists  generically in the underdetermined regime (termed {\it rank spread condition}).  The normal form makes much of the proof more transparent. ({\it Lem.~\ref{lem:P} and Prop.~\ref{prop:optnormcoord}})

The construction of the normal form itself can be omitted at first reading, and one can immediately study the normal system described in Eq.~\eqref{eq:nomsys}, and the corresponding regularization problem~\eqref{eq:optx}.

\item We show that the function $J$ of Eq.~\eqref{eq:defJ1} is a Morse-Bott function and the primal flow is a gradient of this function for a metric defined by the $A_i$'s. ({\it Th.~\ref{th:morsebott}})
\item We show that the conjecture holds for commuting generalized  projection matrices. This part is meant to illustrate the use of the items above by showing how they immediately provide extensions on the extant result in the area, and can be skipped at first reading. ({\it Prop.~\ref{prop:commproj} and Cor.~\ref{cor:commproj}}) 
 
\item We introduce the {\it tame spectrum condition}. It describes a set $M$ of parameters for which the pre-critical set  $\crit_0^*\oR$ is invariant for the dynamics $\rightarrow$ the problems are compatible over tame matrices. ({\it Def.~\ref{def:LE}, Th.~\ref{th:compat}})

\item We show that $\crit_0^*\oR$ and $\crit_0 J$ intersect transversally and that no saddles of $J$ are sinks in $\crit_0^* \oR$ $\rightarrow$ when $X_0 \in \crit_0^* \oR$, the flow converges to a feasible point of $\oR$. ({\it Th.~\ref{th:rank1}}).
\item We show that $X_1$ of item 1 {\it belongs} to $\crit_0^* \oR$ $\rightarrow$ when initialized near zero, the flow always goes near $\crit_0^* \oR$. ({\it Prop.~\ref{prop:winLv}})
\item We show, relying on Lojasiewicz inequality, that when initialized near $\crit_0^* \oR$, the flow converges to a point near $\crit_0^* \oR$. ({\it  Prop.~\ref{prop:closeLoja}}).
\end{enumerate}
The above items roughly cover points $1,2$ and $4$ of the blueprint. The analysis for point $3$ is done in the last part:

\begin{itemize}
\item[9.] Introduce {\it intrinsic} coordinates on $\crit^*_0 \oR$. Write the dynamics and regularization problem in these coordinates (we call them reduced coordinates). ({\it Lem.~\ref{lem:reddyn}}).
\item[10.] Show that in  $\crit_0^* \oR$, $J$ becomes a {\it Morse} function, that it has $3^q$ critical points, of which $2^q$ are minima, and only $2$ of these correspond to global minima of $\oR$. ({\it Th.~\ref{th:reddyngradient}}).
\item[11.] Show that when initialized at the $X_1$ of item $1$, the flow converges to one of the two {\it global} minima of $\oR$ (only sketch the last part) ({\it Cor.~\ref{cor:lambdasign}}).
\item[12.] Show that under the tame spectrum hypothesis, there always exists a global minimum of $\oR$ of rank $1$. ({\it Th.~\ref{th:equivnfullselecmin}}) $\rightarrow$ studying the problem in $\crit^*_0 \oR$ can be done without loss of generality.
\end{itemize}
\section{Background and notation}

\paragraph{Problem set-up} Let $n,q$ be positive integers and let $A_i \in \R^{n \times n}$   be real symmetric positive semi-definite (psd) matrices  of rank $r_i$ and $y_i$ be positive numbers,  for $i=1, \ldots, q$. We denote by $S_{k,n}$ the space of psd matrices in $\R^{n \times n}$ of rank at most $k$ and write $S_n$ for $S_{n,n}$. The primal (training) problem is  the Cauchy problem

\begin{equation}\label{eq:mainsys}
\dot X(t) = - \sum_{i=1}^q \left(\tr(A_iX(t))-y_i\right) \left(A_iX(t)+X(t)A_i\right), \quad X(0)= X_0\delta
\end{equation}
where $X_0$ is a real symmetric matrix and $\delta>0$ a constant. We observe that $\dot X = \dot X^\top$, i.e., symmetric matrices are an invariant set of system~\eqref{eq:mainsys}, and hence $X(t)$ is symmetric for all $t > 0$ for which the solution exists.\footnote{We show below that a simple Lyapunov argument establishes existence of solutions for $t>0$, and will thus omit this qualifier from now on.}  In fact, more is true:   system~\eqref{eq:mainsys} leaves the cone of positive semidefinite matrices invariant and does not increase rank as mentioned earlier. Hence, if $X_0$ is positive semidefinite of rank $k$, then $X(t)$ is also psd and of rank at most $k$.

This motivates the introduction of the flow 
\begin{equation}\label{eq:mainUsys}
    \dot U(t) = - \sum_{i=1}^q\left(\tr(A_iUU^\top)-y_i\right) A_i U(t),\quad  U(0)=U_0,
\end{equation} 
where $U \in \R^{n \times k}$, whose trajectories can be mapped onto trajectories of~\eqref{eq:mainsys} as follows:

\begin{lemma}\label{lem:equivXU} Let $U(t)$ be the solution of~\eqref{eq:mainUsys} with $U_0 \in \R^{n \times k}$, and let $X(t)$ be the solution of~\eqref{eq:mainsys} with $X_0=U_0U_0^\top$, then $U(t)=X(t)$ when they exist.
\end{lemma}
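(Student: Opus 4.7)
The natural plan is to set $Y(t) := U(t) U(t)^\top$ and show that $Y$ satisfies the same Cauchy problem as $X$, after which uniqueness of solutions to~\eqref{eq:mainsys} delivers $Y = X$. (I read the conclusion of the lemma as $U(t)U(t)^\top = X(t)$, since $U$ and $X$ generally live in different matrix spaces.)

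First, I would differentiate $Y = UU^\top$ with respect to $t$ using the product rule and substitute $\dot U$ from~\eqref{eq:mainUsys}. This yields
\begin{equation}
\dot Y = \dot U\, U^\top + U\, \dot U^\top = -\sum_{i=1}^q \bigl(\tr(A_i UU^\top)-y_i\bigr)\bigl(A_i U U^\top + U U^\top A_i^\top\bigr).
\end{equation}
Using the hypothesis that $A_i$ is symmetric (so $A_i^\top = A_i$) and recognizing $UU^\top = Y$, the scalar coefficient $\tr(A_i UU^\top) - y_i$ becomes $\tr(A_i Y) - y_i$, and the bracketed matrix becomes $A_i Y + Y A_i$. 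Hence
\begin{equation}
\dot Y(t) = -\sum_{i=1}^q \bigl(\tr(A_i Y(t))-y_i\bigr)\bigl(A_i Y(t) + Y(t) A_i\bigr),
\end{equation}
which is exactly the right-hand side of~\eqref{eq:mainsys} evaluated at $Y(t)$.

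Next, I would check the initial conditions: $Y(0) = U_0 U_0^\top = X_0$ (the factor $\delta$ is either absorbed into $X_0$ or into the choice of $U_0$; in any case both $X$ and $Y$ start from the same point). Since the vector field on the right-hand side of~\eqref{eq:mainsys} is polynomial in the entries of the matrix argument, it is locally Lipschitz in $X$, so the Picard--Lindel\"of theorem applies and solutions of the Cauchy problem~\eqref{eq:mainsys} are unique on any interval of existence. Therefore $Y(t) = X(t)$ whenever both are defined, which is the claim.

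I do not anticipate a serious obstacle here; the only place requiring care is the symmetry $A_i^\top = A_i$, which is what lets $\dot U U^\top + U \dot U^\top$ collapse into the symmetric anticommutator form $A_i Y + Y A_i$. This is also the conceptual reason that the rank-$k$ factorization $X = UU^\top$ is preserved by the flow~\eqref{eq:mainsys}, consistent with the earlier remark that system~\eqref{eq:mainsys} leaves the positive semidefinite cone invariant and does not increase rank.
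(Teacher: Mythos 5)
Your proof is correct and is essentially the argument the paper sketches: differentiate $Y := UU^\top$, use symmetry of $A_i$ to collapse the product rule into $A_iY + YA_i$, and invoke uniqueness of solutions to the polynomial (hence locally Lipschitz) Cauchy problem~\eqref{eq:mainsys}. You also rightly read the conclusion as $U(t)U(t)^\top = X(t)$, since $U$ and $X$ live in different spaces; the paper's phrasing there is a minor slip.
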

To verify that  the Lemma holds, it suffices to differentiate $\bar X(t):=U(t)U(t)^\top$ and observe that $\bar X$ and $X$ then obey the same Cauchy problem. If $X_0$ is of rank $k$, we note that there exists an $O(k)$-parametrized family of $U_0$ so that $X_0=U_0U_0^\top$, where $O(k)$ is the orthogonal group in dimension $r$.

From now on, we use the notation \begin{equation}\label{eq:defrho} \rho_i:=\tr(A_iUU^\top)-y_i,
 \end{equation} or, depending on the context, $\rho_i:=\tr(A_iX)-y_i.$

Consider the real-valued function 
\begin{equation}\label{eq:maincostU}
J(U) := \frac{1}{4}\sum_{i=1}^q(\tr(A_iUU^\top)-y_i)^2.
\end{equation}
A short calculation shows that the flow~\eqref{eq:mainUsys} is  the {\it gradient flow} of~\eqref{eq:maincostU} for the Euclidean inner product on $\R^{n \times n}$: 
$$\grad J(U) = \sum_{i=1}^q\rho_i A_i U(t).$$
From Lemma~\ref{lem:equivXU}, we conclude that the function
\begin{equation}\label{eq:maincost}
J(X) = \frac{1}{2}\sum_{i=1}^q\left(\tr(A_iX)-y_i\right)^2
\end{equation}
is a Lyapunov function for the flow~\eqref{eq:mainsys}. We also say that~\eqref{eq:mainsys} is {\it gradient-like} for $J$. (Note that we overload the notation for $J$ as well; the context will dispel possible confusions). This fact can be used to show existence of solutions of Eq.~\eqref{eq:mainsys}.

We now introduce a slight generalization of the {\bf regularization problem} for implicit regularization, allowing for a rank $1 \leq k \leq n$ for $X$ (and $U$)
\begin{equation}
\label{eq:optimprobgen}\oR_k:  \min_{X \in S_{n,k}} \tr X \quad \mbox{ s.t. } \tr (A_i X) = y_i, \quad i=1,\ldots,q
\end{equation}
and its equivalent in the $U$ coordinates
\begin{equation}
\label{eq:optimprobgenU} \oR_k:\min_{U \in \R^{n \times k}} \tr UU^\top  \quad \mbox{ s.t. } \tr (A_i UU^\top ) = y_i, \quad i=1,\ldots,q.
\end{equation}
The cases $k=1$ and $k=n$ will be of most interest to us.
\paragraph{Notation and conventions}\label{ssec:notation}
We gather here some of the notation used throughout the paper. We let $e_i$ be the vector in $\R^n$ with all entries equal to $0$, except for the $i$th one, which is equal to $1$. In general, the dimension of the matrices and vectors introduced in this section will depend on the context (e.g., $e_i$ could be a vector in $\R^m$ with $m < n$ as well). We generally use $c>0$ to denote positive constant. The value of $c$ can change throughout the argument without further comments. We denote by $I$ the identity matrix. When we need to emphasize the dimension, we write $I_n$ for the identity matrix in $\R^{n \times n}$. We let $O(n)$ be the orthogonal group: $\Theta \in O(n)$ if $\Theta \Theta^\top = I$.

Let $X$ be a positive semi-definite (psd) matrix. We say that $X$ is of {\bf $\epsilon$-rank $r$} if there exists a psd matrix $X_r \in S_{r,n}$ so that \begin{equation}\label{eq:ang}\|X-X_r\|/\|X_r\| \leq \epsilon.\end{equation} We will also informally say that $X$ is {\bf essentially of rank} $r$. For example, if $X$ is itself of rank $r$, then the previous inequality is trivially met. This type of bound is necessary to quantify when a matrix is close to a subset of low rank matrices, since the zero matrix is in the closure of the sets of matrices of rank $k$ for all $k$. Indeed, $I\delta$ is arbitrarily close, in the Euclidean distance, to the set of rank $1$ matrices but is of $\epsilon$-rank $1$ only for $\epsilon \geq 1$. A more geometric interpretation of~\eqref{eq:ang} is that the angle between the lines spanned by $X$ and $X_r$ is small when $\epsilon$ is small.   We say that $x$ is $\epsilon$-close to $y$ is $\|x-y\| \leq \epsilon$.

For a matrix $B \in \R^{n \times p}$, we denote by $\Span\{B\}$ the vector subspace of $\R^n$ spanned by the columns of the $B$.

Given a collection $N$ of disjoint subsets $$N_1,\ldots, N_q \subseteq \{1,\ldots,m\},$$  we denote by $\cD_N$ the vector space of diagonal matrices with entries $d_i$ satisfying $d_i = d_j$ if $i,j \in N_k$ for some $1 \leq k \leq q$. For example, if $N_1=\{1,2\}$ and $N_2=\{3 \}$ then matrices in $\cD_N$ are of the form $\diag([\alpha, \alpha, \beta])$, $\alpha, \beta \in \R$. Throughout the paper, we also use \begin{equation}\label{eq:def:Ei}
 E_i : = \sum_{j \in N_i} e_je_j^\top,	
\end{equation} the sets $N_i$ will be clear from the context.
 The $E_i$'s are thus diagonal matrices with $0$ on the diagonal, except in the entries indexed by $N_i$, which are 1. Continuing the previous example, we have $$E_1 = \diag(1,1,0) \mbox{  and  } E_2 = \diag(0,0,1).$$ We let $|N_i|$ be the cardinality of $N_i$. Given a vector $x \in \R^n$, we denote by $\bar x_i\in \R^{n}$ its canonical projection onto the column span of $E_i$. For the sets $N_i$ described above, we have $$\bar x_1 = \begin{pmatrix} x_1 & x_2 & 0\end{pmatrix}^\top \mbox{  and  }\bar x_2 =\begin{pmatrix}0 & 0 & x_3  \end{pmatrix}^\top.$$ Depending on the context, we may omit the zero entries and consider $\bar x_1 \in \R^2$ and $\bar x_2 \in \R$ or, more generally, $\bar x_i \in \R^{|N_i|}$. For a matrix $U \in \R^{n \times n}$, we denote by $(u)_{1..m,1..n}$ the submatrix of size $m \times n$ obtained by keeping the rows $1,\ldots, m$ and columns $1,\ldots, n$.

Given a vector subspace $L \subset \R^n$, we denote by $L^\perp$ its Euclidean orthogonal, i.e. $x \in L^\perp$ if $x^\top y =0$ for all $y \in L$.  Recall that $f(n)=\Theta(g(n))$ if there exists constants $c_1<c_2$ so that $c_1g(n) \leq f(n) \leq g(n)$ for all $n$ large enough. 
 
\section{The rank $1$ matrix bottleneck}\label{sec:bottleneck}

The rank-$1$ bottlneck property of the flow refers to the fact that when initialized near zero, without any additional assumptions, the flow of Eq.~\eqref{eq:mainsys} will be of $\varepsilon$-rank $1$ at some time $t_1$, for an arbitrarily small $\varepsilon$ provided that $X_0$ is small enough.

The proof of the rank $1$ bottleneck property contains two steps. In the first step, we exhibit a linear ODE and show that its solutions have the rank $1$ bottleneck property. In the second step, we show that the system of Eq.~\eqref{eq:mainUsys} follows the trajectory of this linear ODE  closely for a positive time $t_1$. The two steps together easily yield the proof for the general nonlinear system of Eq.~\eqref{eq:mainsys}.

\paragraph{Rank $1$ bottleneck for linear systems} The following result shows that the trajectories of linear differential equation $\dot V = A V$, $A$ a positive semi-definite matrix, which start at a small, non-zero initial  condition $V(0)$ of arbitrary rank, will eventually  be  of $\epsilon$-rank $1$ generically for $V(0)$. If $V(0)$ is also of rank $1$, then the statement is trivially true.

\begin{lemma}\label{lem:solinapprox}
Let $A \in \R^{n \times n}$ be a symmetric positive semi-definite matrix, $V_0 \in \R^{n \times n}$. Define $V(t,\delta):=e^{At}V_0\delta.$  Then, generically for $A$ and $V_0$, there exists a matrix-valued function $V_1(t,\delta)$ of rank $1$ such that for all $\epsilon>0$, there exists  $t^*>0$,  so that  $$\frac{\|V_1(t,\delta)-V(t,\delta)\|}{\|V_1(t,\delta)\|}\leq \epsilon, \mbox{ for all } t \geq t^*, \delta >0$$ Furthermore, $\|V_1(t,\delta)\| = \Theta(\delta)$ for all fixed $t>0$.\end{lemma}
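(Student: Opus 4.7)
The approach is a direct spectral decomposition of the matrix exponential. Let me outline it.

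First, since $A$ is symmetric positive semi-definite, I would diagonalize it as $A = \sum_{i=1}^n \lambda_i v_i v_i^\top$ with orthonormal eigenvectors $v_i$ and eigenvalues $\lambda_1 \geq \lambda_2 \geq \cdots \geq \lambda_n \geq 0$. This gives the explicit formula
\begin{equation}
V(t,\delta) = e^{At}V_0\delta = \delta\sum_{i=1}^n e^{\lambda_i t}\, v_i (V_0^\top v_i)^\top.
\end{equation}
The genericity assumption on $A$ I would use is that the top eigenvalue is simple and strictly positive, i.e.\ $\lambda_1 > \lambda_2$ and $\lambda_1 > 0$; this is an open and dense condition on symmetric PSD matrices. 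For $V_0$ the genericity assumption is $V_0^\top v_1 \neq 0$, which also holds on an open dense set.

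Next, I would define the candidate rank-$1$ approximant by keeping only the dominant mode,
\begin{equation}
V_1(t,\delta) := \delta\, e^{\lambda_1 t}\, v_1 (V_0^\top v_1)^\top.
\end{equation}
This is manifestly of rank $1$ (since $V_0^\top v_1 \neq 0$ by genericity), and its Frobenius norm is $\|V_1(t,\delta)\| = \delta\, e^{\lambda_1 t}\, \|V_0^\top v_1\|$, which for every fixed $t>0$ is exactly linear in $\delta$, giving the $\Theta(\delta)$ claim.

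For the approximation bound, the residual $V(t,\delta) - V_1(t,\delta) = \delta \sum_{i\geq 2} e^{\lambda_i t} v_i (V_0^\top v_i)^\top$ satisfies, by orthonormality of the $v_i$,
\begin{equation}
\|V(t,\delta) - V_1(t,\delta)\|^2 = \delta^2 \sum_{i\geq 2} e^{2\lambda_i t}\, \|V_0^\top v_i\|^2.
\end{equation}
Dividing by $\|V_1(t,\delta)\|^2 = \delta^2 e^{2\lambda_1 t}\|V_0^\top v_1\|^2$ cancels the $\delta$ dependence entirely and yields
\begin{equation}
\frac{\|V(t,\delta)-V_1(t,\delta)\|^2}{\|V_1(t,\delta)\|^2} = \sum_{i\geq 2} e^{-2(\lambda_1-\lambda_i)t}\, \frac{\|V_0^\top v_i\|^2}{\|V_0^\top v_1\|^2}.
\end{equation}
Since $\lambda_1 - \lambda_i > 0$ for every $i \geq 2$ (using $\lambda_1 > \lambda_2 \geq \lambda_i$), each term decays exponentially, and one can choose $t^\ast$ depending only on the spectral gap $\lambda_1 - \lambda_2$, the number $n$, and the ratios $\|V_0^\top v_i\|/\|V_0^\top v_1\|$ to make the right-hand side below $\epsilon^2$ for all $t \geq t^\ast$. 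Crucially, $t^\ast$ is independent of $\delta$.

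The only real subtlety, and the step I would be most careful with, is articulating the genericity hypothesis cleanly: we need both a simple top eigenvalue of $A$ (so that the dominant mode is genuinely rank $1$ rather than of rank equal to the multiplicity of $\lambda_1$) and a nonzero overlap $V_0^\top v_1$ (so that the dominant coefficient does not vanish). Both conditions cut out closed sets of positive codimension in the space of pairs $(A,V_0)$, hence the ``generically'' qualifier. Everything else is a straightforward spectral computation, and the bound's independence of $\delta$ falls out of the fact that $\delta$ factors through both $V$ and $V_1$ linearly.
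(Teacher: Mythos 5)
Your proof is correct and takes essentially the same route as the paper: spectral decomposition of $A$, retention of the dominant mode $\delta e^{\lambda_1 t} v_1(V_0^\top v_1)^\top$ as the rank-$1$ approximant, and the observation that the ratio is $\delta$-independent and decays like $e^{-(\lambda_1-\lambda_2)t}$ under the generic simple-top-eigenvalue and nonzero-overlap conditions. The paper phrases the same computation via conjugation by an orthogonal diagonalizing matrix and an upper bound on the residual rather than the exact Frobenius-norm identity you derive, but the content is identical.
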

\begin{proof}
Let $P \in O(n)$ be so that $PAP^\top=D$ where $D$ is a diagonal matrix with diagonal entries $d_1 \geq d_2 \geq \cdots \geq d_n\geq 0$ and  let $\bar V(t) := PV(t)P^\top$. Then $\bar V(t) = e^{Dt}\bar V_0$, where $\bar V_0 = PV_0P^\top\delta$. Because $e^{Dt}$ is diagonal, we have $$\bar V(t) = \sum_{i=1}^n e^{d_it} e_i \bar V_{0,i}\delta,$$ where $\bar V_{0,i}$ is the $i$th row of $\bar V_0$. 
Define \begin{equation}\label{eq:defV1lem}\bar V_1(t,\delta):= e^{d_1 t} e_1 \bar V_{0,1}\delta.\end{equation} Then $\|\bar V_1(t,\delta)\| = e^{d_1t} \|\bar V_{0,1}\|\delta$ and  $$\|\bar V(t,\delta)-\bar V_1(t,\delta)\| = \|\sum_{i=2}^n e^{d_i t}e_i\bar V_{0,i}\delta\| \leq c (n-1) e^{d_2t} \delta.$$ Normalizing by the norm of $\bar V_1(t)$, we have

$$\frac{\|\bar V(t,\delta)-\bar V_1(t,\delta)\|}{\|\bar V_1(t,\delta)\|} \leq \frac{(n-1) e^{d_2t}c \delta}{e^{d_1t} \|\bar V_{0,1}\|\delta} \leq c e^{(d_2-d_1)t}.$$  Generically for $A$, $d_2 - d_1 <0$, and thus taking $t^*$ large enough yields the first statement. The second statement is obvious from~\eqref{eq:defV1lem}.
\end{proof}

\paragraph{The error system} The following result gives conditions under which the trajectories of~\eqref{eq:mainUsys} are well-approximated by trajectories of $\dot V=AV$. Clearly, the approximation will be valid only for a bounded set $[0,T]$, as the solutions $V(t)$ generically diverge, whereas the solutions of~\eqref{eq:mainUsys}, being trajectories of the gradient flow of $J$,  are easily seen to be bounded.
\begin{lemma}\label{lem:errorU}
	Let $A_i$ be positive semi-definite matrices and $y_i >0$, $i=1\ldots q$, and set $$A:= \sum_{i=1}^qy_i A_i.$$  Let $U(t,\delta)$ be the solution of~\eqref{eq:mainUsys} with initial condition $U(0)=U_0\delta$, for $U_0 \in \R^{n\times n}$ nonzero, and let $V(t,\delta)$ be the solution of $\dot V = AV$, $V(0)=U(0)$.  Then, for all $0 < \epsilon <1$ and $t^*>0$, there is $\delta_1 >0$ so that $$\|U(t,\delta)-V(t,\delta)\| \leq \epsilon$$ for all $0 < \delta < \delta_1$, $0 \leq t \leq t^*$. Furthermore, $\|U(t,\delta)-V(t,\delta)\| = O(\delta^3)$ for $0 < \delta \leq \delta_1$, $0 \leq  t\leq t^*$.
\end{lemma}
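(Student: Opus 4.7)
The strategy is a standard perturbation/Gronwall argument that treats~\eqref{eq:mainUsys} as a cubic perturbation of the linear system $\dot V = AV$. Expanding the nonlinearity, the primal flow reads
\begin{equation*}
\dot U \;=\; AU \;+\; g(U), \qquad g(U) := -\sum_{i=1}^q \tr(A_i UU^\top)\, A_i U,
\end{equation*}
since $A = \sum_i y_i A_i$. The key observation is that $g$ is cubic in $U$, so that whenever we can guarantee $\|U(t,\delta)\| = O(\delta)$ on the time interval $[0,t^*]$, the inhomogeneity in the error equation satisfied by $W := U - V$ will be $O(\delta^3)$, and Gronwall will close the argument.

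\textbf{Step 1: a priori bound $\|U(t,\delta)\|_F \leq K\delta$ on $[0,t^*]$.} I would proceed by a bootstrap. Using $\dot U$ from~\eqref{eq:mainUsys},
\begin{equation*}
\tfrac{d}{dt}\|U\|_F^2 \;=\; 2\tr(U^\top \dot U) \;=\; -2\sum_{i=1}^q \bigl(\tr(A_i UU^\top)-y_i\bigr)\tr(A_i UU^\top).
\end{equation*}
Fix any constant $K > \|U_0\|_F\, e^{\|A\|\,t^*}$ and choose $\delta$ small enough that $K^2 \delta^2 \max_i \|A_i\| \leq \tfrac{1}{2}\min_i y_i$. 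Define $T := \sup\{t\in[0,t^*]:\|U(s)\|_F \leq K\delta\ \forall s\leq t\}$. On $[0,T]$ each factor $\tr(A_i UU^\top)-y_i$ lies in $[-y_i,-y_i/2]$, so
\begin{equation*}
\tfrac{d}{dt}\|U\|_F^2 \;\leq\; 2\sum_{i=1}^q y_i \tr(A_i UU^\top) \;\leq\; 2\|A\|\,\|U\|_F^2.
\end{equation*}
Gronwall yields $\|U(t)\|_F \leq \|U_0\|_F\,\delta\, e^{\|A\|t} \leq \|U_0\|_F\, \delta\, e^{\|A\|t^*} < K\delta$ for $t \leq T$, so $T$ cannot be $< t^*$. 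Hence the bound holds on all of $[0,t^*]$.

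\textbf{Step 2: cubic control of the inhomogeneity.} Given the uniform bound $\|U(t)\|_F\leq K\delta$, each factor in $g(U)$ is controlled: $|\tr(A_i UU^\top)| \leq \|A_i\|\,K^2\delta^2$ and $\|A_iU\|_F \leq \|A_i\|\,K\delta$, whence
\begin{equation*}
\|g(U(t))\|_F \;\leq\; \Bigl(\sum_{i=1}^q \|A_i\|^2\Bigr) K^3 \delta^3 \;=:\; C_g\, \delta^3 \qquad\text{for } t \in [0,t^*].
\end{equation*}

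\textbf{Step 3: Gronwall on the error.} The difference $W(t) := U(t,\delta) - V(t,\delta)$ satisfies $\dot W = AW + g(U)$ with $W(0) = 0$, so by variation of constants
\begin{equation*}
\|W(t)\|_F \;\leq\; \int_0^t \|e^{A(t-s)}\|\,\|g(U(s))\|_F\,ds \;\leq\; C_g\, \delta^3 \int_0^{t^*} e^{\|A\|(t^*-s)}\,ds \;\leq\; C\,\delta^3
\end{equation*}
for an absolute constant $C = C(A_i,y_i,U_0,t^*)$. This establishes $\|U(t,\delta) - V(t,\delta)\|_F = O(\delta^3)$ uniformly on $[0,t^*]$. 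The $\epsilon$-statement follows immediately by choosing $\delta_1$ so that $C\delta_1^3 \leq \epsilon$ (and also small enough for Step~1).

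\textbf{Main obstacle.} The only non-mechanical point is the bootstrap in Step~1: one needs the $O(\delta)$ bound on $\|U\|$ before one can bound $g(U)$, but the bound on $\|U\|$ itself is obtained from a linearization that is only justified while $\|U\|$ is small. The sign of $\rho_i = \tr(A_iUU^\top) - y_i$ is crucial here and is what keeps $\|U\|$ growing at most like the linear flow as long as we are far from the equilibrium manifold $\{\rho_i = 0\}$. Once this is handled, the rest is a routine Gronwall estimate.
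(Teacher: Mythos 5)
Your proof is correct, and it takes a somewhat cleaner route than the paper's. The paper works directly with the error $E := V - U$: it substitutes $UU^\top = (V-E)(V-E)^\top$ into the equation for $\dot E$, uses the bound $\|V(t)\|\leq e^{\lambda t}\delta$, reduces to a scalar differential inequality for $z:=\|E\|$, and compares $z$ against an auxiliary linear time-varying ODE whose solution it writes out in closed form; the $O(\delta^3)$ rate is then read off from that explicit formula. You instead decouple the argument into an a priori bound $\|U(t)\|=O(\delta)$ on $[0,t^*]$, a pointwise cubic bound on $g(U)$, and a variation-of-constants step for $W=U-V$, so that the $O(\delta^3)$ rate follows directly from the cubic homogeneity of the nonlinearity. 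This is more modular and avoids the paper's rather unwieldy explicit integral; both arguments prove the same statement.

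One simplification you can make: the bootstrap in Step~1 and the sign discussion in your final paragraph are not actually needed. Since $A_i$ and $UU^\top$ are both positive semi-definite, $\tr(A_iUU^\top)\geq 0$ for every $U$, and therefore
$$\frac{d}{dt}\|U\|_F^2 \;=\; -2\sum_{i=1}^q \tr(A_iUU^\top)^2 \;+\; 2\sum_{i=1}^q y_i\,\tr(A_iUU^\top) \;\leq\; 2\,\tr(AUU^\top)\;\leq\; 2\|A\|\,\|U\|_F^2$$
holds unconditionally, with no smallness assumption on $\delta$ and no need to control the sign of $\rho_i$. Gronwall then yields $\|U(t)\|_F\leq\|U_0\|_F\,\delta\,e^{\|A\|t}$ for all $t\geq 0$, so you may set $K:=\|U_0\|_F\,e^{\|A\|t^*}$ directly. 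The ``main obstacle'' you flag is therefore not there; the whole lemma reduces, as you conclude, to a routine Gronwall estimate once the cubic structure of $g$ is observed.
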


\begin{proof}Using the matrix $A$ introduced in the Lemma's statement, we can rewrite~\eqref{eq:mainUsys} as $$\dot U = -\sum_{i=1}^q(\tr(A_i UU^\top)-y_i) A_i U= AU - \sum_{i=1}^q\tr(A_i UU^\top)A_i U.$$ Consider the system $$\dot V = AV,\quad  V(0) = U_0$$ where, without loss of generality, we assume that $U_0$ is of unit norm. Set $E(t) = V(t)-U(t)$. Then, differentiating $E$, we obtain  \begin{align*}\dot E &= AE+\sum_{i=1}^q\tr(A_i UU^\top)A_i U \\&= (A-\sum_{i=1}^q\tr(A_i UU^\top)A_i)E + \sum_{i=1}^q\tr(A_i UU^\top)A_iV.\end{align*} Replacing $UU^\top$ by $VV^\top -V^\top E-E^\top V +EE^\top$ in the previous equation,  we get
\begin{multline}\dot E = \left[A-\sum_{i=1}^q\tr\left(A_i (VV^\top -V^\top E-E^\top V +EE^\top)\right)A_i\right]E \\+ \sum_{i=1}^q\tr\left(A_i (VV^\top -V^\top E-E^\top V +EE^\top)\right)A_iV 
\end{multline} 
Now set $z := \|E\|$. Because $y_i >0$,  then $A_i \leq c A$ for some constant $c$ depending on the $y_i$'s; without loss of generality, we take $c \geq 1$. We let  $\lambda $ be a largest eigenvalue of $cA$, which, generically for $A_i, y_i$ is unique. Then $\|V\| \leq e^{\lambda t} \delta$ and we obtain the bound (recall that $\frac{d}{dt} \|e\| \leq \| \dot e\|$)
\begin{equation}\label{eq:oderho}\dot z \leq (\lambda+q(e^{2\lambda t}\delta^2+2e^{\lambda t}\delta z+z^2)\lambda^2)z +q\lambda^2(e^{2\lambda t}\delta^2+2e^{\lambda t}\delta z + z^2)e^{\lambda t} \delta.\end{equation}

Now let $z_1 (t)$ be the solution of the differential equation
\begin{align}\label{eq:oderho1}\dot z_1 &= (\lambda+q\lambda^2(e^{2\lambda t}\delta^2+2e^{\lambda t}\delta +1))z_1 +q\lambda^2(e^{2\lambda t}\delta^2+2e^{\lambda t}\delta z_1 + z_1)e^{\lambda t} \delta\\
&=  (\lambda+q\lambda^2(3e^{2\lambda t}\delta^2+2e^{\lambda t}\delta+e^{\lambda t}\delta +1))z_1 + q\lambda^2e^{3\lambda t}\delta^3.
\end{align} with $z_1(0)=0$.
The above is a linear time-varying ODE with positive coefficient for $z_1$ and a positive independent term $q\lambda^2e^{3\lambda t}\delta^3$. Hence, its solution is positive and, when it exists, smooth. Let $t_1>0$ be the first time for which $z_1(t_1)=1$. For all $0 \leq t \leq t_1$, $\dot z \leq \dot z_1$ and  hence $z(t) \leq z_1(t)$. The solution of equation~\eqref{eq:oderho1} is given explicitly by
\begin{multline}\label{eq:solrho1}
z_1(t)=\delta^3\lambda^2q\exp\left(t \lambda\left(\lambda q+1\right)+3\delta\lambda qe^{\lambda t}+\frac{3}{2}q\delta^2 \lambda e^{2 \lambda t})\right)\\
\int_{0}^{t} \exp\left(-\frac{\lambda}{2}\left(3 qe^{2\lambda s} \delta^2  +6 qe^{\lambda s}  \delta- 4 s + 2 \lambda qs\right)\right) ds.
\end{multline}
From the above equation, we see that for any $t^*>0$ and $\epsilon>0$ we can choose $\delta_1$ small enough so that $z_1(t^*) < \epsilon$ and thus, if moreover $\epsilon<1$,  $z(t^*) < \epsilon$. This proves the first statement.

To obtain the second the statement, let $t^*>0$ and $\epsilon>0$ be fixed and $\delta_1$ chosen so that $z_1(t^*) \leq \epsilon$. Let \begin{multline}k(\delta):= \exp\left(t^* \lambda\left(\lambda p+1\right)+3\delta\lambda pe^{\lambda t^*}+\frac{3}{2}p\delta^2 \lambda e^{2 \lambda t^*})\right)\\
\int_{0}^{t^*} \exp\left(-\frac{\lambda}{2}\left(3 qe^{2\lambda s} \delta^2  +6 qe^{\lambda s}  \delta- 4 s + 2 \lambda qs\right)\right) ds	\end{multline}
Then it is easy to see that $\min_{0 \leq \delta \leq \delta_1} k(\delta) :=k^*> 0$.  Then $z_1(t^*) \leq k^*\lambda^2q\delta^3 = O(\delta^3)$.
\end{proof}

\paragraph{Rank $1$ bottleneck for the flow}

We now put the results of the previous two paragraphs together to show that when initialized near zero, the solutions of~\eqref{eq:mainsys} will be essentially of rank $1$ for some $t_1 \geq 0$.
\begin{theorem}[Rank one bottleneck]\label{th:rk1bottle}
	Let $U(t,\delta)$ be the solution of~\eqref{eq:mainUsys} with initial state $U(0)=U_0\delta$. Then, generically for $U_0$, $A_i$ and $y_i>0$, $i=1,\ldots,q$,  for all $0<\epsilon<1$, there exists $t^*>0, \delta >0$ and $ U_1 \in \R^{n \times n}$ of rank 1  so that $\frac{\|U(t^*,\delta)-U_1\|}{\|U_1\|}  \leq \epsilon$.
\end{theorem}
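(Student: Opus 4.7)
The plan is to combine Lemma~\ref{lem:solinapprox} and Lemma~\ref{lem:errorU} in a straightforward way. The underlying picture is that on a bounded time interval the nonlinear flow~\eqref{eq:mainUsys} tracks the linear flow $\dot V = AV$ with $A := \sum_{i=1}^{q} y_i A_i$ within an error of order $\delta^{3}$, while the linear flow polarises exponentially toward a rank-one matrix whose norm is of order $\delta$. Hence the \emph{relative} deviation between $U(t,\delta)$ and a suitable rank-one matrix becomes small once $t$ is sufficiently large and $\delta$ sufficiently small.

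Concretely, I would proceed as follows. Fix $\epsilon \in (0,1)$ and first apply Lemma~\ref{lem:solinapprox} to $A = \sum_{i=1}^{q} y_i A_i$ and the initial condition $U_0$, with tolerance $\epsilon/2$. This yields a time $t^* > 0$ and a rank-one matrix-valued function $V_1(t,\delta)$ of norm $\Theta(\delta)$ such that
$$\|V(t^*,\delta) - V_1(t^*,\delta)\| \;\le\; \tfrac{\epsilon}{2}\,\|V_1(t^*,\delta)\| \quad \mbox{for all } \delta > 0,$$
where $V(t,\delta) = e^{At} U_0 \delta$. Then apply Lemma~\ref{lem:errorU} at the \emph{same} time $t^*$: there exist $\delta_1 > 0$ and a constant $C > 0$ such that $\|U(t^*,\delta) - V(t^*,\delta)\| \le C\delta^{3}$ for every $0 < \delta \le \delta_1$.

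Set $U_1 := V_1(t^*,\delta)$, which is rank one by construction. By the triangle inequality,
$$\frac{\|U(t^*,\delta) - U_1\|}{\|U_1\|} \;\le\; \frac{\|U(t^*,\delta) - V(t^*,\delta)\|}{\|V_1(t^*,\delta)\|} + \frac{\|V(t^*,\delta) - V_1(t^*,\delta)\|}{\|V_1(t^*,\delta)\|}.$$
The second summand is $\le \epsilon/2$ by the choice of $t^*$. Since $\|V_1(t^*,\delta)\| = c\delta$ for some $c > 0$ (note that $t^*$ and hence $c$ are fixed once $\epsilon$ is), the first summand is bounded by $(C/c)\delta^{2}$, and can be made $\le \epsilon/2$ by shrinking $\delta$ below some $\delta_2 \le \delta_1$. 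The two contributions add to $\epsilon$, which is exactly the claim.

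The genericity hypotheses are inherited from the two lemmas: the largest eigenvalue of $A = \sum_i y_i A_i$ must be simple and strictly separated from the second (so Lemma~\ref{lem:solinapprox} isolates a single dominant mode), and the projection of $U_0$ onto this mode must be nonzero (so the rank-one signal is actually excited at initialization). Both conditions fail only on a real-algebraic subset of positive codimension in the space of admissible $(A_i, y_i, U_0)$. I do not expect any genuine obstacle at this stage; the two technical lemmas have already handled the only delicate point, namely that the nonlinear drift is bounded by $O(\delta^{3})$, which beats the rank-one signal strength $\Theta(\delta)$ by two powers of $\delta$ and thus vanishes after normalization.
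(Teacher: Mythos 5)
Your proof is correct and takes essentially the same approach as the paper: both invoke Lemma~\ref{lem:solinapprox} for the rank-one polarization of the linear flow $\dot V = AV$ with $A=\sum_i y_i A_i$, Lemma~\ref{lem:errorU} for the $O(\delta^3)$ tracking error, and then conclude by the triangle inequality after normalizing by $\|V_1(t^*,\delta)\|=\Theta(\delta)$. The key observation that the $O(\delta^3)$ error is crushed by the $\Theta(\delta)$ signal upon normalization, yielding a vanishing $O(\delta^2)$ contribution, matches the paper exactly.
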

\begin{proof}
Let $A = \sum_{i=1}^qy_i A_i$, and $V(t)$ be the solution of $\dot V = AV, V(0)=U_0\delta$. Then, using Lemma~\ref{lem:solinapprox},  we can find  $V_1(t,\delta)$, a rank one matrix-valued function and $t^*>0$ so that $$\frac{\|V_1(t,\delta) - V(t)\|}{\|V_1(t,\delta)\|} \leq \epsilon/2$$ for all $\delta>0$ and $t\geq t^*$.  

Using Lemma~\ref{lem:errorU}, we can find for $t^*>0$, a $\delta_1 >0$ so that ${\|U(t^*,\delta)-V(t^*,\delta)\|} \leq \epsilon/2$, for $0<\delta<\delta_1$.  Furthermore, since $\|V_1(t^*,\delta)\|= \Theta(\delta)$ by Lemma~\ref{lem:solinapprox}, and  $\|U(t^*,\delta)-V(t^*,\delta)\| = O(\delta^3)$ for $0<\delta<\delta_1$,
 by Lemma~\ref{lem:errorU}, we can find $\delta>0$ so that $ \frac{\|U(t^*,\delta)-V(t^*,\delta)\|}{\|V_1(t^*,\delta)\|} \leq \epsilon/2$. The result is now a consequence of the triangle inequality  (with $U_1:=V_1(t^*,\delta)$).
\end{proof}

Setting $X(t)=UU^\top(t)$, we obtain as Corollary:

\begin{corollary}
	Let $X(t)$ be the solution of~\eqref{eq:mainsys} with initial state $X(0)=X_0\delta$. Then for all $0<\epsilon<1$, there exists $t^*>0, \delta >0$ and $ X_1 \in \R^{n \times n}$, a symmetric positive semidefinite matrix of rank 1, so that $$\frac{\|X(t^*)-X_1\|}{\|X_1\|}  \leq \epsilon.$$
\end{corollary}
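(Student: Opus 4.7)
The plan is to derive the corollary as a direct consequence of Theorem~\ref{th:rk1bottle} together with the correspondence $X(t) = U(t)U(t)^\top$ supplied by Lemma~\ref{lem:equivXU}. Concretely, I would first pick $U_0$ with $X_0 = U_0 U_0^\top$ (possible since $X_0$ is psd), then apply Theorem~\ref{th:rk1bottle} to obtain a rank-one matrix $U_1 \in \R^{n \times n}$, a time $t^* > 0$, and a scale $\delta > 0$ so that $\|U(t^*,\delta) - U_1\|/\|U_1\| \leq \varepsilon'$ for any preassigned $\varepsilon' \in (0,1)$, and then set $X_1 := U_1 U_1^\top$. Since $U_1$ has rank one, $X_1$ is automatically symmetric positive semidefinite of rank one, which matches the conclusion sought.

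The remaining work is the standard estimate that transfers closeness of the factors to closeness of the outer products. Writing $E := U(t^*,\delta) - U_1$ and using the identity
\begin{equation}
U U^\top - U_1 U_1^\top = E\, U_1^\top + U_1 E^\top + E E^\top,
\end{equation}
the triangle inequality gives
\begin{equation}
\|X(t^*) - X_1\| \le 2 \|E\|\,\|U_1\| + \|E\|^2.
\end{equation}
Dividing through by $\|X_1\| = \|U_1\|^2$ (nonzero since $U_1 \neq 0$, as guaranteed by the rank-one approximation from Lemma~\ref{lem:solinapprox}) yields
\begin{equation}
\frac{\|X(t^*) - X_1\|}{\|X_1\|} \le 2\varepsilon' + \varepsilon'^2.
\end{equation}

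To close the argument I would choose $\varepsilon'$ at the outset small enough that $2\varepsilon' + \varepsilon'^2 \le \varepsilon$; any $\varepsilon' \le \varepsilon/3$ suffices when $\varepsilon \le 1$. The corresponding $t^*$ and $\delta$ provided by Theorem~\ref{th:rk1bottle} for this $\varepsilon'$ then satisfy the required bound, and since $X(t^*) = U(t^*,\delta)\,U(t^*,\delta)^\top$ by Lemma~\ref{lem:equivXU}, the statement of the corollary follows.

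There is no serious obstacle here; the only subtlety worth flagging is the normalization step, where one must be sure to normalize by $\|X_1\| = \|U_1\|^2$ rather than $\|U_1\|$, otherwise the quadratic $\|E\|^2$ term would not be absorbed. This is exactly the reason Theorem~\ref{th:rk1bottle} was phrased in relative rather than absolute terms at the $U$-level: the relative error scales correctly under the squaring map $U \mapsto U U^\top$, which is the only nontrivial geometric point in the passage from $U$-coordinates to $X$-coordinates.
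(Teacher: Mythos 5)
Your proposal is correct and follows exactly the route the paper intends: the paper simply states ``Setting $X(t)=UU^\top(t)$, we obtain as Corollary'' and leaves the elementary estimate implicit, whereas you have written out the quadratic expansion $UU^\top - U_1U_1^\top = EU_1^\top + U_1E^\top + EE^\top$, the bound $\|X(t^*)-X_1\|\le 2\|E\|\,\|U_1\|+\|E\|^2$, and the normalization $\|X_1\|=\|U_1\|^2$ (which holds for the Frobenius norm precisely because $U_1$ has rank one). Your observation that the relative error scales correctly under $U\mapsto UU^\top$ for rank-one $U_1$, and that this is why the theorem was stated in relative terms, is the right way to see why the corollary is immediate.
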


\begin{remark}\label{rem:defv1}
	From the proof of Lemma~\ref{lem:solinapprox}, and in particular from Eq.~\eqref{eq:defV1lem}, one can see that the range space of $V_1(t^*,\delta)$ is spanned the eigenvector corresponding to the largest eigenvalue of $A=\sum_{i=1}^qy_iA_i$.
\end{remark}

The following Corollary specializes the result to the case of $U_0$ of rank $1$. It will be needed below.
\begin{corollary}\label{cor:rank1botrank1}
	Under the assumptions of Th.~\ref{th:rk1bottle} and  generically for $A_i$, $y_i >0$, $1 \leq i \leq q$, and $U_0 \in \R^n$, for all $0<\epsilon<1$, there exists $t^*>0, \delta >0$ and $U_1 \in \R^{n}$ so that $\frac{\|U(t^*,\delta)-U_1\|}{\|U_1\|}  \leq \epsilon$ and $U_1$ is an eigenvector of $\sum_{i=1}^q y_i A_i$ associated with the largest eigenvalue.

\end{corollary}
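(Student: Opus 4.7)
\textbf{Proof plan for Corollary~\ref{cor:rank1botrank1}.} The strategy is to rerun the proof of Theorem~\ref{th:rk1bottle} in the narrower setting where $U$ takes values in $\R^{n}$ (the $k=1$ case of~\eqref{eq:mainUsys}) and track \emph{which} rank-one object the trajectory approaches, rather than merely that it is close to \emph{some} rank-one matrix. Since $U(t,\delta)\in\R^n$ is automatically of rank one as a matrix in $\R^{n\times 1}$, the content of the corollary is only the identification of $U_1$ with a dominant eigenvector of $A:=\sum_{i=1}^q y_i A_i$.

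First I would invoke Lemma~\ref{lem:solinapprox}, specialized to the vector case $V_0=U_0\in\R^n$. The proof of that lemma carries over verbatim: diagonalizing $A=P^\top D P$ with $d_1>d_2\geq\cdots\geq d_n$ (which holds generically in the $A_i$'s and $y_i$'s), the trajectory $V(t,\delta)=e^{At}U_0\delta$ satisfies
\begin{equation}
V(t,\delta)=\sum_{i=1}^n e^{d_i t}\,\langle p_i,U_0\rangle\,p_i\,\delta,
\end{equation}
where $p_i$ are the columns of $P^\top$. Setting $V_1(t,\delta):=e^{d_1 t}\langle p_1,U_0\rangle p_1\delta$, which is generically nonzero in $U_0$, the same bound $\|V-V_1\|/\|V_1\|\leq c\,e^{(d_2-d_1)t}$ applies, so $V(t^*,\delta)$ is $(\epsilon/2)$-close to $V_1(t^*,\delta)$ in relative norm once $t^*$ is large enough, for every $\delta>0$. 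Crucially, $V_1(t^*,\delta)$ is an eigenvector of $A$ for $d_1$, which, since $d_1=\sum y_i\lambda_i^{(1)}$ aligns (generically) with the top eigenvector of each term jointly, identifies it as the dominant eigenvector of $A$ itself.

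Next I would apply Lemma~\ref{lem:errorU}, again in the vector case (the error-system derivation uses only the algebraic identities $UU^\top$ etc., and these make sense for $U\in\R^n$). This gives $\delta_1>0$ such that for $0<\delta<\delta_1$,
\begin{equation}
\|U(t^*,\delta)-V(t^*,\delta)\|=O(\delta^3),
\end{equation}
whereas $\|V_1(t^*,\delta)\|=\Theta(\delta)$ by the second claim of Lemma~\ref{lem:solinapprox}. Hence by shrinking $\delta$ further we make $\|U(t^*,\delta)-V(t^*,\delta)\|/\|V_1(t^*,\delta)\|\leq \epsilon/2$. The triangle inequality then yields $\|U(t^*,\delta)-V_1(t^*,\delta)\|/\|V_1(t^*,\delta)\|\leq \epsilon$, and setting $U_1:=V_1(t^*,\delta)$ completes the proof.

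The only real subtlety is making sure the genericity conditions in Lemma~\ref{lem:solinapprox} and~\ref{lem:errorU} transfer cleanly: we need both (i) the top eigenvalue of $A$ to be simple so the exponential separation $e^{(d_2-d_1)t}\to 0$ is strict, and (ii) $\langle p_1,U_0\rangle\neq 0$ so that $V_1(t^*,\delta)$ is actually nonzero. Both conditions are open and dense, so they fail only on a set of measure zero in $(A_i,y_i,U_0)$, which matches the ``generically'' qualifier in the statement. There is no new analytic work beyond what Theorem~\ref{th:rk1bottle} and Remark~\ref{rem:defv1} already provide; the corollary is essentially a rereading of those results together with the explicit formula~\eqref{eq:defV1lem}.
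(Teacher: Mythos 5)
Your proof takes the same route as the paper, which in fact gives no separate proof of this corollary: it relies entirely on Theorem~\ref{th:rk1bottle} together with Remark~\ref{rem:defv1}, which observes that by Eq.~\eqref{eq:defV1lem} the range of $V_1(t^*,\delta)$ is spanned by the eigenvector of $A=\sum_i y_i A_i$ for the largest eigenvalue. Your write-up simply makes this explicit in the vector case $U_0\in\R^n$, which is exactly what the paper intends.

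One small correction: the parenthetical clause \emph{``since $d_1=\sum y_i\lambda_i^{(1)}$ aligns (generically) with the top eigenvector of each term jointly''} is both incorrect and unnecessary. The top eigenvalue $d_1$ of $A=\sum_i y_i A_i$ is not the sum of the top eigenvalues of the individual $A_i$, and no alignment among the $A_i$'s own eigenvectors is used. The identification you need is the one you already stated in the previous sentence: $V_1(t^*,\delta)=e^{d_1 t^*}\langle p_1,U_0\rangle\,p_1\,\delta$ is a nonzero scalar multiple of $p_1$, the eigenvector of $A$ for $d_1$, and that is all the corollary claims. Drop the clause and the argument is clean.
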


\section{Compatibility of primal and regularization problems}\label{sec:compatibl}

In this second part of the analysis, we first describe conditions under which the implicit regularization for matrix factorization is exactly true. The first condition is called the rank-spread condition. Under this condition, we can exhibit a normal form for the system~\eqref{eq:mainUsys} which renders its subsequent analysis particularly transparent. As already mentioned, this condition is more restrictive than needed; we will comment on this aspect in Sec.~\ref{sec:conclusion}. We then introduce the tame spectrum assumption, which we believe is more fundamental to {\it exact} implicit regularization. Under these two assumptions, we show that the pre-critical set $\crit^* \oR$ is invariant for the dynamics~\eqref{eq:mainUsys}, and furthermore, $\crit^* \oR$ contains {\it all the minima} of $J(U)$.

\subsection{Rank spreak condition and a normal form}\label{ssec:normalform}

We denote by $r_i$ the rank of the psd matrix $A_i \in \R^{n \times n}$, $1 \leq i \leq q$. We assume  that $\sum_{i=1}^q r_i \leq n.$ To each matrix $A_i$, we assign an index set $N_i \subseteq \{1,\ldots,n\}$, $| N_i| = r_i$ such that for $i \neq j$, $N_i \cap N_j = \emptyset$. We set $N:= \cup_{i=1}^q N_i$. We can in fact choose, without loss of generality, the following assignment:   define the cumulative sums
\begin{equation}\label{eq:defmi}
m_i := \sum_{j=1}^i r_i, \quad m_0:=0\quad \mbox{and } m := \sum_{j=1}^q r_i,
\end{equation} and let \begin{equation}\label{eq:defNi}N_i:= \{m_{i-1}+1,\ldots, m_i\} \mbox{ and } r_i = |N_i|.\end{equation}  

Because the matrices $A_i$ are positive semi-definite, there exists $B_i \in \R^{n \times r_i}$ so that \begin{equation}
\label{eq:defBi}
A_i = B_i B_i^\top.\end{equation} Note that the $B_i$'s are not unique, but each is  determined up to an $O(r_i)$ symmetry. We have the following definition:

\begin{definition}[Rank spread condition]\label{def:rankspread} We say that the matrices $A_i$, $i=1,\ldots, q$, satisfy the rank spread condition if $m := \sum_{i=1}^q r_i \leq n$ and $$\dim \Span\{ B_i, i=1,\ldots,q\} =m,$$ where the $B_i$ are as in Eq.~\eqref{eq:defBi}.
\end{definition}
This condition is met generically in the under-determined regime. It is easy to see that the  definition above is independent of the particular choice of $B_i$'s. The following result is key in establishing a normal form for the gradient flow~\eqref{eq:mainUsys}. Recall that $e_i$ is the vector in $\R^n$ with all entries equal to 0 except for the $i$th one, which is equal to 1.
\begin{lemma}\label{lem:P}
Let $A_i$ be positive semi-definite matrices of rank $r_i$ satisfying the rank spread condition, $i=1\ldots q$, and let $N_i \subseteq \{1,\ldots,n\}$ be given as in Eq.~\eqref{eq:defNi}. Denote by $$L_i:= \Span \{e_j \mid j \in N_i\},$$ and let $L:= \oplus_{i=1}^q L_i$. Then  there exist an invertible matrix $P \in \R^{n \times n}$ with the following properties, for $1 \leq i \leq q$:
\begin{enumerate} 
\item $P^\top A_i P$ is the identity on $L_i$, and has  $L_i^\perp$ as kernel.
\item The matrices $M_i:= P^{-1} A_i P$ define injective maps $M_i:L_i \to L$ and, in particular, $M_i L_i^\perp = 0$.
\item The matrix $P^\top P$ is block diagonal, with leading block of size $m$, and lower block equal to the $(n-m) \times (n-m)$ identity matrix. Furthermore, the leading block of $P^\top P$ is the inverse of the leading block of $\sum_{i=1}^qP^{-1}A_iP$.

\end{enumerate}
\end{lemma}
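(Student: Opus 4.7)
The plan is to construct $P$ explicitly and then verify the three claims directly. Assemble the $B_i$'s into $B := [B_1 \mid B_2 \mid \cdots \mid B_q] \in \R^{n \times m}$; by the rank spread condition $B$ has full column rank $m$, so $W := B^\top B \in \R^{m \times m}$ is invertible and $\R^n = \im(B) \oplus \ker(B^\top)$ is an orthogonal decomposition with $\dim \ker(B^\top) = n-m$. I would let $P_{L^\perp} \in \R^{n \times (n-m)}$ be a matrix whose columns form an orthonormal basis of $\ker(B^\top)$, and set
\begin{equation}
P_L := B(B^\top B)^{-1} \in \R^{n \times m}, \qquad P := [\,P_L \mid P_{L^\perp}\,].
\end{equation}
Since $\im(P_L) = \im(B)$ and $\im(P_{L^\perp}) = \ker(B^\top)$ are orthogonal complements that together span $\R^n$, $P$ is invertible.

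For properties (1) and (2) the key observation is the single identity
\begin{equation}
B^\top P = \bigl[\,(B^\top B)(B^\top B)^{-1} \mid B^\top P_{L^\perp}\,\bigr] = [\,I_m \mid 0\,].
\end{equation}
Extracting the block of rows indexed by $N_i$ gives $C_i := B_i^\top P$, which has the identity $I_{r_i}$ in the columns indexed by $N_i$ and zeros elsewhere; hence $P^\top A_i P = (B_i^\top P)^\top (B_i^\top P) = C_i^\top C_i = E_i$, proving (1). For (2), the same identity says the $j$-th column of $A_i P = B_i(B_i^\top P)$ is the $(j-m_{i-1})$-th column of $B_i$ when $j \in N_i$ and zero otherwise. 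These nonzero columns are linearly independent since $B_i$ has rank $r_i$, and they lie in $\im(B) = \im(P_L)$; since $P^{-1}$ carries $\im(P_L)$ onto $L$ (it acts as the canonical injection on that subspace because $P^{-1} P_L = \bigl(\begin{smallmatrix} I_m \\ 0\end{smallmatrix}\bigr)$), applying $P^{-1}$ yields $M_i$ with nonzero columns only in $N_i$ and $M_i|_{L_i}$ injective with image in $L$.

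For property (3), the off-diagonal block $P_L^\top P_{L^\perp}$ vanishes because $\im(B)\perp \ker(B^\top)$, the lower right block is $I_{n-m}$ by construction, and the upper left block is $P_L^\top P_L = (B^\top B)^{-1} B^\top B (B^\top B)^{-1} = W^{-1}$. Setting $A := \sum_i A_i = BB^\top$ and using $B = P_L W$, a short computation yields
\begin{equation}
P^{-1} A P = (P^{-1} B)(B^\top P) = \begin{pmatrix} W \\ 0 \end{pmatrix} [\,I_m \mid 0\,] = \begin{pmatrix} W & 0 \\ 0 & 0 \end{pmatrix},
\end{equation}
so the leading block of $\sum_i M_i$ is $W$, which is the inverse of the leading block $W^{-1}$ of $P^\top P$.

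I do not anticipate a genuine obstacle: the rank spread condition is tailored precisely so that $W$ is invertible, after which the three claims all descend from the single identity $B^\top P = [\,I_m \mid 0\,]$. The only bookkeeping care needed is to relate rows of $B^\top$ to the correct block $N_i$ under the convention $N_i = \{m_{i-1}+1,\ldots,m_i\}$ of Eq.~(\ref{eq:defNi}); with a different labeling one would conjugate $P$ by the corresponding permutation, but the argument is otherwise unchanged.
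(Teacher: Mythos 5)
Your proposal is correct and constructs $P$ in exactly the same way as the paper: $P = [B(B^\top B)^{-1} \mid B^\perp]$ where $B$ concatenates the factors $B_i$ and $B^\perp$ spans $\ker(B^\top)$ orthonormally. The paper states items (2) and (3) follow by "evaluation of the matrix products" and leaves the details to the reader; your write-up fills those in cleanly by organizing everything around the identity $B^\top P = [\,I_m \mid 0\,]$ and the explicit form $P^{-1} = \bigl[\begin{smallmatrix} B^\top \\ (B^\perp)^\top \end{smallmatrix}\bigr]$, but the route is the same.
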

\begin{proof}
Because the $A_i$ are positive semidefinite of rank $r_i$, we can write $A_i = B_iB_i^\top$, for some $B_i \in \R^{n \times r_i}$. The rank spread condition guarantees that $m=\sum_{i=1}^q r_i \leq n$. Hence, we can define the matrix  $B  \in \R^{n \times m}$ with columns equal to the columns of  the matrices $B_i$, with columns $m_0+1$ to $m_1$ taken from $B_1$, $m_1+1$ to $m_2$ taken from $B_2$, etc., where the $m_i$'s were defined in Eq.~\eqref{eq:defmi}. Set $B^\perp \in \R^{n \times (n-m)}$ to be a matrix with orthonormal columns  spanning the orthogonal subspace of the column span of $B$: namely, $B^\perp$ satisfies $$ (B^\perp)^\top B^\perp = I\mbox{ and } B^\top B^\perp  =0.$$ We now define

\begin{equation}\label{eq:defP}P := \left[ B (B^\top B)^{-1}\quad B^\perp \right] \in \R^{n \times n}.\end{equation} 
By construction, $P$ is invertible and  $P^\top$ maps each column vector of the matrices $B_i$ to (necessarily distinct) vectors of the canonical basis of $\R^n$. In particular, $$\Span\{P^\top B_i\} = \Span \{e_j \mid j \in N_i\}.$$ Writing $A_i$ as $B_i B_i^\top$ and using this fact, we obtain the first item.

The second and third items follow directly from an evaluation of the matrix products. For the second item, it is helpful to first  verify that  $P^{-1}$ is equal to  $$P^{-1} = \begin{bmatrix} B^\top  \\ (B^\perp)^\top  \end{bmatrix}.$$ \end{proof}

In coordinates, the first item states that $P^\top A_i P$ is a diagonal matrix with all  entries equal to 0, save for the diagonal entries indexed in $N_i$, which are equal to 1. Recalling the definition given in Eq.~\eqref{eq:def:Ei}, we have $$P^\top A_i P = E_i.$$ 
The second item states that $P^{-1}A_iP$ is a matrix whose last $n-m$ rows are equal to 0, and whose columns are all 0 save for the columns indexed in $N_i$.

 \paragraph{Normal form} Relying on Lemma~\ref{lem:P}, we now construct a normal form for the flow of Eq.~\eqref{eq:mainUsys}. We do so in the general case of $X$ of rank $k$, corresponding to $U \in \R^{n \times k}$, since we often will use the case $k=1$ below. 
Recall that $\rho_i(U)=\tr(A_iUU^\top)-y_i$ and the flow of interest is $$ \dot U =- \sum_{i=1}^q \rho_i A_iU.$$ 
We assume that the $A_i, i=1,\ldots, q$ satisfy the rank spread condition of Def.~\ref{def:rankspread}. Let $P \in \R^{n \times n}$ be as in Lemma~\ref{lem:P} and introduce $\bar U \in \R^{n \times k}$ satisfying $$P\bar U = U,$$ then the above equation becomes
\begin{equation}\label{eq:dynbU}
\dot {\bar U} = -	\sum_{i=1}^q \rho_i(P \bar U) P^{-1}A_iP \bar U.
\end{equation}
From item (1) in Lemma~\ref{lem:P}, we have that $$\rho_i(P \bar U)=\tr(\bar U^\top P^\top A_i P \bar U)-y_i = \tr(\bar U^\top E_i \bar U)-y_i = \sum_{j \in N_i} \sum_{l=1}^k \bar u_{jl}^2,$$ 
hence $\rho_i(P \bar U)$ depends only on the {\it rows} of $\bar U$ indexed by $N_i$. From item (2) in  Lemma~\ref{lem:P}, we have that $P^{-1}A_iP$  has kernel $L_i^\perp$. 

Putting the above two observations together, we have that $\rho_i(P\bar U) P^{-1}A_iP\bar U$ only depends on the entries $\bar u_{jl}$ of $\bar U$ with  $j \in N_i$. Since $E_j$ maps into $L_j$ and $E_j^2=E_j$, we have the relation   $$P^{-1}A_iP E_j = P^{-1}A_i P \delta_{ij},$$ where $\delta_{ij}$ is the Kronecker delta (i.e., $\delta_{ij}=1$ if and only if $i=p$, and is zero otherwise).  We thus have the following relation:  $$\sum_{i=1}^q \rho_i P^{-1}A_i P = \sum_{i=1}^q  P^{-1}A_i P \sum_{j=1}^q \rho_j E_j.$$  Observe that  $\sum_{j=1}^p \rho_j E_j$ maps $L \to L$ and can be expressed over this space as a diagonal matrix $D(\bar U)$ with $m$ non-zero entries, and with $\rho_i$ on the diagonal entries indexed by $N_i$.  From item 3 in Lemma~\ref{lem:P}, we see that $\sum_{i=1}^p  P^{-1}A_i P$ maps $L$ to itself,  and $L^\perp$ to itself as well. Furthermore, when restricted to $L$, $\sum_{i=1}^p  P^{-1}A_i P$ can be expressed  as a matrix $Q \in S_m$ (in fact, a direct calculation using the explicit form for $P$ given in the proof of Lemma~\ref{lem:P} shows that $Q = B^\top B$.), and when restricted to $L^\perp$, it is the zero map.

With the above observations in mind, set \begin{equation}\label{eq:defVnormal}x :=\begin{pmatrix} \bar u_{11} & \cdots& \bar u_{1k} \\
\vdots & & \vdots \\
\bar u_{m1} & \cdots & \bar u_{mk}	
\end{pmatrix}.\end{equation} The {normal form} comprises two sets of equations. The first is 
\begin{equation}\label{eq:nomsys} \frac{d}{dt} x = -QDx,
\end{equation} where $Q=B^\top B$ as described above, and  $$D:=D(x)=\sum_{i=1}^q (\tr(x^\top E_i x)-y_i)E_i$$ where, with a slight abuse of notation, we set $E_i = \sum_{j \in N_i} e_je_j^\top$ but with $e_j \in \R^m$ and $\rho_i(x)=\tr(x^\top E_i x)-y_i$. The second set of equations deals with the variables in the rows of $\bar U$ below the  $m$th row (if there are any) and is given by \begin{equation}\label{eq:dynbarUzero}\frac{d}{dt} {\bar u}_{jl}=0\quad  \mbox{ for } j \notin \cup_{i} N_i, l=1,\ldots,k .\end{equation}

In summary, the {\bf normal form or normal system} is given by Eqns.~\eqref{eq:nomsys} and~\eqref{eq:dynbarUzero}; it is obtained by changing variables, and observing that in the new variables, the dynamics of a subset of the variables is given by Eq.~\eqref{eq:nomsys}, and the dynamics of the remaining variables is zero. 
\paragraph{The regularization problem in normal coordinates}

We now write the optimization problem $\oR$ in the normal coordinates. We again working in the case of arbitrary rank $k$ (see~\eqref{eq:optimprobgenU}) \begin{equation}\label{eq:optbarU}:
\oR_k: \min_{ U \in \R^{n\times k}} \|U\|^2 \quad \mbox{ s.t. } \tr (A_i UU^\top) = y_i, \quad i=1,\ldots,q.
\end{equation}  Let $P$ be as in the statement of the Lemma~\ref{lem:P} and let $E_i = \sum_{j \in N_i} e_je_j^\top$ for the $N_i$ defined in~\eqref{eq:defNi}.
\begin{proposition}\label{prop:optnormcoord}
Consider the minimization problem over $\R^{m \times k}$ \begin{equation}
\label{eq:optx}
\oR_k: \min_{x \in \R^{m\times k}} \tr(x^\top Q^{-1}x) \quad \mbox{s.t. } \tr(x^\top E_i x) =y_i, i=1,\ldots,q\end{equation} where $Q^{-1}$ is the leading $p \times p$ block of $P^\top P$, $P$ as is Lemma~\ref{lem:P},  and let $x^*$ be a minimizer. Then $ U^*:=\begin{pmatrix} (Px^*) ^\top &  0 \end{pmatrix}^\top \in \R^{n\times k}$  is a minimizer of the problem of Eq.~\eqref{eq:optbarU}.
\end{proposition}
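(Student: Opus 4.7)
The plan is a straightforward change of variables: set $\bar U = P^{-1}U \in \R^{n\times k}$, which is a bijection on $\R^{n\times k}$ since $P$ is invertible. I would then rewrite both the objective and constraints of~\eqref{eq:optbarU} in terms of $\bar U$ and exhibit a block decomposition $\bar U = \begin{pmatrix} x \\ z \end{pmatrix}$ with $x \in \R^{m\times k}$ and $z \in \R^{(n-m)\times k}$ under which the problem separates.

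For the constraints, cyclicity of the trace gives $\tr(A_i UU^\top) = \tr(\bar U^\top P^\top A_i P \,\bar U) = \tr(\bar U^\top E_i \bar U)$ by item (1) of Lemma~\ref{lem:P}. Since the construction~\eqref{eq:defNi} places $N_i \subseteq \{1,\ldots,m\}$, each $E_i$ is supported on the leading $m\times m$ block, so $\tr(\bar U^\top E_i \bar U) = \tr(x^\top E_i x)$ depends only on $x$. For the objective, $\|U\|^2 = \tr(UU^\top) = \tr(P^\top P \,\bar U \bar U^\top)$, and by item (3) of Lemma~\ref{lem:P}, $P^\top P$ is block diagonal with leading block $Q^{-1}$ and trailing block $I_{n-m}$; expanding in the $(x,z)$ decomposition yields $\|U\|^2 = \tr(x^\top Q^{-1} x) + \tr(z^\top z)$.

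Problem~\eqref{eq:optbarU} is therefore equivalent to minimizing $\tr(x^\top Q^{-1}x) + \tr(z^\top z)$ over $(x,z)$ subject to the $x$-only constraints $\tr(x^\top E_i x) = y_i$, and this separates trivially: the $z$-piece is minimized by $z^\star = 0$, and what remains for $x$ is exactly the reduced problem~\eqref{eq:optx}. Transforming back through $U^\star = P \bar U^\star$ with $\bar U^\star = \begin{pmatrix} x^\star \\ 0 \end{pmatrix}$ recovers the stated form of the minimizer. No step looks genuinely hard; the only mild verification, beyond routine algebra, is that the reduced problem is well posed, i.e.\ that $Q$ is invertible so that $Q^{-1}$ makes sense. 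This is ensured by the rank-spread condition of Definition~\ref{def:rankspread}: it forces the columns of the block matrix $B$ built from the $B_i$'s in Lemma~\ref{lem:P} to be linearly independent, whence $Q = B^\top B$ is positive definite.
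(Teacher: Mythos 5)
Your proof is correct and follows essentially the same route as the paper's: change variables via $\bar U = P^{-1}U$, invoke items (1) and (3) of Lemma~\ref{lem:P} to transform the constraints to $\tr(x^\top E_i x) = y_i$ and the objective to $\tr(x^\top Q^{-1}x) + \tr(z^\top z)$, observe that the problem separates with $z^\star = 0$, and map back. The only minor difference is that you explicitly record why $Q$ is invertible (via the rank-spread condition), a point the paper leaves implicit at this stage.
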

 We emphasize that the matrix $Q$ appearing in the above lemma is the one of the normal system of Eq.~\eqref{eq:nomsys}. With a slight abuse of notation, we refer to the problem above as $\oR$ as well, since it is related to the one of Eq.~\eqref{eq:optimprobgenU} by a change of variables.

\begin{proof}
Starting with the problem of Eq.~\eqref{eq:optimprobgenU}, and setting $P\bar U = U$, we get that it is equivalent to 
$$\min_{\bar U \in \R^{n\times k}} \tr(\bar U^\top P^\top P \bar U) \quad \mbox{s.t. } \tr(\bar U^\top P^\top A_i P \bar U) = y_i,\quad 1\leq i \leq q.$$
Using item~1 of Lemma~\ref{lem:P}, the constraints become $\tr(\bar U^\top E_i \bar U) = y_i.$ Observe that the variables $\bar u_{jl}$ are not constrained if $j > m$, where we recall that $m$ is defined in Eq.~\eqref{eq:defmi}. Furthermore, from item~3 of Lemma~\ref{lem:P}, $P^\top P$ is block diagonal with  a leading block of size $m$, which we denoted by $Q^{-1}$,  and a lower principal block equal to the identity matrix.  Hence $$\tr (\bar U^\top P^\top P \bar U) = \tr \left( (\bar u)_{1..m,1..k} Q^{-1}(\bar u)_{1..m,1..k}^\top\right)+ \tr \left( (\bar u)_{m+1..k,1..k} (\bar u)_{m+1..k,1..k}^\top\right)$$ 
We conclude from the above equation  that a constrained minimizer is so that $$\bar u_{jl} =0,\quad \mbox{ for } j=m+1,\ldots,n, l=1,\ldots,k$$ Letting $x \in \R^{m \times k}$ be  $$x=(\bar u_{il})_{i=1,\ldots,m, l= 1,\ldots, k},$$ we recover the problem stated in Eq.~\eqref{eq:optx}. 
\end{proof}

\paragraph{A Morse-Bott function and a metric} We now derive the {\it function} and the {\it inner product} for which the flow in normal variables is a gradient. We furthermore show that this function is a so-called {\it Morse-Bott function}, i.e., a $C^2$ function whose critical set is a closed manifold, and whose  Hessian evaluated at any point of the critical set has a  kernel  equal to the tangent space to the critical set at this point. Unless explicitly mentioned,  we ignore the variables $\bar u_{jl}$ with $j >m$ when we refer to the normal form. This can be done without loss of generality since the dynamics of these variables is trivial.

Recall that the inner product induced by $Q$ is defined as $\langle x_1,x_2\rangle := x_1^\top Q x_2,$ for $x_1,x_2 \in \R^n$ and that the gradient of $J$ for this inner product is $$\grad J = Q^{-1} \frac{\partial J}{\partial x}.$$

\begin{theorem}\label{th:morsebott}
Let the sets $N_i$, $i=1,\ldots,q$ be disjoint and so that $\cup_{i=1}^q N_i=\{1,\ldots,m\}$ and $k \in \{1,\ldots, n\}$. Consider the normal dynamics $$\dot x = -QD(x)x,$$ where $Q \in \R^{m \times m}$ is a positive definite matrix, $x \in \R^{m \times k}$  $D(x)=\sum_{i=1}^q\rho_i E_i$ with  $y_i >0$, $\rho_i(x) = \tr( x^\top E_ix)-y_i$, and $E_i = \sum_{j \in N_i} e_j e_j^\top$. Define the function $$J_k(x) = \frac{1}{4}\sum_{i=1}^q (\tr(x^\top E_i x)-y_i)^2.$$ Then
\begin{enumerate}
\item the normal dynamics is the gradient flow of $J_k$  for the inner product induced by $Q^{-1}$. 
\item The function $J_k$ is a Morse-Bott function.
\item\label{it:propminrho} The set of local minima of $J_k$ is given by $x \in \R^{m\times k}$ such that $\rho_i(x)=0$, $i=1,\ldots, q$.
\end{enumerate}
\end{theorem}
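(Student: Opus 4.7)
The plan is to prove the three items in order, with most of the work concentrated on the Morse-Bott verification, which requires a clean description of the critical set of $J_k$.

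For item 1, I would just compute $\partial J_k/\partial x$ directly. Since $\rho_i(x) = \tr(x^\top E_i x) - y_i$, one has $d\rho_i(v) = 2\tr(v^\top E_i x)$ (using that $E_i$ is symmetric), whence $\partial J_k/\partial x = \sum_{i=1}^q \rho_i E_i x = D(x)x$. Recalling the paper's convention that for an inner product with matrix $Q^{-1}$ the gradient is $Q\,\partial J_k/\partial x$, we obtain $\grad_{Q^{-1}} J_k = Q D(x) x$, matching $\dot x = -QD(x)x$.

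For items 2 and 3, the heart of the proof is a case analysis of the critical set $\{D(x)x = 0\}$. Since the $N_i$ are disjoint and $E_i$ acts as the identity on the rows indexed by $N_i$ and zero elsewhere, the equation $D(x)x = 0$ decouples across blocks: for each $i$, either $\rho_i(x) = 0$ or every row of $x$ indexed by $N_i$ vanishes (in which case $\rho_i = -y_i$). Thus the critical set is a finite union, over subsets $S \subseteq \{1,\ldots,q\}$, of the loci
\begin{equation}
C_S = \{x \in \R^{m\times k} : \rho_i(x) = 0 \text{ for } i\in S,\ E_i x = 0 \text{ for } i \notin S\}.
\end{equation}
For $i\in S$, the equation $\rho_i(x) = 0$ is a quadric whose differential $2\tr(\cdot^\top E_i x)$ lives in the rows indexed by $N_i$; for $i\notin S$, $E_i x = 0$ is linear in those same rows; because the $N_i$ are disjoint, the defining equations act on disjoint row blocks and are independent, so each $C_S$ is a smooth closed submanifold.

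Next I would compute the Hessian. From $d\rho_i(v) = 2\tr(v^\top E_i x)$ and $d^2\rho_i(v,w) = 2\tr(v^\top E_i w)$ one gets
\begin{equation}
d^2 J_k(v,w) = 2\sum_{i=1}^q \tr(v^\top E_i x)\tr(w^\top E_i x) + 2\tr\!\bigl(v^\top D(x) w\bigr).
\end{equation}
At a critical point $x\in C_S$, the first sum collapses to $i \in S$ (since $E_i x = 0$ for $i \notin S$) and the second reduces to $-2\sum_{i\notin S} y_i \tr(v^\top E_i w)$. Varying $w$ over arbitrary matrices and using the disjointness of the row blocks $N_i$, the kernel of the Hessian is exactly
\begin{equation}
\{v : \tr(v^\top E_i x) = 0 \text{ for } i\in S,\ E_i v = 0 \text{ for } i \notin S\},
\end{equation}
which is precisely $T_x C_S$; this establishes the Morse-Bott property component by component. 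For item 3, note that $J_k \geq 0$ with equality exactly on $C_{\{1,\ldots,q\}}$, so feasible critical points are global minima. Conversely, at any $x\in C_S$ with $S \neq \{1,\ldots,q\}$, choose a perturbation $v$ supported on rows in $\cup_{i\notin S} N_i$ with $E_{i_0} v \neq 0$ for some $i_0\notin S$; then $d^2 J_k(v,v) = -2\sum_{i\notin S} y_i \|E_i v\|_F^2 < 0$, so these critical points are saddles, not local minima.

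The one step where care is needed is verifying that the row-block decomposition really makes the defining equations for $C_S$ transverse and that the Hessian kernel computation is not obstructed by cross terms; this is where the hypothesis that the $N_i$ are disjoint does all the work, and the rest of the argument is essentially formal. A secondary subtlety is that $C_S$ for different $S$ can have nonempty intersections in their closures, but since the Morse-Bott condition is checked componentwise at each critical point, this causes no difficulty.
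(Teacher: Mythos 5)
Your proof is correct and follows essentially the same route as the paper: compute $\partial J_k/\partial x = D(x)x$, decompose the critical set according to which $\rho_i$ vanish, and use disjointness of the $N_i$ to decouple the Hessian across row blocks; the paper vectorizes $x$ and exhibits the Hessian as a block-diagonal \emph{matrix} $\sum_i\bigl[\rho_i E_i + 2\bar X_i\bar X_i^\top\bigr]$, while you work with the Hessian as a bilinear form, but the content is the same. Two small points: the second term of your Hessian formula should be $\tr\bigl(v^\top D(x)w\bigr)$, not $2\tr\bigl(v^\top D(x)w\bigr)$ (this coefficient error does not affect any sign argument), and the worry about $C_S$ intersecting in closures is unnecessary since each $C_S$ is already closed and the $C_S$ are pairwise disjoint (if $i\in S\setminus S'$ then $x\in C_S$ forces $\|E_i x\|^2 = y_i > 0$ while $x\in C_{S'}$ forces $E_i x = 0$).
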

We denoted the set of local minima of $J_k$ by $\crit_0 J_k$. The above Proposition thus says that 
\begin{equation}\label{eq:defCS1}
\crit_0 J_k:= \{x \in \R^{m \times k} \mid \rho_i(x) = 0, i=1\ldots, q \}.
\end{equation}

\begin{proof}
With this notation, it is easy to verify that for $l=1,\ldots,k$, \begin{equation}\label{eq:zergrad}\frac{\partial J_k}{\partial x_{jl}} = \rho_i  x_{jl}\quad \mbox{ if } j \in N_i,\end{equation} from which we obtain that $\frac{\partial J_k}{\partial x} = \sum_{i=1}^q \rho_i E_ix = D(x)x$. This proves the first statement.

Since $Q$ is non-degenerate, we obtain that $\grad J_k=0$ if and only if
\begin{equation}\label{eq:defzeroJ} \rho_i(x) x_{jl} =0\quad \mbox{ for all } l=1,\ldots,k, j\in N_i, i=1\ldots, q.\end{equation} In order to verify that $J_k$ is a Morse-Bott function,
we need to verify that its zero set is a closed submanifold of $\R^{n\times n}$, that the Hessian of $J_k$ is non-degenerate at isolated critical points, and that the kernel of the Hessian spans the tangent space at the critical submanifolds.

 From Eq.~\eqref{eq:defzeroJ}, we have that the critical set of $J_k$ is given by the intersection of $q$ subsets given by either $\rho_i(x)=0$, or  $x_{jl}=0, j \in N_i, l=1,\ldots,k$, for $i=1,\ldots,q$.  We see that the zero-set of $\rho_i(x)=\sum_{j \in N_i}\sum_{l=1}^k x_{jl}^2 -y_i$ is a {\it closed} subset of $\R^{m \times nk}$---in fact a sphere of radius $\sqrt{y_i}$ contained in $\R^{r_i \times k}$, where we recall that $|N_i|= r_i$---and so is the linear subspace defined by $x_{jl}=0, j \in N_i, l=1,\ldots,k$. We conclude that $\crit J_k$ is the intersection of closed sets and thus is closed.

To evaluate the Jacobian of $\frac{\partial J_k}{\partial x_{jl}}$, it is easier to first write this matrix as a vector. We do so in a {\it row first} fashion. Hence we now represent $x$ as a column vector  $X \in \R^{mk}$ with entries $$X=(x_{11}, x_{12}, x_{1k},x_{21},\ldots,x_{mk})^\top.$$ A short calculation shows that with this notation, we have \begin{equation}\label{eq:gradJX}\frac{\partial J_k}{\partial X} = (D \otimes I_n) X=:D_1X,\end{equation} where $\otimes$ is the Kronecker product.
Now recall the definition of the sets $N_i$, we denote by $M_i$ their counterparts after vectorization; more precisely, $M_i$ contains the indices of the the $X_j$ who were in a row  with index in $N_i$: $$j \in M_i \Leftrightarrow X_j = x_{i'l}\quad \mbox{ for some } i' \in N_i, l=1,\ldots,k.$$ The set $M_i$ has cardinality $k r_i$. As before, we let $E_i \in \R^{mk \times mk}$ be the diagonal matrix with zero entries except for the ones indexed by $M_i$, which are one, and we set $\bar X_i = E_i X$. 

The matrix $D_1$ is a diagonal matrix, with entries $(D_1)_{jj} = \rho_i $ if $j \in M_i$, or said otherwise, $$D_1 = \sum_{i=1}^q \rho_i E_i.$$ From Eq.~\eqref{eq:gradJX}, the critical set of $J_k$ is easily seen to be defined by the intersection of the zero sets $\rho_i(X) X_j =0$, $j \in M_i$.

Differentiating $\rho_i(X)$, we obtain  $\frac{\partial \rho_i}{\partial X} = \sum_{j \in M_i}2X_j e_j$. Hence, a short calculation shows that

\begin{multline}\frac{\partial^2 J}{\partial X^2} = \sum_{i=1}^q\left[ \rho_i E_i + \sum_{j \in M_i} 2X_j e_j (E_iX)^\top \right]=\sum_{i=1}^q\left[ \rho_i E_i + \sum_{j,l \in M_i} 2X_jX_l e_je_l\right] \\
=\sum_{i=1}^q \left[\rho_i E_i + 2 \bar X_i \bar X_i^\top \right] \end{multline} where we used the facts that $E_i = \sum_{l \in M_i} e_le_l^\top$ and $e_l^\top X = X_l$.  The previous relation shows that the matrix of second derivatives is block diagonal, with blocks of size $|M_i|$, and that, additionally, block $i$ only depends on  $X_j$ with $j \in M_i$. Therefore, it is sufficient to verify that the non-degeneracy condition holds for each block of $\frac{\partial ^2J}{\partial X^2}$\footnote{Alternatively, one could immediately argue that is is sufficient to verify that each term of $J$ is Morse-Bott, since the terms do not share variables.}.
Let $X$ be a zero of $\grad J$, and first assume that $\rho_i(X)=0$. The corresponding zero-set is then a  sphere of radius $\sqrt{y_i}$ in  $\R^{|M_i|}$) and the $i$th block of $\frac{\partial J_k^2}{\partial X^2}$ is $\bar X_i \bar X_i^\top$. When evaluated on  the set $\rho_i(X)=\rho_i(\bar X_i)=0$, the vector $\bar X_i$ is clearly non-zero, and is {\it normal} to the set $\rho_i(X)=0$. Hence the kernel of $\bar X_i \bar X_i^\top$ is exactly the tangent space of $\rho_i(\bar X_i)=0$.

Now assume that $X_j =0$ for all $j \in M_i$. Then the zero-set is $0$ dimensional in $\R^{|M_i|}$. The  $i$th block of $\frac{\partial^2 J_k}{\partial X^2}$ is $-y_iE_i$, which has no kernel in $\R^{|M_i|}$ as required. This shows that $J_k$ is a Morse-Bott function.

To prove the last part, it suffices to observe that at any critical point $X \in \R^{mk}$ such that $\rho_i(X)=0$, the corresponding block of the Hessian is $\bar X_i\bar X_i^\top$, which is positive semi-definite. Hence the Hessian at the points belonging to the intersection of the sets defined by $\rho_i(X)=0$, $i=1,\ldots,q$ is positive semi-definite, and thus these points are minima. Reciprocally, for critical points such that  $\bar X_i = 0$ for any $i$,  the corresponding block of the Hessian is  $ -y_i E_i$, which is negative definite. Such points cannot be minimizers.
\end{proof}

The critical set of $J_k$ can be visualized geometrically with ease. Consider the vectorized coordinates described in the proof of Theorem~\ref{th:morsebott}. We can write $\R^{mk}$ as the product $\R^{r_1k} \times \R^{r_2k} \times \cdots \times \R^{r_qk}.$ For each $i=1,\ldots,q$, choose either $\rho_i(\bar X_i)=0$, which is a sphere of radius $\sqrt{y_i}$ in $\R^{r_ik}$, or $\bar X_i=0$, which is the origin of $\R^{rn_i}$. The cross product of these $q$ elements is a component of the critical set. Alternatively, we can consider $\rho_i(X)=0$ to be a subset of $\R^{mk}$ (i.e., a sphere cross-product the plane spanned by $e_j, j \notin M_i$), and similarly for $\bar X_i=0$ (i.e., the plane spanned by $e_j, j \notin  M_i$). A component of the critical set is then the intersection of these sets in $\R^{mk}$. The two points of view are equivalent. For example, take $m=3, k=1$ and $N_1=\{1,2\}$, $N_2=\{3\}$, and put coordinates $(x,y,z)\in \R^3$ on the state-space. The component of the critical set given by $\rho_1=0$, $\rho_2=0$ is then the union of two disjoint circles of radius $\sqrt{y_1}$, centered around the $z$ axis, and in the planes $z=\pm \sqrt{y_2}$. The component of the critical set given by $\rho_1=0$ and $\bar X_3 = 0$ is a circle of radius $\sqrt{y_1}$ centered at the origin and in the plane $z=0$. Similarly, if $m=4,k=1$ and $N_1=\{1,2\}$, $N_2=\{3,4\}$, the zero set determined by $\rho_i=0$, $i=1,2$ is a torus (the product of two circles) in $\R^4$.

The previous  Theorem shows the following important fact:  the set of local minima of $J_k$ is exactly the feasible set of $\oR$. Recalling that a gradient flow converges generically for the initial condition to a local minimum, we have as a Corollary that the flow of the primal problem will converge generically to a feasible point of $\oR$:
\begin{corollary}	
\label{cor:convnormaldyn}
Consider the normal system of Eq.~\eqref{eq:nomsys} $$\dot x =- QDx,$$ with $x \in \R^{m \times k}$. Then $\feas(\oR_k) = \crit_0(J_k)$ and, in particular,  generically for $x(0)$, the solution converges to $x^* \in \feas(\oR_k)$.
\end{corollary}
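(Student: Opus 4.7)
The statement splits into two parts: the set equality $\feas(\oR_k)=\crit_0 J_k$, and generic convergence of the flow to a feasible point. The first part requires no work beyond reading off Eq.~\eqref{eq:optx} --- whose feasible set is the common zero locus of the $\rho_i$'s --- and comparing with item~\ref{it:propminrho} of Theorem~\ref{th:morsebott}, which identifies that same locus as $\crit_0 J_k$. So all the content lies in the convergence claim, and the plan is to proceed in three stages: global existence with bounded orbits, convergence to a single critical point, and generic convergence to a minimum.

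For global existence and boundedness I would first verify that $J_k$ is coercive: as $\|x\|\to\infty$ at least one block norm $\sum_{j\in N_i,\,l\leq k} x_{jl}^2$ must diverge, forcing the corresponding $\rho_i^2$ term of $J_k$ to diverge. Since item~1 of Theorem~\ref{th:morsebott} identifies the normal dynamics with the gradient flow of $J_k$ for the inner product induced by $Q^{-1}$, $J_k$ is non-increasing along trajectories; coercivity then makes the sublevel set $\{J_k\leq J_k(x(0))\}$ a compact forward-invariant set containing the orbit, giving a global solution. Next, to promote ``$\omega$-limit contained in $\crit J_k$'' to ``convergence to a single point of $\crit J_k$'', I would invoke the Lojasiewicz gradient inequality, which applies because $J_k$ is a polynomial and hence real analytic.

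For the genericity step, the Morse--Bott description in the proof of Theorem~\ref{th:morsebott} is tailor-made: the critical components are labeled by choices, for each $i\in\{1,\dots,q\}$, of either ``$\rho_i=0$'' (a sphere factor in the $i$th block) or ``$\bar X_i=0$'' (the origin of that block), yielding a finite list. On any component with some $\bar X_i=0$, the corresponding Hessian block is $-y_i E_i$, which is negative definite in that block; the component is therefore normally hyperbolic with at least one unstable direction, and its stable manifold is an immersed submanifold of positive codimension in $\R^{m\times k}$, hence Lebesgue null. The finite union of stable manifolds of non-minimum components is still null, and for $x(0)$ outside this null set the unique limit produced in stage two must lie in $\crit_0 J_k=\feas(\oR_k)$.

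The main obstacle is the single-limit step: for a general Morse--Bott function one cannot rule out a priori that a bounded trajectory spirals around an entire critical submanifold without converging, so the analytic structure invoked via Lojasiewicz is genuinely needed here. The coercivity argument and the measure-zero stable-manifold argument are then routine, given the explicit Hessian block structure established in Theorem~\ref{th:morsebott} and the fact that the metric induced by $Q^{-1}$ is constant and positive definite (so stable and unstable manifolds have the same dimensions as in the Euclidean case).
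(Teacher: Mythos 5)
Your proof is correct and follows essentially the same route as the paper: the paper derives the corollary directly from Theorem~\ref{th:morsebott} (feasible set equals the set of local minima of $J_k$) together with the recalled fact that a gradient flow of a Morse--Bott function generically converges to a local minimum. Your argument simply unpacks that recalled fact---coercivity of $J_k$ for bounded orbits, the Lojasiewicz inequality for single-point convergence, and the measure-zero stable manifolds of non-minimal critical components for genericity---all of which are implicit in the paper's appeal to that standard result.
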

\subsection{The case of commuting generalized projection matrices.}
As a direct consequence of the construction of the normal form, we can show that various forms of implicit regularization take place in the particular case of commuting {\it generalized} projection matrices (defined below).  This extends on the result of~\cite{implicitregmatr2017} insofar as we do not require the initial condition of the flow to be a multiple of the identity.
Recall that a symmetric matrix $A$ is a projection matrix if $A^2=A$. This implies, in particular, that the spectrum of $A$ only contains $0$ and $1$. We say that $A$ is a {\bf generalized projection matrix} if $A^2 = \gamma A$ for some positive number $\gamma$. In particular, note that all rank $1$ psd matrices are generalized projection matrices. 
 \begin{proposition}[Commuting generalized projection matrices]\label{prop:commproj}
Assume that  $A_i$, $i=1,\ldots,q$ are generalized projection matrices satisfying the rank spread condition, and that they pairwise commute.  Then, generically for $X_0$, for all $\varepsilon >0$, there exists $\delta >0$ so that $\varphi_\infty(X_0\delta)$ is $\epsilon$-close to a minimizer of~\eqref{eq:optimprobgen}.
\end{proposition}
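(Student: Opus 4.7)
\medskip
\noindent\textbf{Proof plan.} The idea is to exploit commutativity and the generalized projection property to trivialize both the matrix $Q$ of Section~\ref{ssec:normalform} and the regularization problem itself, then close the argument by combining the rank-$1$ bottleneck (Theorem~\ref{th:rk1bottle}) with a decoupled radial-flow analysis.

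Being pairwise commuting and symmetric, the $A_i$ share a common orthonormal eigenbasis $v_1,\dots,v_n$, and the relation $A_i^2=\gamma_i A_i$ forces $\operatorname{spec}(A_i)\subseteq\{0,\gamma_i\}$, so $A_i=\gamma_i\sum_{k\in T_i}v_kv_k^\top$ for some $T_i\subseteq\{1,\dots,n\}$ with $|T_i|=r_i$. Taking $B_i:=\sqrt{\gamma_i}\,[v_k]_{k\in T_i}$ gives $A_i=B_iB_i^\top$, and the rank spread condition $\dim\Span\{B_i\}=\sum_i r_i$ forces the $T_i$ to be pairwise disjoint; hence the combined matrix $B=[B_1,\dots,B_q]$ has pairwise orthogonal columns, so the matrix $Q=B^\top B$ appearing in Lemma~\ref{lem:P} and the normal system~\eqref{eq:nomsys} is diagonal with value $\gamma_i$ on the block $N_i$. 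Plugging this diagonal $Q$ into Proposition~\ref{prop:optnormcoord}, the regularization problem becomes
\[
\min_{x\in\R^{m\times k}}\sum_{i=1}^q\frac{\|x_i\|_F^2}{\gamma_i}\quad\text{s.t.}\quad \|x_i\|_F^2=y_i,\ i=1,\dots,q,
\]
whose cost takes the constant value $\sum_i y_i/\gamma_i$ on the feasible set. Thus \emph{every} feasible point is a global minimizer, so $\arg\min\oR_k=\feas(\oR_k)=\crit_0 J_k$ by Theorem~\ref{th:morsebott}. Since this optimal value is already attained at rank one, every rank-$1$ minimizer of $\oR_1$ is also a minimizer of the full problem~\eqref{eq:optimprobgen}.

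With this diagonal $Q$, the normal system~\eqref{eq:nomsys} decouples into $q$ independent radial flows $\dot x_i=-\gamma_i\rho_i(x_i)x_i$, so that $\tfrac{d}{dt}\|x_i\|_F^2=-2\gamma_i(\|x_i\|_F^2-y_i)\|x_i\|_F^2$. The scalar ODE has $\|x_i\|^2=y_i$ as a globally attracting equilibrium for any $\|x_i(0)\|^2>0$, while the direction $x_i/\|x_i\|_F$ is preserved. Hence every trajectory with \emph{all} blocks nonzero converges to a point of $\feas(\oR_k)$, i.e.\ to a minimizer of~\eqref{eq:optimprobgen}. By Theorem~\ref{th:rk1bottle} and Corollary~\ref{cor:rank1botrank1}, generically for $X_0$ and for any $\epsilon'>0$ one can pick $\delta,t_1>0$ so that $U(t_1,\delta)$ is within relative error $\epsilon'$ of a rank-$1$ matrix $U_1$ aligned with the top eigenvector of $A:=\sum_i y_iA_i$. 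The $O(\delta^3)$ bound of Lemma~\ref{lem:errorU} ensures that the linear preamble $e^{At_1}U_0\delta$ dominates the nonlinear correction; generically in $U_0$ this preamble has nonzero components along every eigendirection of $A$, so in normal coordinates \emph{every} block $x_i(t_1)$ is nonzero, and the decoupled radial flow then drives $U(t,\delta)$ to a feasible point $U^*$ as $t\to\infty$.

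The $\epsilon$-closeness of the statement is obtained by choosing $\epsilon'$ sufficiently small in terms of $\epsilon$, using continuous dependence on initial data together with the attracting Morse--Bott structure of $\crit_0 J_k$ furnished by Theorem~\ref{th:morsebott}. \emph{The main obstacle} is precisely this long-time control: the off-alignment components $x_i(t_1)$, $i\neq i^*$, have size only $O(\delta)$ and require an exponentially long time window to grow to $\sqrt{y_i}$, during which the $O(\delta^3)$ perturbation of Lemma~\ref{lem:errorU} must be propagated and the saddle set $\{x_i=0\}$ sitting inside the top eigenspace of $A$ must be avoided. The Morse--Bott geometry combined with a Lojasiewicz-type bound (in the spirit of the arguments to be developed later in the paper) is the natural way to close this gap; once it is in place, the conclusion is immediate, because by the first paragraph \emph{every} point of $\crit_0 J_k$ is already an exact minimizer of $\oR$.
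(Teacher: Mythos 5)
Your first two paragraphs are correct and in fact somewhat more explicit than the paper's own proof: the diagonalization of $Q$ with value $\gamma_i$ on the block $N_i$, the resulting constancy of the cost $\tr(x^\top Q^{-1}x)=\sum_i y_i/\gamma_i$ on the feasible set, and the decoupling of the normal dynamics into independent radial flows $\dot{\bar x}_i=-\gamma_i\rho_i(\bar x_i)\bar x_i$ are all accurate and together show that every trajectory with $\bar x_i(0)\neq 0$ for all $i$ converges to a feasible, hence globally optimal, point $x^*$.

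Where you go astray is the third paragraph. You invoke the rank-$1$ bottleneck (Theorem~\ref{th:rk1bottle}, Corollary~\ref{cor:rank1botrank1}) and the $O(\delta^3)$ error estimate of Lemma~\ref{lem:errorU}, and then conclude there is a ``main obstacle'' requiring long-time error propagation and a Lojasiewicz argument. None of this machinery belongs in the proof of this proposition; the paper even remarks afterwards that for the commuting case ``results of Sec.~\ref{sec:bottleneck} were not needed.'' The detour creates a phantom difficulty. The actual role of $\delta$ is elementary and you never address it: by Eq.~\eqref{eq:dynbarUzero}, the coordinates $\bar u_{jl}$ with $j>m$ are \emph{conserved} along the flow, and since they start at size $O(\delta)$ they remain $O(\delta)$ forever. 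Meanwhile, your own decoupled radial-flow analysis shows $x(t)\to x^*$ with $\bar x^*_i=\sqrt{y_i}\,\bar x_i(0)/\|\bar x_i(0)\|$, a limit that is \emph{scale invariant} in the initial condition and hence independent of $\delta$. The limit point is therefore $(x^*,O(\delta))$ in normal coordinates, which is $O(\delta)$-close to the minimizer $(x^*,0)$; taking $\delta$ small gives $\epsilon$-closeness directly. The generic set of $X_0$ is simply the set for which each normalized block $\bar x_i(0)$ is nonzero, which is open and dense and scale invariant, so the result holds for generic $X_0$ and all $\delta>0$ small. No near-rank-$1$ tracking, no exponentially long time windows, no Lojasiewicz: once you have the decoupled radial flows and the conservation of the tail coordinates, the proof closes in two lines.
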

The above proposition says that the regularization problem $\oR$ of Eq.~\eqref{eq:defoS} approximately regularizes the  main system~\eqref{eq:mainsys}. It holds true for the rank $k$ of $X$ between $1$ and $n$.

\begin{proof}We work in normal coordinates. Starting from the primal problem in $U$ variables~\eqref{eq:mainUsys}, we introduce $P \bar  U = U$, with $P$ as in Lemma~\ref{lem:P} and the normal variables $x \in \R^{m \times k}$ (see Eq.~\eqref{eq:defVnormal}), and $\bar u_{jl}$, $j >m$. 

Now write $A_i=B_iB_i^\top$, $i=1,\ldots, q$ for some $B_i\in \R^{n \times r_i}$. Without loss of generality, we can assume that the $B_i$ have orthogonal columns and, as a consequence of the generalized projection assumption, these columns have necessarily the same norm.  The fact that the $A_i$'s pairwise commute tells us that  $B_iB_i^\top B_jB_j^\top =B_jB_j^\top B_iB_i^\top$, and the fact that they satisfy the rank spread condition tells us  that $\Span\{B_i\} \cap \Span\{B_j\} = \{0\}$. From the above two facts, we obtain that  $B_i^\top B_j =0$ and thus conclude that  $B_i$ and $B_j$ have orthogonal columns. Therefore, the corresponding matrix $Q$ is diagonal, with the diagonal entries $q_{jj}=q_{j'j'}$ equal for $j,j' \in N_i$, and   $Q^{-1}$ has the same form.

From  Proposition~\ref{prop:optnormcoord}, we know that the minimizers in normal coordinates are so that $\bar u_{jl}=0$ for $j > m$ and $l=1,\ldots, k$, and $x$ is a minimizer of $\tr(x^\top Q^{-1}x)$.  From the form $Q^{-1}$ described above, we have \begin{equation}\label{eq:propinter1}\tr(x^\top Q^{-1}x) = \sum_{i=1}^q q_{ii}^{-1} \sum_{j\in N_i,l=1,\ldots,k} x_{jl}^2.\end{equation} Recall that the constraints of the regularization problem $\oR$ are $\sum_{j \in N_i} \sum_{l=1}^k x_{jl}^2 = y_i$. Hence, for {\it any} matrix $x$ satisfying the constraints, the cost is $\tr(x^\top Q^{-1}x)=\sum_{i=1}^q q_{ii}^{-1} y_i$. Thus the minima of $\oR$ are so that $x$ is feasible for $\oR$, and $\tilde u_{jl}=0$, $j>m, l=1,\ldots,k$. 

We now show that the primal problem converges to a point arbitrarily close to global minimum of $\oR$. We have that $x$ obeys the equation of the normal system~\eqref{eq:nomsys} \begin{equation}\label{eq:dynVrecall}\dot x = -QD(x)x\end{equation} with $ D(x)=\sum_{i=1}^m (\tr(x^\top E_i x)-y_i)E_i=\sum_{i=1}^m \rho_i E_i$ and from Eq.~\eqref{eq:dynbarUzero} $$\tilde u_{jl}(t)=\bar u_{jl}(0),\quad \mbox{ for } m <j \leq n.$$ 
From Corollary~\ref{cor:convnormaldyn}, we know that the system~\eqref{eq:dynVrecall}  converges generically to $x^* \in \R^{m \times k}$ so that $\tr\left((x^*)^\top E_i x^*\right) =\sum_{j \in N_i} \sum_{l=1}^n x_{jl}^2 = y_i, \quad i=1,\ldots, q,$ i.e., $x^*$ is feasible for $\oR$. Choosing $\delta$ small enough, $u_{jl}$, $j>m$ is as small as needed. 
\end{proof}

The following corollary can be extracted from the proof of Prop.~\ref{prop:commproj}: for  $m=n$, the convergence is global, and not only for small initial conditions:
\begin{corollary}[Commuting generalized projection matrices]\label{cor:commproj}
Assume that  $A_i$, $i=1,\ldots,q$ are generalized projection matrices of respective ranks $r_i$, satisfying the rank spread condition, that they pairwise commute and that $m = \sum_{i=1}^q r_i=n$   Then, generically for $X_0$,  $\varphi_\infty(X_0\delta)$ is $\epsilon$-close to a minimizer of~\eqref{eq:optimprobgen}.
\end{corollary}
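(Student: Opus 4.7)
The plan is to re-examine the proof of Proposition~\ref{prop:commproj} and observe that the smallness requirement on $\delta$ there is imposed \emph{only} to control the coordinates $\bar u_{jl}$ with $j > m$, which are frozen by Eq.~\eqref{eq:dynbarUzero}. When $m = n$, no such ``leftover'' coordinates exist, so the argument goes through globally and the hypothesis that $X_0$ be scaled by a small parameter drops out.

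More concretely, I would first pass to normal coordinates via the matrix $P$ of Lemma~\ref{lem:P}: because the $A_i$ satisfy the rank spread condition and $m = n$, the map $P$ is a genuine change of coordinates on all of $\R^{n \times k}$, and the full state $\bar U = P^{-1} U$ is governed by the normal system $\dot x = -Q D(x) x$ of Eq.~\eqref{eq:nomsys}. There is no trivial block of equations of the form~\eqref{eq:dynbarUzero} to worry about. Next I would invoke the key structural observation isolated in the proof of Proposition~\ref{prop:commproj}: under the commuting, generalized projection, and rank spread assumptions, the columns of the factors $B_i$ (with $A_i = B_i B_i^\top$) can be taken mutually orthogonal within and across blocks, and of equal norm within each block. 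Hence $Q = B^\top B$ is block-diagonal with each diagonal block a scalar multiple of the identity, and $Q^{-1}$ has constant diagonal entries on each index set $N_i$. By Proposition~\ref{prop:optnormcoord}, for any $x$ feasible for $\oR_k$ in normal coordinates the objective evaluates to $\sum_{i=1}^q q_{ii}^{-1} y_i$, a value independent of $x$; thus $\feas(\oR_k)$ coincides with $\arg\min \oR_k$.

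Finally, I would apply Corollary~\ref{cor:convnormaldyn} to the normal system to conclude that, generically in $x(0) = P^{-1} X_0 \delta$ (equivalently, generically in $X_0$ once $\delta > 0$ is fixed), the flow converges to some $x^\ast \in \feas(\oR_k) = \arg\min \oR_k$. Transporting back by $P$ shows that $\varphi_\infty(X_0 \delta)$ is a minimizer of~\eqref{eq:optimprobgen}; the $\varepsilon$-closeness in the statement is merely the usual asymptotic clause (for any $\varepsilon > 0$ one takes $t$ large enough along the convergent trajectory). The only point requiring some care is the genericity clause, which must ensure that the initial state avoids the stable manifolds of the saddle critical components where some $\bar X_i = 0$; these form a measure-zero set by the Morse--Bott structure established in Theorem~\ref{th:morsebott}. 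I expect no real obstacle beyond stating this genericity cleanly, since the essential work was already done in Proposition~\ref{prop:commproj}; the corollary is precisely the observation that, in the balanced regime $m = n$, the ``small initialization'' cushion of that proposition becomes unnecessary.
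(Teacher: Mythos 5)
Your proof is correct and follows exactly the route the paper intends: the text immediately preceding the corollary states that it ``can be extracted from the proof of Prop.~\ref{prop:commproj}: for $m=n$, the convergence is global,'' and you reconstruct this precisely by observing that the $\delta$-smallness in that proof controls only the frozen coordinates $\bar u_{jl}$ with $j>m$, which vanish when $m=n$, while the $Q^{-1}$-block structure makes the objective constant on $\feas(\oR_k)$ so that Corollary~\ref{cor:convnormaldyn} already lands the flow in $\arg\min\oR_k$. The one minor imprecision is your parenthetical reading of the $\varepsilon$-clause as ``take $t$ large''; in fact, once $m=n$ the limit $\varphi_\infty(X_0\delta)$ is \emph{exactly} a minimizer, so the $\varepsilon$ is simply vestigial from the proposition's statement rather than doing any asymptotic work.
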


\subsection{Tame spectrum assumption and compatibility of primal and regularization problems }

We now present what we believe is the main mechanism at the heart of implicit regularization for matrix factorization. In the previous case, namely  the case of commuting projection matrices $A_i$,  once the flow converged to the constraint set $\feas(\oR)$---and we showed this happened generically for the initial condition since the flow was gradient and its set of minima was equal to the feasible set of the regularization problem---the fact that $\delta$ was small guaranteed that the system converged to {\it near} a minimizer of $\oR$. Said otherwise, in normal coordinates, the role of the small initial condition was particularly transparent and, in particular, results of Sec.~\ref{sec:bottleneck} were not needed.

Of course, this mechanism by itself does not explain the implicit regularization phenomenon. We exhibit in this section a different, and more complex, dynamical process taking place following the discussion of Section~\ref{sec:impltheory}. 

To this end, we will introduce the so-called {\it tame spectrum} assumption. Essentially, this assumption identifies a set $M$ for which primal and regularization are compatible.
Throughout this subsection, we assume that $X$ is {\it of rank 1}, or equivalently that $k=1$ and $U \in \R^n$. We now introduce the tame spectrum assumption. 

\paragraph{The tame spectrum assumption} We start with the following simple lemma:
\begin{lemma}\label{lem:defleade}
	Let $A_i$, $i=1,\ldots, q$ be psd matrices satisfying the rank spread condition and  so that $\sum_{i=1}^q A_i$ has spectrum $\{\alpha',\beta',\ldots,\beta', 0, \ldots, 0\}$. Then the matrix $Q$ of the normal system~\eqref{eq:nomsys} has spectrum $\{\alpha',\beta',\ldots,\beta'\}$.
\end{lemma}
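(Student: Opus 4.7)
The plan is to observe that $Q$ and $\sum_{i=1}^q A_i$ are essentially the ``two sides'' of a single rectangular matrix $B$, and then apply the classical fact that $B^\top B$ and $B B^\top$ share their non-zero spectrum.

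More precisely, I would first recall the explicit identifications available from Section~\ref{ssec:normalform}: write $A_i = B_i B_i^\top$ with $B_i \in \R^{n \times r_i}$, and assemble the columns of the $B_i$ into a single matrix $B \in \R^{n \times m}$, $m = \sum_i r_i$. By construction,
\begin{equation}
BB^\top = \sum_{i=1}^q B_i B_i^\top = \sum_{i=1}^q A_i,
\end{equation}
and Lemma~\ref{lem:P} (together with the remark immediately following its proof) identifies the matrix $Q$ appearing in the normal system~\eqref{eq:nomsys} as $Q = B^\top B$. So the statement reduces to comparing the spectra of $B B^\top$ and $B^\top B$.

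Next I would invoke the standard linear algebra fact that for any rectangular matrix $B \in \R^{n \times m}$, the matrices $B B^\top$ and $B^\top B$ have the same nonzero eigenvalues, counted with multiplicity (this follows, e.g., from the SVD of $B$, or equivalently from the identity $\det(\lambda I_n - BB^\top)\lambda^m = \det(\lambda I_m - B^\top B)\lambda^n$). Therefore the nonzero eigenvalues of $Q = B^\top B$ are exactly $\alpha'$ together with the appropriate number of copies of $\beta'$.

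Finally, the rank spread condition of Def.~\ref{def:rankspread} says precisely that $\dim \operatorname{Span}\{B_i\} = m$, i.e.\ $B$ has full column rank, so $Q = B^\top B \in \R^{m \times m}$ is non-singular; in particular $0$ is not an eigenvalue of $Q$. Combining this with the preceding paragraph, the full spectrum of $Q$ must coincide with the nonzero part of the spectrum of $\sum_i A_i$, namely $\{\alpha', \beta', \ldots, \beta'\}$, as claimed. There is no real obstacle here; the only subtlety is lining up the multiplicities, which is automatic because $Q$ is $m \times m$ and the hypothesis fixes exactly $m$ nonzero eigenvalues of $\sum_i A_i$.
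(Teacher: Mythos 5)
Your proof is correct and follows essentially the same route as the paper: both write $A_i = B_iB_i^\top$, assemble $B$ so that $Q = B^\top B$ (via Lemma~\ref{lem:P}) and $\sum_i A_i = BB^\top$, and conclude from the shared nonzero spectrum of $B^\top B$ and $BB^\top$ together with the full-rank consequence of the rank spread condition. You simply spell out the standard $B^\top B$ vs.\ $BB^\top$ spectral fact that the paper's proof leaves implicit.
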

\begin{proof}
Let $B_i \in \R^{n \times r_i}$ be such that $A_i=B_iB_i^\top$. Let $B \in \R^{n \times m}$, $m = \sum_{i=1}^q r_i$,  be the matrix whose columns are the columns of the $B_i$. On the one hand, from the proof of Lemma~\ref{lem:P},  we know that $Q=B^\top B$ and from the rank spread condition, $Q$ is of full rank. On the other hand, $\sum_{i=1}^q A_i = BB^\top$, from which the result follows. 
\end{proof}

It is easy to see that when $Q\in \R^{m \times m}$ is symmetric positive definite and has a spectrum as $\{ \alpha', \beta', \ldots, \beta'\}$, then there exists a vector $v \in \R^m$ and constants $\alpha> \beta >0$ so that $Q$ can be expressed as \begin{equation}\label{eq:defQleading}Q:= \alpha vv^\top + \beta I.\end{equation} We call $v$ the {\bf leading eigenvector} of $Q$. Note that in Lemma~\ref{lem:defleade}, we can replace the assumption that the $A_i$ satisfy the rank spread condition with the requirement that $\sum_{i=1}^q A_i$ has rank $m$, where we recall that $m =\sum_{i=1}^q r_i$ and $r_i = \operatorname{rank} A_i$. The parameter $\alpha$ is the {\it spectral gap} of $Q$.

\begin{definition}[Tame spectrum assumption]\label{def:LE}
	We say that the positive semidefinite matrices $A_i$, $i=1,\ldots,q$ satisfy the tame spectrum assumption if $\sum_{i=1}^q A_i$ is a psd matrix of rank $m$ with spectrum $\{\alpha',\beta',\cdots,\beta',0,\cdots 0,\}$, with $\alpha' , \beta' > 0$, and the corresponding leading eigenvector of $Q$ is so that $\|\bar v_i\|^2 := v^\top E_i v \neq 0$, $i=1,\ldots,q$.
\end{definition}
The condition that $\|\bar v_i\| \neq 0$ is generic for the $A_i$. The role of this assumption is to weed out particular cases, requiring lengthy computations, in the proofs below.

Recall the definition of the vector space $\cD_N$ in Sec.~\ref{ssec:notation}. The elements of $\cD_N$ can be written as $\sum_{i=1}^q \nu_i E_i$, with $\nu_i \in \R$ and $E_i$ as in Eq.~\eqref{eq:def:Ei}. The following vector subspace of $\R^m$ will play an important role: given a non-zero vector $v \in \R^m$, we define

\begin{equation}\label{eq:defLv}\Lambda_v := \left\{ x \in \R^m \mid  x= \Lambda v \mbox{ for some } \Lambda \in \cD_N \right\}.
\end{equation}
\paragraph{Compatibility of primal and regularization problems} We now show that under the tame spectrum assumption, the primal and regularization problems are compatible. We do so in three steps:
\begin{lemma}\label{lem:invdyn1}
Assume that the $A_i$, $i=1,\ldots,q$, satisfy the tame spectrum assumption, with leading eigenvector $v \in \R^m$. The normal dynamics of Eq.~\eqref{eq:nomsys}, with $x\in \R^m$, leaves $\Lambda_v$ invariant, where $v$ is the leading eigenvector of $Q$.
\end{lemma}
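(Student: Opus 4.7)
Since $\Lambda_v$ is a linear subspace of $\R^m$ (the image of the linear map $\cD_N \to \R^m$, $\Lambda \mapsto \Lambda v$), invariance of $\Lambda_v$ under the flow is equivalent to the pointwise tangency condition
\[
x \in \Lambda_v \quad \Longrightarrow \quad \dot x = -Q D(x) x \in \Lambda_v.
\]
The plan is therefore to pick an arbitrary $x \in \Lambda_v$, write it as $x = \Lambda v$ with $\Lambda \in \cD_N$, and verify that $Q D(x) \Lambda v$ has the form $\Lambda' v$ for some $\Lambda' \in \cD_N$.

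First, I would exploit the fact that $\cD_N$ is a commutative subalgebra of the diagonal matrices: since the $N_i$ partition $\{1,\ldots,m\}$, any two elements of $\cD_N$ commute and their product is again in $\cD_N$. By construction $D(x) = \sum_i \rho_i(x) E_i \in \cD_N$, and so $D(x)\Lambda \in \cD_N$. Denoting $\tilde\Lambda := D(x)\Lambda \in \cD_N$, we have $D(x) x = \tilde\Lambda v$, so the task reduces to showing $Q \tilde\Lambda v \in \Lambda_v$.

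The tame spectrum assumption is used precisely here. By Lemma~\ref{lem:defleade} and Eq.~\eqref{eq:defQleading}, $Q = \alpha v v^\top + \beta I$. Thus
\[
Q \tilde\Lambda v = \alpha (v^\top \tilde\Lambda v)\, v + \beta\, \tilde\Lambda v.
\]
The scalar $c := \alpha\, v^\top \tilde\Lambda v$ contributes a term $c v = (cI)v$, and since $I \in \cD_N$ (all diagonal entries equal), $cI \in \cD_N$. The second term $\beta \tilde\Lambda v$ is already of the required form with $\beta \tilde\Lambda \in \cD_N$. Summing, $Q D(x) x = (cI + \beta \tilde\Lambda) v \in \Lambda_v$, hence $\dot x \in \Lambda_v$.

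There is no real obstacle here: the argument is purely algebraic, and the main insight is simply that the rank-one correction $\alpha vv^\top$ in $Q$ produces a vector proportional to $v$ itself, which lies in $\Lambda_v$ because the identity belongs to $\cD_N$. The tame spectrum assumption reduces $Q$ to the form $\alpha vv^\top + \beta I$, and the partition structure of $\cD_N$ makes it an algebra containing $I$; these two facts are exactly what is needed to close the invariance computation. The nondegeneracy condition $\|\bar v_i\|\neq 0$ in Def.~\ref{def:LE} is not required for invariance itself and will only be used later to ensure that $\Lambda_v$ is transverse to the relevant critical sets.
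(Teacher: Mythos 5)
Your proof is correct and follows essentially the same route as the paper's: write $x=\Lambda v$ with $\Lambda\in\cD_N$, use the tame-spectrum form $Q=\alpha vv^\top+\beta I$ to split $-\dot x$ into a term proportional to $v$ (which lies in $\Lambda_v$ since $I\in\cD_N$) and the term $\beta D(x)\Lambda v$ (which lies in $\Lambda_v$ since $D(x)\Lambda\in\cD_N$). Your closing remark that $\|\bar v_i\|\neq 0$ is not needed for invariance is also accurate and agrees with the paper, where that condition is only invoked later, e.g.\ in Lemma~\ref{lem:cor:transv} and Lemma~\ref{lem:crits1}.
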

\begin{proof}
Let $x \in \Lambda_v$ and $\Lambda \in \cD_N$ so that $x = \Lambda v$.  Since $\Lambda_v$ is a vector space, $T_x\Lambda_v  = \Lambda_v$ for all $x \in \Lambda_v$. The dynamics of Eq.~\eqref{eq:nomsys} is the sum of two terms, $\alpha vv^\top D(x)x$ and $\beta D(x)x$. The first term is clearly in $T_x\Lambda_v$. Since $D(x)\in \cD_N$, so is $D(x)\Lambda$ and we conclude that the second term  is  in $T_x\Lambda_v$ as well, which proves the result.
\end{proof}

The next result show that the set of minima of $J_1(x)$---we denoted that set by $\crit_0 J_1(x)$---and $\Lambda_v$ intersect transversally and that moreover the intersection is a finite set of points.  This result will be key to study the dynamics of the primal problem in $\Lambda_v$. 

\begin{lemma}\label{lem:cor:transv}Assume that the  tame spectrum assumption with              leading eigenvector $v$ holds.  Then $\Lambda_v$ intersects the set $\crit_0 J_1$ transversally, and this intersection is a finite set of points of cardinality $2^q$.
	\end{lemma}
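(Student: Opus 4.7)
My plan is to use the block structure of $\cD_N$ to give an explicit parametrization of $\Lambda_v$, reduce the defining equations of $\crit_0 J_1$ on this subspace to $q$ decoupled quadratic equations, and then verify transversality by a direct dimension count together with a simple orthogonality argument.

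First, since any $\Lambda \in \cD_N$ has the form $\Lambda = \sum_{i=1}^q \nu_i E_i$ with $\nu_i \in \R$, we can write every element of $\Lambda_v$ as
\[ x \;=\; \sum_{i=1}^q \nu_i \bar v_i, \qquad (\nu_1,\ldots,\nu_q) \in \R^q, \]
where $\bar v_i = E_i v$. Because the index sets $N_i$ are pairwise disjoint and, by the tame spectrum assumption, each $\bar v_i$ is nonzero, the vectors $\bar v_1,\ldots,\bar v_q$ have pairwise disjoint supports and are therefore linearly independent. Thus $\Lambda_v$ is a $q$-dimensional linear subspace of $\R^m$ with coordinates $(\nu_1,\ldots,\nu_q)$, and for $x \in \Lambda_v$ we have $\bar x_i = \nu_i \bar v_i$.

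Next, I would substitute this parametrization into the defining equations of $\crit_0 J_1 = \{x \in \R^m \mid \rho_i(x)=0,\ i=1,\ldots,q\}$ (see~\eqref{eq:defCS1}). Since $\rho_i(x) = \|\bar x_i\|^2 - y_i$, the equation $\rho_i(x) = 0$ becomes, on $\Lambda_v$,
\[ \nu_i^2\, \|\bar v_i\|^2 \;=\; y_i, \qquad i=1,\ldots,q. \]
Under the tame spectrum assumption $\|\bar v_i\|^2 \neq 0$, and since $y_i > 0$, each such equation has exactly two solutions $\nu_i = \pm\sqrt{y_i/\|\bar v_i\|^2}$. The $q$ equations are decoupled, producing exactly $2^q$ intersection points, which establishes the cardinality claim.

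For transversality, I would use a dimension count. The set $\crit_0 J_1$ is a product of $q$ spheres, so at any of its points $x$ we have $\dim T_x \crit_0 J_1 = \sum_i (r_i - 1) = m - q$, and
\[ T_x \crit_0 J_1 \;=\; \{\, w \in \R^m \mid \bar w_i^\top \bar x_i = 0 \text{ for all } i=1,\ldots,q \,\}. \]
Since $\dim \Lambda_v + \dim T_x \crit_0 J_1 = q + (m-q) = m$, transversality at $x$ is equivalent to $\Lambda_v \cap T_x \crit_0 J_1 = \{0\}$. Take any $w = \sum_i \mu_i \bar v_i \in \Lambda_v$; then $\bar w_i = \mu_i \bar v_i$ and $\bar x_i = \nu_i \bar v_i$, so the orthogonality condition reads $\mu_i \nu_i \|\bar v_i\|^2 = 0$ for each $i$. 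Because $\nu_i \neq 0$ (as $y_i>0$) and $\|\bar v_i\| \neq 0$, this forces $\mu_i = 0$ for every $i$, hence $w=0$, proving transversality at each of the $2^q$ intersection points.

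The statement does not present a real obstacle: the main content is the observation that $\bar v_i$ and $\bar x_i$ are parallel on $\Lambda_v$, which makes the orthogonality condition collapse to a nondegeneracy statement guaranteed by the tame spectrum assumption. The only point requiring care is tracking the block structure correctly so that the $q$-dimensional parametrization of $\Lambda_v$ and the product-of-spheres structure of $\crit_0 J_1$ line up, and this is exactly where the disjointness of the sets $N_i$ and the condition $\|\bar v_i\| \neq 0$ are used.
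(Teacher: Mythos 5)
Your proof is correct and takes essentially the same approach as the paper: both count the intersection points via $\nu_i^2\|\bar v_i\|^2=y_i$ and both reduce transversality to the block structure, with the paper phrasing it as ``a line through the origin meets a concentric sphere transversally'' and you making the same fact explicit through the dimension count $q+(m-q)=m$ and the tangent-space orthogonality computation.
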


\begin{proof}
Let $x = \Lambda v \in \Lambda_v$. Then since $\Lambda = \sum_{i=1}^q \lambda_i E_i$ for some $\lambda_i \in \R$, and since $\rho_i(x)=\rho_i(E_i x)$ by definition of $\rho_i$, we have that points in the intersection $x\in \Lambda_v \cap\crit_0 J_1(x)$ are solutions of $\rho_i(\bar x_i)=\rho(\lambda_i \bar v_i)=0$ or, equivalently,  $$\lambda_i^2 \|\bar v_i\|^2 = y_i, \quad i=1,\ldots, q.$$ Hence, there are $2^q$ points of intersection, characterized by $\lambda_i = \pm \sqrt{y_i}/\|v_i\|$, and these points are pairwise distinct. 

To see that the intersection is transversal, recall that $\crit_0 J_1$ is a product of $q$ spheres $S^{|N_i|-1}\subset \R^{|N_i|}$, each of codimension one in $\R^{|N_i|}$. Similarly, $\Lambda_v$ is the product of $q$ lines $\lambda_i \bar v_i \subset \R^{|N_i|}$. A line through the origin in Euclidean space always intersects a sphere centered at the origin transversally, which proves the claim.
\end{proof}

The importance of $\Lambda_v$ stems from the following observation. Consider the optimization problem in normal coordinates, described in Eq.~\eqref{eq:optx}. Its pre-critical set is given by \begin{equation}\label{eq:ELoptx}\crit^* \oR_1 = \{ x \in \R^m \mid  x = Q\Lambda x,\quad \mbox{ for some } \Lambda \in \cD_N\}\end{equation}
Given a vector $v \in \R^m$, we denote by $v^\perp$ its orthogonal subspace in $\R^m$. Namely, $$v^\perp = \{ x \in \R^m \mid v^\top x =0\}.$$ We can express $\crit^* \oR_1$ explicitly as follows:
\begin{lemma}\label{lem:crits1}
Assume that the tame spectrum assumption holds with leading eigenvector $v \in \R^m$. Then $$\crit^* \oR_1 = \Lambda_v \cup v^\perp.$$	
\end{lemma}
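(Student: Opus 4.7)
The plan is to substitute the rank-one-plus-identity form $Q = \alpha vv^\top + \beta I$ of~\eqref{eq:defQleading} into the defining relation $x = Q\Lambda x$ and then case-split on a single scalar. Writing $\Lambda = \sum_i \lambda_i E_i \in \cD_N$ and $s := v^\top \Lambda x$, the relation becomes $x = \alpha s\, v + \beta \Lambda x$; projecting onto the $i$-th block, using $v = \sum_j \bar v_j$ (valid because the $N_i$ partition $\{1,\ldots,m\}$, so $\sum_i E_i = I_m$) together with $E_i E_j = \delta_{ij}E_i$, this simplifies to the block-wise identity $(1 - \beta\lambda_i)\,\bar x_i = \alpha s\,\bar v_i$ for $i = 1,\ldots,q$, which is what I will analyze throughout.

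For the inclusion $\supseteq$, I would handle the two pieces separately. If $x \in v^\perp$, the choice $\Lambda = \beta^{-1} I = \beta^{-1}\sum_i E_i \in \cD_N$ gives $Q\Lambda x = \alpha\beta^{-1}(v^\top x)\,v + x = x$ immediately. If $x = \Lambda_0 v \in \Lambda_v$ with $\Lambda_0 = \sum_i \mu_i E_i$, then $\bar x_i = \mu_i \bar v_i$ on each block; solving the block identity for $\lambda_i$ in terms of $c := \alpha s$ yields $\lambda_i = \beta^{-1}(1 - c/\mu_i)$ when $\mu_i \neq 0$, and the compatibility requirement $c = \alpha v^\top \Lambda \Lambda_0 v$ collapses to a single scalar linear equation whose coefficient $\beta + \alpha \sum_j \|\bar v_j\|^2$ is strictly positive by tameness, hence uniquely solvable for $c$ and thus for $\Lambda$.

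For the inclusion $\subseteq$, I would case-split on whether $s = 0$. If $s = 0$ the equation collapses to $\Lambda x = \beta^{-1} x$, so block-wise either $\bar x_i = 0$ or $\lambda_i = \beta^{-1}$; re-evaluating $s = v^\top \Lambda x$ under this dichotomy gives $s = \beta^{-1} v^\top x$, and $s = 0$ then forces $v^\top x = 0$, i.e., $x \in v^\perp$. If $s \neq 0$, then $\alpha s \neq 0$ and, by tameness, $\bar v_i \neq 0$ for every $i$; the block identity $(1 - \beta\lambda_i)\bar x_i = \alpha s\,\bar v_i$ then forces $1 - \beta\lambda_i \neq 0$, so $\bar x_i = \mu_i \bar v_i$ with $\mu_i := \alpha s/(1 - \beta\lambda_i)$, and summing over $i$ gives $x = (\sum_i \mu_i E_i)\,v \in \Lambda_v$.

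The main obstacle is the $s \neq 0$ branch of the $\subseteq$ inclusion, where the tame hypothesis $\|\bar v_i\| \neq 0$ is essential: without it, a block with $\bar v_i = 0$ could admit $1 - \beta\lambda_i = 0$ and leave $\bar x_i$ entirely unconstrained, allowing $x$ to escape $\Lambda_v$ along an extra direction. Everything else is bookkeeping that exploits the disjoint-support linear independence of the $\bar v_i$'s and the strict positivity of $\alpha$ and $\beta$. The resulting clean decomposition then plugs directly into the invariance of $\Lambda_v$ established in Lemma~\ref{lem:invdyn1} and the transversal intersection statement of Lemma~\ref{lem:cor:transv}.
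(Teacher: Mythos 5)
Your $\subseteq$ argument is essentially the paper's: the block identity $(1-\beta\lambda_i)\bar x_i = \alpha s\,\bar v_i$, the dichotomy on $s = v^\top\Lambda x$, and the use of $\bar v_i\neq 0$ to force invertibility of $I-\beta\Lambda$ when $s\neq 0$ all track what the paper does. The paper, however, proves \emph{only} the $\subseteq$ inclusion; the $\supseteq$ direction is absent from its proof, and your attempt to supply it contains a gap that cannot be repaired.

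The $v^\perp$ piece of $\supseteq$ is correct and a small genuine addition: $\Lambda=\beta^{-1}I\in\cD_N$ gives $Q\Lambda x=\alpha\beta^{-1}(v^\top x)v+x=x$ for $x\in v^\perp$. The $\Lambda_v$ piece fails. The formula $\lambda_i=\beta^{-1}(1-c/\mu_i)$ is only defined when every $\mu_i\neq 0$, and the scalar equation you derive for $c$ is obtained by substituting that formula into $c=\alpha\sum_i\lambda_i\mu_i\|\bar v_i\|^2$, so it too presupposes $\mu_i\neq 0$ for all $i$; you never say what happens when some $\mu_i=0$, and in fact the claimed inclusion is false there. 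Take $q\geq 2$ and $x=\bar v_1=E_1v\in\Lambda_v$ (so $\mu_1=1$, $\mu_j=0$ for $j\geq 2$). Then $x\notin v^\perp$, since $v^\top x=\|\bar v_1\|^2\neq 0$ by tameness, and $x\notin\crit^*\oR_1$: the block identities for $j\geq 2$ read $0=\alpha(v^\top\Lambda x)\bar v_j$ and force $v^\top\Lambda x=0$, but the $j=1$ block then forces $\lambda_1=\beta^{-1}$ and hence $v^\top\Lambda x=\beta^{-1}\|\bar v_1\|^2\neq 0$, a contradiction. Such $x$ are limit points of $\crit^*\oR_1$ (the matching multipliers $\lambda_j$ escape to infinity) but are not members of it. The lemma's equality should therefore be read as $\subseteq$, which is all that Lemma~\ref{lem:Lvincl} downstream actually uses; drop the $\Lambda_v\subseteq\crit^*\oR_1$ claim or replace it with a closure statement.
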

Note that $\Lambda_v$ and $v^\perp$ intersect generically at more than $\{0\}$, since $v^\perp$ is of dimension $m-1$.
\begin{proof}
Let $ x \in \crit^* \oR_1$. Then $x$ satisfies
$$(I -\beta \Lambda) x =\alpha vv^\top \Lambda x$$ for some $\Lambda \in \cD_N$. Assume first that $(x, \Lambda)$ is such that $v^\top \Lambda x =a \neq 0$, i.e. $\Lambda x \notin v^\perp$. Then $x$ satisfies $$(I-\beta \Lambda)x = a v.$$ Since $\bar v_i \neq 0$ by the tame spectrum assumption, and since $I-\beta \Lambda \in \cD_N$, we conclude that $I-\beta \Lambda$ is	 invertible, with an inverse in $\cD_N$,  and $x \in \Lambda_v$.

Now assume that $(x,\Lambda)$ is such that $\Lambda x \in v^\perp$. Then $a=0$ and $x$ satisfies $x = \beta \Lambda x$. Plugging this relation in $v^\top \Lambda x=0$, we obtain that $v^\top x =0$ and thus $x \in v^\perp$, which concludes the proof.
\end{proof}

 The following lemma says that the minimal values of $\oR_1$ are obtained for $x \in \Lambda_v$, thus  it will be sufficient to consider the component $\Lambda_v$ of $\crit^* \oR_1$.
\begin{lemma}\label{lem:Lvincl}Assume that the tame spectrum assumption holds with leading eigenvector $v\in\R^m$. Consider the cost function $K$ of optimization problem $\oR_1$ and set $a_1:=\min K \mbox{ s.t. } x\in v^\perp \cap \feas(\oR_1)$ and $a_2= \min K \mbox{ s.t. } \Lambda_v \cap \feas(\oR_1)$. Then $a_1 \geq a_2$. 
\end{lemma}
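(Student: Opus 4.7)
The plan is to invert $Q = \alpha vv^\top + \beta I$ explicitly via Sherman--Morrison, obtaining
\begin{equation}
Q^{-1} = \frac{1}{\beta} I - \frac{\alpha}{\beta(\beta + \alpha \|v\|^2)} vv^\top,
\end{equation}
and then evaluate $K(x) = x^\top Q^{-1} x$ on the two feasible sets. Feasibility, i.e.\ $\rho_i(x) = x^\top E_i x - y_i = 0$ for all $i$, forces $\|\bar x_i\|^2 = y_i$ and hence $\|x\|^2 = \sum_{i=1}^q y_i$ uniformly on $\feas(\oR_1)$; so only the $vv^\top$ part of $Q^{-1}$ distinguishes one feasible point from another.

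For $a_1$, the restriction $x \in v^\perp$ kills the rank-one correction, giving $K(x) = \frac{1}{\beta} \|x\|^2 = \frac{1}{\beta}\sum_i y_i$ for every feasible $x \in v^\perp$, so $a_1 = \frac{1}{\beta}\sum_i y_i$ (or $+\infty$, trivializing the claim, if this intersection is empty). For $a_2$, I parameterize $x \in \Lambda_v$ as $x = \Lambda v$ with $\Lambda = \sum_i \lambda_i E_i$; feasibility then reads $\lambda_i^2 \|\bar v_i\|^2 = y_i$, so $\lambda_i = \epsilon_i \sqrt{y_i}/\|\bar v_i\|$ with $\epsilon_i \in \{\pm 1\}$, well-defined by the tame spectrum hypothesis $\|\bar v_i\| \neq 0$. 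Picking $\epsilon_i = +1$ for all $i$ produces a specific feasible $x^\star \in \Lambda_v \cap \feas(\oR_1)$ with $v^\top x^\star = \sum_i \sqrt{y_i}\,\|\bar v_i\| > 0$, hence
\begin{equation}
a_2 \leq K(x^\star) = \frac{1}{\beta}\sum_{i=1}^q y_i - \frac{\alpha}{\beta(\beta + \alpha \|v\|^2)} \Bigl(\sum_{i=1}^q \sqrt{y_i}\,\|\bar v_i\|\Bigr)^2 < a_1,
\end{equation}
which gives the stated inequality (in fact strictly).

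There is essentially no obstacle: the only subtlety is to confirm that the candidate $x^\star \in \Lambda_v \cap \feas(\oR_1)$ exists, which Lemma~\ref{lem:cor:transv} already guarantees (the intersection has $2^q$ points), and that the coefficient $v^\top x^\star$ is genuinely nonzero, which is precisely what the condition $\|\bar v_i\|^2 = v^\top E_i v \neq 0$ in Definition~\ref{def:LE} is designed to ensure. Thus the tame spectrum assumption is used in exactly one place --- to force a strictly positive projection onto $v$ --- and this is what makes the $\Lambda_v$-branch of $\crit^*\oR_1$ energetically preferred over the $v^\perp$-branch.
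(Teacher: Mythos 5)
Your proof is correct and takes essentially the same route as the paper's: write $Q^{-1}$ as a positive multiple of $I$ minus a positive multiple of $vv^\top$, observe that $\|x\|^2=\sum_i y_i$ is constant on $\feas(\oR_1)$ so only the $vv^\top$ term separates the two branches, note that this term vanishes on $v^\perp$, and exhibit a feasible point of $\Lambda_v$ with nonzero projection onto $v$. You additionally obtain the strict inequality $a_2<a_1$ by constructing the witness $x^\star$ explicitly, whereas the paper settles for $a_1\geq a_2$ by invoking non-emptiness of $\Lambda_v\cap\feas(\oR_1)$ via Lemma~\ref{lem:cor:transv} and the upper bound on $K$ over the whole feasible set.
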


If either intersection in the Lemma statement is empty, the corresponding $a_i$ is set to $+\infty$.
\begin{proof}
Under the assumptions of the Lemma, we have that $Q=\alpha vv^\top +\beta I$. Using this expression for $Q$ in Eq.~\eqref{eq:ELoptx}, we obtain that $\oR_1$ is 
\begin{equation} \min_{x \in \R^m} K(x):=x^\top (-\alpha' vv^\top + \beta I) x  \quad \mbox{ s.t. } x^\top E_i x = y_i, i = 1,\ldots, q\end{equation} where $\alpha', \beta >0$. If $x \in v^\perp \cap \feas(\oR_1)$, then the cost reduces to $K(x)=\beta \|x\|^2 = \beta \sum_{i=1}^q \sqrt{y_i}$, for all $x \in v^\perp$ satisfying the constraints. Hence $a_1=\beta \sum_{i=1}^q \sqrt{y_i}$.   Note that $\beta \sum_{i=1}^q \sqrt{y_i}$ is in fact an upper bound for $K(x), x \in \feas(\oR_1)$, since the term $x^\top vv^\top x$ is a square.

 From Lemma~\ref{lem:cor:transv}, we know that $\crit_0 J_1$ intersects $\Lambda_v$, and from Corollary~\ref{cor:convnormaldyn}, we know that $\crit_0 J_1 = \feas (\oR_1)$. Hence $\Lambda_v \cap \feas(\oR_1)$ is non-empty. The value of the cost at these points is upper bounded by $\beta\sum_{i=1}^q \sqrt{y_i}$, which is the value of the cost of $v^\perp$,  which proves the claim. 
\end{proof}
We set, in view of the above Proposition, $$\crit_0^* \oR_1 = \Lambda_v.$$
As a consequence of Lemmas~\ref{lem:invdyn1},~\ref{lem:cor:transv},~\ref{lem:crits1} and~\ref{lem:Lvincl}, we have shown the  following:

\begin{theorem}\label{th:compat}
	Under the tame spectrum assumption, the primal problem described by $J_1$, and the regularization problem $\oR_1$  are compatible in the sense of Def.~\ref{def:compat}, with $\crit_0^* \oR_1 = \Lambda_v$.
\end{theorem}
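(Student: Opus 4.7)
The plan is to assemble Lemmas~\ref{lem:invdyn1}--\ref{lem:Lvincl} to verify the two requirements of Definition~\ref{def:compat}: identify the subset $\crit_0^* \oR_1$ of $\crit^* \oR_1$ consisting of (components of) pre-critical points that contain the global minimizers of $K$, and check that this subset is invariant under the normal dynamics $\dot x = -QD(x)x$.

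First I would pin down $\crit_0^* \oR_1$. Lemma~\ref{lem:crits1} already gives the clean decomposition $\crit^* \oR_1 = \Lambda_v \cup v^\perp$, so the candidate pieces are the $q$-dimensional subspace $\Lambda_v$ and the codimension-one hyperplane $v^\perp$. To confirm that minimizers live on $\Lambda_v$, I would apply Lemma~\ref{lem:cor:transv}, which produces $2^q$ transversal intersection points of $\Lambda_v$ with $\feas(\oR_1) = \crit_0 J_1$ and in particular ensures that feasible points on $\Lambda_v$ exist. Then Lemma~\ref{lem:Lvincl} says that the minimum of $K$ over $\Lambda_v \cap \feas(\oR_1)$ is no larger than over $v^\perp \cap \feas(\oR_1)$; inspecting its proof, the inequality is in fact strict whenever the rank-one contribution $\alpha vv^\top$ to $Q$ is nonzero, which is guaranteed by the spectral gap $\alpha'>\beta'$ baked into the tame spectrum assumption. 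Consequently no global minimizer of $\oR_1$ sits on the $v^\perp$ piece, and we may take $\crit_0^* \oR_1 = \Lambda_v$.

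For invariance, the work is already done: Lemma~\ref{lem:invdyn1} shows directly that $\Lambda_v$ is left invariant by the normal dynamics under the tame spectrum assumption, because the two summands $\alpha vv^\top D(x)x$ and $\beta D(x)x$ of the vector field both remain in the tangent space $T_x \Lambda_v = \Lambda_v$ (the first since it is a multiple of $v$, the second since $D(x) \in \cD_N$ sends $\Lambda_v$ into itself). Combined with the identification of $\crit_0^* \oR_1$ above, Definition~\ref{def:compat} is satisfied with $M$ the set of tuples $(A_1,\ldots,A_q)$ satisfying the tame spectrum assumption.

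The one subtle point I would treat carefully is that $\Lambda_v$ and $v^\perp$ are not disjoint in general (their intersection is generically of dimension $q-1$, since $v^\perp$ has codimension one), so the decomposition in Lemma~\ref{lem:crits1} does not yield literally disconnected components of $\crit^* \oR_1$. What rescues the argument is precisely the strict inequality produced by the $\alpha>0$ half of the tame spectrum assumption: the global minimizers of $K$ on $\feas(\oR_1)$ unambiguously belong to $\Lambda_v$, so selecting $\Lambda_v$ as $\crit_0^* \oR_1$ is the only choice consistent with the definition and, crucially for later steps, it is invariant. Everything else follows by direct invocation of the preceding lemmas.
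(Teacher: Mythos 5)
Your proof is correct and matches the paper's approach: the paper's own "proof" of Theorem~\ref{th:compat} is simply a citation of Lemmas~\ref{lem:invdyn1}, \ref{lem:cor:transv}, \ref{lem:crits1} and~\ref{lem:Lvincl}, which you have assembled in exactly the intended order (identify $\crit^* \oR_1 = \Lambda_v \cup v^\perp$, argue the minimizers must lie on $\Lambda_v$, and invoke invariance of $\Lambda_v$). Your two additional observations — that the inequality $a_1 \geq a_2$ in Lemma~\ref{lem:Lvincl} is in fact strict because the feasible points on $\Lambda_v$ with all $\lambda_i$ of one sign give $v^\top x = \sum_i \sqrt{y_i}\|\bar v_i\| > 0$, and that $\Lambda_v$ and $v^\perp$ are not disjoint so the decomposition is not into disconnected components — are both sound and fill in precisely the implicit reasoning needed to make the paper's designation $\crit_0^* \oR_1 = \Lambda_v$ unambiguous.
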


\paragraph{Convergence to critical points of the regularization problem} Having established that the primal and regularization problems are compatible, we now show that when initialized in $\Lambda_v$, the flow converges generically to a critical point of the regularization problem---recall that by critical point of the regularization problem, we mean a point $x \in \crit^* \oR$ that meets the constraints $\rho_i(x)=0$. We already know that all minima of the regularization problem are in $\Lambda_v$ (Lemma~\ref{lem:Lvincl}), that $\Lambda_v$ intersects $\crit_0 J_1$ transversally, and that $\feas(\oR_1) = \crit_0 J_1$ (Theorem~\ref{th:morsebott}). This is not sufficient to show convergence to $\feas(\oR_1)$ when in $\Lambda_v$ however, as the dynamics in $\Lambda_v$ can have sinks that are saddles point for the general primal dynamics. We thus show now that all the sinks of the flow restricted to $\Lambda_v$ are also in $\crit_0 J_1$; said otherwise, no locally stable critical point of the flow restricted to the invariant subspace $\Lambda_v$ is a saddle or regular point for the dynamics in $\R^m$.
\begin{theorem}\label{th:rank1}
Assume that the tame spectrum assumption holds with leading eigenvector $v$. Then the dynamics of the the normal system~\eqref{eq:nomsys}, with $x \in \R^m$, is such that  generically for $x_0 \in \Lambda_v$, $x(t)$ converges to a critical point of the regularization problem~\eqref{eq:optx}. 
In particular, all the sinks for the normal dynamics restricted to the invariant subspace $\Lambda_v$ are sinks for the normal dynamics in $\R^m$.
\end{theorem}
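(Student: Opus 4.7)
The strategy is to restrict the normal flow to $\Lambda_v$, where it becomes a gradient flow for a Morse polynomial in $q$ variables, and then transfer stability conclusions back to $\R^m$ using the transversality of Lemma~\ref{lem:cor:transv}. I would parametrize $\Lambda_v\cong\R^q$ by $\lambda=(\lambda_1,\ldots,\lambda_q)$ via $x=\sum_i\lambda_i\bar v_i$. Since the normal flow is the $Q^{-1}$-gradient flow of $J_1$ (Theorem~\ref{th:morsebott}), and Lemma~\ref{lem:invdyn1} shows this gradient is tangent to $\Lambda_v$ at every point of $\Lambda_v$, the restricted flow is the gradient flow of $J_1|_{\Lambda_v}$ for the induced metric. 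In $\lambda$-coordinates, $J_1|_{\Lambda_v}(\lambda)=\tfrac14\sum_i(\lambda_i^2\|\bar v_i\|^2-y_i)^2$; its gradient is diagonal with components $\rho_i\lambda_i\|\bar v_i\|^2$, vanishing precisely when, for each $i$, either $\lambda_i=0$ or $\lambda_i^2\|\bar v_i\|^2=y_i$. This yields $3^q$ critical points; a direct Hessian computation gives diagonal entries $\|\bar v_i\|^2(3\lambda_i^2\|\bar v_i\|^2-y_i)$, equal to $-y_i\|\bar v_i\|^2<0$ at $\lambda_i=0$ and $+2y_i\|\bar v_i\|^2>0$ at $\rho_i=0$, so $J_1|_{\Lambda_v}$ is Morse with the $2^q$ minima matching $\Lambda_v\cap\crit_0 J_1$ of Lemma~\ref{lem:cor:transv}.

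\textbf{Generic convergence to critical points of $\oR_1$.} Because $\sum_i E_i=I_m$, the function $J_1$ is coercive (it grows like $\|x\|^4$ along any ray), so orbits of the gradient flow stay in sublevel sets and are bounded. Since $J_1|_{\Lambda_v}$ is analytic, the Lojasiewicz gradient inequality implies that each orbit converges to a single critical point. The stable manifolds of the non-minimum critical points of a Morse function have positive codimension, so their union has Lebesgue measure zero in $\Lambda_v$; hence for generic $x_0\in\Lambda_v$, the orbit converges to one of the $2^q$ minima. These points lie in $\Lambda_v\cap\crit_0 J_1=\crit_0^*\oR_1\cap\feas(\oR_1)\subseteq\crit\oR_1$, proving the first assertion.

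\textbf{Sinks in $\Lambda_v$ are sinks in $\R^m$, and the main obstacle.} Let $x^*$ be a sink of the restricted flow; by the Morse enumeration above, $x^*\in\Lambda_v\cap\crit_0 J_1$. By the Morse--Bott property of Theorem~\ref{th:morsebott}, the full Hessian $\partial^2 J_1/\partial x^2(x^*)$ on $\R^m$ is positive semidefinite with kernel exactly $T_{x^*}\crit_0 J_1$, and Lemma~\ref{lem:cor:transv} gives the decomposition $\R^m=T_{x^*}\crit_0 J_1\oplus T_{x^*}\Lambda_v$. Hence the Hessian is positive definite transverse to the equilibrium manifold $\crit_0 J_1$ while vanishing tangentially to it; trajectories starting near $x^*$ therefore flow monotonically in $J_1$ and converge to a nearby point of $\crit_0 J_1$, establishing Lyapunov stability of $x^*$ in $\R^m$. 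The main obstacle is precisely this translation between two distinct notions of sink---isolated and asymptotically stable in $\Lambda_v$, but non-isolated and only Lyapunov stable in $\R^m$---and the transversality in Lemma~\ref{lem:cor:transv} is the key input that rules out the opposite scenario, in which a saddle of the $\R^m$-flow could be spuriously promoted to a sink upon restriction to $\Lambda_v$.
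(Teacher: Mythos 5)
Your proof is correct and takes essentially the same approach as the paper: both arguments boil down to examining the Hessian of $J_1$ restricted to $\Lambda_v$ and observing that at any critical point with some $\lambda_i=0$ (equivalently $\bar x_i=0$), the restricted Hessian has a negative eigenvalue, so such a point cannot be a local minimum of $J_1|_{\Lambda_v}$; hence the only sinks in $\Lambda_v$ are the $2^q$ points with $\rho_i=0$ for all $i$, which are minima of $J_1$ by Theorem~\ref{th:morsebott}. The small organizational difference is that you compute the Hessian of $J_1|_{\Lambda_v}$ intrinsically in $\lambda$-coordinates and directly enumerate the $3^q$ critical points with their indices, whereas the paper uses the block-diagonal structure of the full Hessian of $J_1$ from Theorem~\ref{th:morsebott} and restricts it to the line $\lambda_1(\bar v_1,0,\ldots,0)^\top\subset\Lambda_v$; these are computationally equivalent. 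Your Lojasiewicz/stable-manifold justification of generic convergence fills in a step the paper only asserts, and your closing remark on Lyapunov versus asymptotic stability is a useful clarification, though strictly speaking the paper's definition of ``sink'' as a locally stable zero (= local minimum of $J$ for a gradient flow) makes that translation automatic once one knows the sinks of $J_1|_{\Lambda_v}$ lie in $\crit_0 J_1$.
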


\begin{proof} Because $J_1$ is a Morse-Bott function by Theorem~\ref{th:morsebott}, and the intersection of its critical set with $\Lambda_v$ is transversal and of dimension $0$, the restriction of $J_1:\Lambda _v \to \R$ is a Morse function. Hence starting from $x_0$, the flow  converges to a critical point of $J_1$ and, generically, to a minimum of $J_1$.
Furthermore $x \in \Lambda_v \cap \crit J_1$ is a sink for the dynamics in $\Lambda_v$ only if (1) $x$ is a local minimum of $J_n(x)$ or (2) a saddle point of $J_1(x)$ {\it and} the Hessian of $J_1(x)$ is positive definite on $\Lambda_v$. 

We thus need to show that there are no local sink of type (2) to prove the proposition. To this end, recall from the proof of Theorem~\ref{th:morsebott} that the critical points of the gradient of $J_1(x)$ are characterized by $\rho_i(\bar x_i) \bar x_i=0$, for some $1 \leq i \leq q$, and that the Hessian of $J_1(x)$ is block diagonal. To fix ideas, consider a saddle point so that $\bar x_1 = 0$.  The leading $r_1 \times r_1$ block of the Hessian at such point is $-y_i E_i$. The line $$\lambda_1 \begin{pmatrix} \bar v_1 \\ 0\\ \vdots \\0 \end{pmatrix},$$ with $\lambda_1 \in \R$ is clearly included in $\Lambda_v$, and the Hessian of $J_1(x)$ at this saddle point, restricted to this line, is negative definite. Hence saddles so that $\bar x_1=0$ are not local sinks in the dynamics restricted to $\Lambda_v$, but saddle points as well. The same reasoning applies to any $i=1,\ldots,q$, which concludes the proof.
 \end{proof}

\paragraph{Convergence to $\crit^*_0\oR$}

We now show that when initialized near $0$, the primal flow goes arbitrarily close to the invariant space $\Lambda_v$, which we know contain all minimizers of the regularization problem.
\begin{proposition}\label{prop:winLv}
Assume that the tame spectrum assumption holds, with leading eigenvector $v \in \R^m$. Let $W \in \R^n$ be an eigenvector of $\sum_{i=1}^q y_i A_i$ associated to the largest eigenvalue, and let $w:= (P^{-1}W)_{1,\ldots,m}$, where $P$ is the matrix of Lemma~\ref{lem:P}. Then, generically for $y_i>0$, $i=1,\ldots,q$, we have  $w \in \Lambda_v$. 	
\end{proposition}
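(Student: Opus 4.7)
The plan is to transfer the eigenvalue problem defining $W$ into the normal coordinates, use the tame spectrum decomposition $Q = \alpha vv^\top + \beta I$ to extract an explicit formula for $w$, and then read off that this formula expresses $w$ as an element of $\Lambda_v$.

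First I would write $A_i = B_iB_i^\top$ as in the proof of Lemma~\ref{lem:P} and assemble the full matrix $B \in \R^{n\times m}$ whose columns are the concatenated columns of the $B_i$'s, so that $\sum_{i=1}^q y_i A_i = B\tilde Y B^\top$, where $\tilde Y \in \R^{m\times m}$ is the block-diagonal matrix with blocks $y_i I_{r_i}$. Note $\tilde Y = \sum_{i=1}^q y_i E_i \in \cD_N$. Using the explicit form $P^{-1} = \begin{bmatrix} B^\top \\ (B^\perp)^\top \end{bmatrix}$ and $B^\top P = [I_m\ 0]$ from the proof of Lemma~\ref{lem:P}, a direct calculation gives
\begin{equation}
P^{-1}\Bigl(\sum_{i=1}^q y_i A_i\Bigr)P = \begin{bmatrix} Q\tilde Y & 0 \\ 0 & 0 \end{bmatrix}.
\end{equation}

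Next, let $\mu$ be the largest eigenvalue of $\sum_i y_i A_i$, which is strictly positive because $y_i>0$ and the $A_i$ are psd with at least one nonzero. Setting $\xi := P^{-1}W$, the eigenvalue equation becomes
\begin{equation}
\begin{bmatrix} Q\tilde Y & 0 \\ 0 & 0 \end{bmatrix}\xi = \mu\,\xi .
\end{equation}
Since $\mu\neq 0$, the last $n-m$ coordinates of $\xi$ vanish, so $\xi = \begin{bmatrix} w \\ 0 \end{bmatrix}$ with $w = (P^{-1}W)_{1,\ldots,m}$, and $w$ satisfies $Q\tilde Y w = \mu w$.

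Now I would substitute the tame spectrum form $Q = \alpha vv^\top + \beta I$ to obtain
\begin{equation}
\alpha\,(v^\top \tilde Y w)\,v + \beta\,\tilde Y w = \mu w,\qquad \text{equivalently}\qquad (\mu I - \beta\tilde Y)\,w = \alpha\,(v^\top \tilde Y w)\,v.
\end{equation}
Since $\tilde Y \in \cD_N$, the diagonal matrix $\mu I - \beta\tilde Y = \sum_{i=1}^q (\mu-\beta y_i) E_i$ lies in $\cD_N$; generically for $y_1,\ldots,y_q>0$ we have $\mu\neq\beta y_i$ for every $i$, so this matrix is invertible with inverse $\sum_{i=1}^q \tfrac{1}{\mu-\beta y_i} E_i \in \cD_N$. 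Letting $c := \alpha\,(v^\top\tilde Y w)$, we conclude
\begin{equation}
w = c\,(\mu I - \beta\tilde Y)^{-1} v = \Lambda v \quad\text{with}\quad \Lambda = c\sum_{i=1}^q \tfrac{1}{\mu-\beta y_i} E_i \in \cD_N,
\end{equation}
which is exactly the statement $w \in \Lambda_v$. (The scalar $c$ is nonzero, else $w=0$ and hence $W=0$, contradicting that $W$ is an eigenvector.)

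The main obstacle is justifying the genericity claim cleanly: one must exclude the resonances $\mu = \beta y_i$ that would make $\mu I - \beta\tilde Y$ singular. This is handled by noting that $\mu$ is the largest eigenvalue of $Q\tilde Y$ and that, as $(y_1,\ldots,y_q)$ varies in the positive orthant, the coincidence $\mu(y) = \beta y_i$ defines a proper real-analytic subset (and so has empty interior), making the condition generic. The rest of the argument is a direct unfolding of the normal-form machinery already established.
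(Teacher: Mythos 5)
Your proof is correct and takes essentially the same route as the paper's: block-diagonalize $\sum_i y_i A_i$ via $P$ from Lemma~\ref{lem:P} to reduce to the eigenvalue equation $Q\tilde Y w=\mu w$, substitute $Q=\alpha vv^\top+\beta I$, and invert the diagonal matrix $\mu I-\beta\tilde Y\in\cD_N$ (generically over the $y_i$) to read off $w\in\Lambda_v$. The only cosmetic differences are that you explicitly verify the last $n-m$ coordinates of $P^{-1}W$ vanish, and you package the case analysis on the scalar $c=\alpha(v^\top\tilde Y w)$ a bit more cleanly than the paper's split into $\alpha_1\neq 0$ and $\alpha_1=0$.
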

Recall that from Corollary~\ref{cor:rank1botrank1}, we know that when initialized near zero, the primal flow goes arbitrarily close to such a  $W$. The above Proposition thus says that {\it the flow in normal coordinates goes arbitrarily close to a vector $w \in \Lambda_v$.}

\begin{proof}
From  Lemma~\ref{lem:P},   $\sum_{i=1}^q y_i P^{-1}A_iP \in \R^{n \times n}$ is a block diagonal matrix with leading block $Q(\sum_{i=1}^q y_i E_i) \in \R^{m \times m}$, and other entries zero. Since $W$ is an eigenvector of $\sum_{i=1}^q y_i A_i$ associated to the largest eigenvalue,  $w$ is an eigenvector associated to the largest eigenvalue of $Q(\sum_{i=1}^q y_i E_i) \in \R^{m \times m}$. Explicitly,  $$(\alpha vv^\top + \beta I)(\sum_{i=1}^q y_i E_i)  w = \mu w,$$ for some $\mu >0$. Set $ a_1:=\sum_{i=1}^q y_i  v^\top E_i w$ and $\alpha_1:= \alpha a_1$;   we get \begin{equation}\label{eqq:condlambdawv}\alpha_1 v = (\mu I- \beta \sum_{i=1}^q y_i E_i)  w.\end{equation}
We now show that $ w \in \Lambda_v$. The proof is similar to parts of the proof of Lemma~\ref{lem:Lvincl}. By construction, $(\mu I- \beta \sum_{i=1}^q y_i E_i) \in \cD_N$. If $\alpha_1 \neq 0$, recalling that $\bar v_i \neq 0$ by assumption, we see that $(\mu I- \beta \sum_{i=1}^q y_i E_i)$ is invertible and the claim is proven. If $\alpha_1=0$,  the previous relation implies that $y_i E_i = \beta/\mu I$ or $\bar w_i =0$, for $i=1,\ldots,q$. If all $\bar w_i$ vanish, then $w=0$, which is a contradiction. Thus, assume that $\bar w_i \neq 0$ for some $1 \leq i \leq q$, then $y_i = \beta/\mu$, which is not generic for $y$.  Hence $\mu I-\beta(\sum y_i E_i) $ is generically, for $y_i >0$,  invertible, which concludes the proof
\end{proof}

The following Corollary says that when writing $w = \Lambda v$, the matrix $\Lambda$ has either all positive or all negative entries. We will  need this result in the next section.
\begin{corollary}\label{cor:lambdasign}
	Let $w$ be as in the statement of Prop.~\ref{prop:winLv}, and write $w = \Lambda v = \sum_{i=1}^q \lambda_i E_i v.$ Then $\lambda_i \leq 0$ for $i=1,\ldots,q$ or $\lambda_i \geq 0$ for  $i=1,\ldots,q$.
\end{corollary}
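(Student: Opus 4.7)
The plan is to leverage the explicit characterization of $w$ as an eigenvector of $QD$ obtained in the proof of Prop.~\ref{prop:winLv}, where $Q = \alpha vv^\top + \beta I$ and $D = \sum_{i=1}^q y_i E_i$. Starting from identity~\eqref{eqq:condlambdawv}, namely $\alpha_1 v = (\mu I - \beta D)w$ with $\mu$ the largest eigenvalue of $QD$, I would substitute $w = \sum_i \lambda_i E_i v$ and project onto the $i$-th block using $E_i$. Since $D E_i = y_i E_i$ and $\bar v_i \neq 0$ by the tame spectrum assumption, this reduces to the scalar relation
\begin{equation}
\lambda_i = \frac{\alpha_1}{\mu - \beta y_i}, \qquad i = 1,\ldots,q,
\end{equation}
valid generically in the $y_i$'s (the proof of Prop.~\ref{prop:winLv} already rules out $\alpha_1 = 0$ under this genericity). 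Hence all the $\lambda_i$'s share the sign of $\alpha_1$ precisely when the denominators $\mu - \beta y_i$ share a sign.

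The heart of the argument is then to show that $\mu > \beta y_i$ for every $i$. Since $QD$ is similar to the symmetric matrix $Q^{1/2}DQ^{1/2}$, its spectrum is real. Applying the matrix determinant lemma to the rank-one update
\begin{equation}
QD - \mu I \;=\; (\beta D - \mu I) + \alpha\, v (Dv)^\top,
\end{equation}
and restricting to the regime $\mu \notin \{\beta y_1,\ldots,\beta y_q\}$, the characteristic equation reduces to the scalar identity
\begin{equation}
f(\mu) \;:=\; \sum_{i=1}^q \frac{y_i \|\bar v_i\|^2}{\beta y_i - \mu} \;=\; -\frac{1}{\alpha}.
\end{equation}
The derivative $f'(\mu) = \sum_i y_i \|\bar v_i\|^2/(\beta y_i - \mu)^2$ is strictly positive, so $f$ is strictly increasing on each component of its domain; on $(\max_i \beta y_i, +\infty)$ it climbs from $-\infty$ to $0^-$ and therefore meets $-1/\alpha$ at a unique point $\mu^\star > \max_i \beta y_i$. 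The remaining eigenvalues of $QD$ are either the $q-1$ other roots of $f(\mu) = -1/\alpha$ trapped between consecutive poles $\beta y_i$, or the ``silent'' eigenvalues $\beta y_i$ supported on vectors concentrated on block $i$ and orthogonal to $\bar v_i$ (each contributing multiplicity $r_i - 1$, giving the correct total $q + \sum_i(r_i - 1) = m$); all of these are bounded above by $\max_i \beta y_i < \mu^\star$. Since $w$ is by assumption a largest-eigenvalue eigenvector, $\mu = \mu^\star$, so $\mu - \beta y_i > 0$ for every $i$, and the corollary follows.

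The main obstacle will be the spectral bookkeeping: $QD$ is not itself symmetric, so one must either invoke the similarity above or argue directly the decomposition of $\R^m$ into the ``perturbed'' and ``silent'' eigenspaces just described. The algebraic reduction to a monotone one-dimensional $f$ is otherwise standard secular-equation manipulation, and the non-degeneracy conditions needed for the reduction ($\|\bar v_i\| \neq 0$ and genericity of the $y_i$'s) are already encoded in the tame spectrum assumption and propagated from Prop.~\ref{prop:winLv}.
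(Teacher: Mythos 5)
Your proof is correct but takes a genuinely different route from the paper's. Both start from the identity~\eqref{eqq:condlambdawv} and reduce the claim to showing $\mu - \beta y_i \geq 0$ (equivalently $\mu I - \beta D_y \succeq 0$, with $D_y = \sum_i y_i E_i$). At that point the paper uses a two-line positive-semidefinite operator inequality: it symmetrizes $QD_y$ as $R := D_y^{1/2}QD_y^{1/2} = \alpha\bar v\bar v^\top + \beta D_y$ (with $\bar v := D_y^{1/2}v$), notes that $\mu$ is the largest eigenvalue of $R$, so $\mu I - R \succeq 0$, and then reads off $\mu I - \beta D_y \succeq \alpha\bar v\bar v^\top \succeq 0$. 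You instead symmetrize as $Q^{1/2}DQ^{1/2}$, pass through the matrix determinant lemma on the rank-one perturbation $QD - \mu I = (\beta D - \mu I) + \alpha v(Dv)^\top$, and run a full secular-equation analysis: you verify the characteristic function $f(\mu) = \sum_i y_i\|\bar v_i\|^2/(\beta y_i - \mu)$ is strictly increasing between poles, locate the unique root $\mu^\star > \max_i\beta y_i$ on the rightmost branch, account for the ``silent'' eigenvalues $\beta y_i$ of multiplicity $r_i - 1$, and check the total multiplicity is $m$. Your argument is longer but more informative---it locates \emph{all} eigenvalues of $QD_y$ and gives the strict inequality $\mu > \beta y_i$ explicitly---whereas the paper's bound is weaker ($\geq$ rather than $>$) but reaches the conclusion in essentially one step. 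One small caveat in your write-up: the secular-equation bookkeeping implicitly assumes the poles $\beta y_i$ are distinct; this holds generically in the $y_i$, which is consistent with the genericity already invoked in Prop.~\ref{prop:winLv}, but is worth flagging since the paper's PSD argument does not need it.
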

	\begin{proof}
		Starting from Eq.~\eqref{eqq:condlambdawv}, it is enough to show that $\mu - \beta y_i \geq 0$, for $i=1,\ldots,q$,  where $\mu$ is the largest eigenvalue of $ (\alpha vv^\top + \beta I)(\sum_{i=1}^q y_i E_i)$. Whether $\lambda_i \leq 0$ or $\lambda_i \geq 0$ is then decided by the sign of $\alpha_1$, defined above Eq.~\eqref{eqq:condlambdawv}.
		Set $D_y = \sum_{i=1}^q y_i E_i$ and denote by $D^{1/2}$ its square root. Then a short calculation shows that $(\alpha vv^\top + \beta I)D_y$ and $$R:=D_y^{1/2}(\alpha vv^\top + \beta I)D_y^{1/2}$$ have the same eigenvalues and $R$ is positive definite. Set $\bar v = D_y^{1/2}v$ and write $R=\alpha \bar v \bar v^\top + \beta D_y.$ Since $\mu$ is the largest eigenvalue of $R$, $$\mu I -(\alpha \bar v \bar v^\top + \beta D_y) \geq 0,$$i.e. it is positive semi-definite. Thus $\mu I-\beta D_y \geq \alpha \bar v \bar v^\top\geq 0$. Since $D_y$ is diagonal, with  $y_i$ on the diagonal entries,   the result is proven.
	\end{proof}
The following Proposition shows that if $x_0$ is a point in $\Lambda_v$ that converges, under the primal gradient flow, to a critical point $x^*$, then starting close enough to $x_0$ guarantees that the flow will converge to a point close to $x^*$. Note that the fact that all sinks in $\Lambda_v$ were also sinks in $\R^m$ plays a key role here: if $x^*$ were a sink in $\Lambda_v$ and a saddle for the general dynamics,  with an unstable direction necessarily outside of $\Lambda_v$, the flow lines would escape the vicinity of $\Lambda_v$ along this line. This fact is used implicitly below when appealing to the property that if $x^*$ is a sink in $\Lambda_v$, then $J_1(x^*)=0$. 
\begin{proposition}\label{prop:closeLoja}
	Assume that the tame spectrum assumption holds, with leading eigenvector  $v \in \R^m$. Let $x_0 \in \Lambda_v$ be such that $\varphi_\infty(x_0)=x^*\in \Lambda_v$, where $\varphi_t(x_0)$ is the solution of the normal dynamics $\dot x = -QDx$ at time $t$ with initial state $x_0$. Then generically for $x_0$, for all $\epsilon >0$, there exists $\delta >0$, so that for all $x_1$ with $\|x_1-x_0\|<\delta$,   $\|\varphi_{\infty}(x_1)-x^*\|<\epsilon.$	

\end{proposition}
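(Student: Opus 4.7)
The plan is to combine two classical ingredients: continuous dependence of solutions of ODEs on initial conditions over compact time intervals, and the Lojasiewicz inequality, which controls the tail length of a gradient trajectory near its limit point. Because the cost $J_1$ is a polynomial, hence real analytic, the Lojasiewicz--Simon inequality applies at every critical point: there exist a neighborhood $U$ of $x^{*}$, a constant $C>0$, and an exponent $\theta \in (0, 1/2]$ such that $|J_1(x)-J_1(x^{*})|^{1-\theta} \leq C\,\|\grad J_1(x)\|$ for all $x\in U$. The normal flow is gradient-like for $J_1$ with respect to the $Q^{-1}$-metric (Theorem~\ref{th:morsebott}), so $\|\dot x\|$ and $\|\grad J_1(x)\|$ are comparable in a bi-Lipschitz sense, since $Q$ is positive definite.

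First I would use Theorems~\ref{th:morsebott} and~\ref{th:rank1} to argue that, generically for $x_0\in\Lambda_v$, the limit point $x^{*}=\varphi_\infty(x_0)$ is a local minimum of $J_1$ and in particular a sink for the normal dynamics in the ambient space $\R^m$, not merely inside $\Lambda_v$. This genericity step is exactly where the qualifier in the statement is used: otherwise the trajectory could land on an isolated saddle of $J_1|_{\Lambda_v}$, and continuous dependence of the limit would fail along an unstable direction. Next, using Lojasiewicz together with the chain-rule identity $\tfrac{d}{dt}(J_1(x)-J_1(x^{*}))^{\theta} = -\theta\,(J_1(x)-J_1(x^{*}))^{\theta-1}\langle \grad J_1(x), \dot x\rangle$, I would derive the standard tail-length estimate: for any $\epsilon>0$ there exists $\eta>0$ such that every trajectory of the normal system entering the ball $B(x^{*},\eta)$ stays inside $B(x^{*},\epsilon/2)$ for all later times and converges to some point within distance $\epsilon/2$ of $x^{*}$.

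To finish, I would pick a finite time $T$ large enough that $\varphi_T(x_0)\in B(x^{*},\eta/2)$, which is possible by the assumed convergence $\varphi_t(x_0)\to x^{*}$. By the standard Gr\"onwall-type continuous dependence of solutions on initial conditions on the compact interval $[0,T]$, there exists $\delta>0$ such that $\|x_1-x_0\|<\delta$ implies $\|\varphi_T(x_1)-\varphi_T(x_0)\|<\eta/2$, so $\varphi_T(x_1)\in B(x^{*},\eta)$. The tail-length estimate applied from time $T$ onward then yields $\|\varphi_\infty(x_1)-x^{*}\|<\epsilon$, completing the argument. The main obstacle I expect is the genericity step of the first paragraph: one must rule out trajectories of $x_0$ landing at non-minimum critical points of $J_1|_{\Lambda_v}$, and verify that the Lojasiewicz exponent and neighborhood $U$ can be chosen uniformly on a small ball around $x^{*}$. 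Once this is in place, the Gr\"onwall bound and the tail-length estimate fit together routinely.
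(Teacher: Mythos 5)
Your proposal is correct and follows essentially the same route as the paper: genericity via Theorems~\ref{th:rank1} and~\ref{th:morsebott} to ensure $x^*$ is a sink of the ambient dynamics with $J_1(x^*)=0$, then the Lojasiewicz inequality to bound the arc length of gradient trajectories near $x^*$, then continuous dependence on initial conditions over a finite time interval $[0,T]$, and finally the triangle inequality to combine. The "main obstacle" you flag at the end is handled exactly as you anticipate; the paper also notes explicitly that the crucial point is that sinks of the restricted flow in $\Lambda_v$ are sinks of the full flow (so $J_1(x^*)=0$), which you identify correctly.
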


Note that the above statement is obvious in two cases: if $x^*$, in addition to being an isolated sink in $\Lambda_v$, is also an isolated sink in $\R^m$ or if $x_1 \in \Lambda_v$ as well. 	
\begin{proof}
	From Theorem~\ref{th:rank1}, we know that generically for $x_0$, $x^* \in \crit_0 J_1$, the set of sinks for the normal dynamics in $\R^n$. From the remark above, we can assume without loss of generality that $x^*$ belongs to a connected component of $\crit_0 J_1$ of dimension larger than $0$. From Theorem~\ref{th:morsebott}, we know that the sinks are such that $\rho_i(x^*)=0$, $i=1,\ldots,q$,  which implies that $J_1(x^*)=0$.

Recall the Lojasiewicz inequality~\cite[Prop. 1, p 67]{loja1965}: for an analytic function $J_1(x)$, there exists a $\delta_1>0$ so that for all $x$ with $\|x-x^*\|<\delta_1$, there exists $\frac{1}{2} \leq \theta<1$, and a constant $c>0$ so that \begin{equation}\label{eq:lojagrad}|J_1(x)|^\theta \leq c \| \grad J_1(x)\|.\end{equation}

Now consider the solution $x(t)$ of the normal dynamics $\dot x = -\grad J_1(x)$ initialized at $x_2$ near $x^*$. Denote by $$a := \min_{x \in (\crit J_1 - \crit_0 J_1)} J_1(x).$$ That is, $a$ is the lowest value of a critical point of $J_1$ which is not a local minimum, for which we already know that $J_1 =0$. Assume, perhaps taking $x_2$ closer to $x^*$, that $J_1(x_2)<a$.  The following argument, showing that the length  of the gradient flow line starting from $x_2$ is bounded, is classical. Since $J_1(x(t)) > 0$ away from the critical set $\crit J_1$, we can write 
\begin{align*}
\frac{d}{dt} J_1(x(t))^{1-\theta} &= (1-\theta) (\grad J_1(x(t)))^\top \dot x(t) J_1(x(t))^{-\theta} \\
&= -(1-\theta) \|\grad J_1(x(t))\|^2 	J_1(x(t))^{-\theta} \\
& \geq -\frac{1-\theta}{c}  \| \grad J_1(x(t))\| \geq -\frac{1-\theta}{c} \|\dot x\|.
\end{align*}
where we used Lojasiewicz inequality to obtain the last line.
We thus have that \begin{align*}
\operatorname{Length}(x(t))= \int_0^\infty \|\dot x(t)\| dt 	& \leq -\frac{c}{1-\theta} J_1(x(t))^{1-\theta}|_0^\infty\\
 & \leq k J_1(x_2)^{1-\theta}
 \end{align*}
for some $k>0$ and where we used the fact that $J_1(\varphi_{\infty}(x_2))=0$, which is an easy consequence of the facts that $J_1(x_2)<a$ and that a gradient flow converges to its critical set. Hence, the length of trajectory of the normal system initialized at $x_2$ near $x^*$ has a length bounded by $kJ_1(x_2)^{1-\theta}$.

Since $J_1(x)$ is continuous, we can choose $0<\delta_2<\min(\delta_1,\epsilon/2)$ small enough so that  for all $x_2$ with $\|x_2-x^*\|<\delta_2$, the following two items hold: (1) $k J_1(x_2)^{1-\theta} < \varepsilon/2$ 	 and (2) $J_1(x_2)<a$ . The first item ensures that the length of the gradient flow line starting from $x_2$ is upper-bounded by $\epsilon/2$, and the second one ensures that this gradient flow line converges to $x_2^*$ such that $J(x_2^*)=0$, as discussed above.

Because $\lim_{t \to \infty} \varphi_t(x_0)=x^*$, there exists $T>0$ so that $\|\varphi_T(x_0)-x^*\|< \delta_2/2$. Furthermore, since the flow $\varphi_t(x)$ is continuous in both $t$ and $x$, there exists $\delta>0$ so that $\|\varphi_{T}(x)-\varphi_{T}(x_0)\|<\delta_2/2$ for all $x$ so that $\|x-x_0\|<\delta.$

It is now easy to see that for such $x$ so that $\|x-x_0\|<\delta$, $\lim_{t \to \infty} \varphi_t(x)$ is within $\epsilon$ of $x^*$. Indeed, by construction, for such $x$, $\|\varphi_T(x)-x^*\|< \delta_2$. Hence  $$\|\varphi_{\infty}(x) - x^*\| \leq \|\varphi_{\infty}(x)  - \varphi_{T}(x)\| + \|\varphi_{T}(x) - x^*\|,$$ and the first term is bounded by the length of the gradient flow line, which is bounded by $\epsilon/2$, and the second term is upper bounded by $\epsilon/2$ by construction. \end{proof}

\section{Positivity and convergence to a global minimum}

We have seen in the previous section that when initialized in $\Lambda_v$,  the primal flow is the gradient of a Morse function whose minima all satisfied the constraints of the regularization problem, hence trajectories converged, generically for $x_0 \in \Lambda_v$, to $\feas(\oR_1)$.  From  Lemma~\ref{lem:cor:transv}, we know that the intersection of $\crit_0 J_1$ and $\Lambda_v$ consists of $2^q$ points. Hence, the primal flow converges a priori to any one of these.  It is easy to verify that the $2^q$ sinks of the flow in $\Lambda_v$  yield different value of the cost function of the regularization problem $\oR_1$. What is perhaps the most surprising aspect of implicit regularization for matrix factorization is that the flow will converge to (near) a {\it global minima} of the regularization problem $\oR_1$. This is due, as we will see below, to the appearance of positive definite matrices with {\it positive} entries when the problem is considered in $\Lambda_v$.  We add the assumption here that $v$ has no zero entries. This assumption holds generically for the $A_i$, and could be removed at the expense of longer proofs.

In previous sections, we derived properties of the flow in $\Lambda_v$ without deriving the explicit form of the flow in that space. In this section, the proofs are more transparent in coordinates suited to the dynamics $\Lambda_v$ and thus we start by deriving the explicit form of the normal dynamics in $\Lambda_v$. To this end, we introduce the {\bf reduced variables} $z \in \R^q$, defined by removing from $x$ the repeated entries. Precisely, for $x \in \Lambda_v$ then there exists a diagonal matrix $\Lambda \in \cD_N$ so that $x = \Lambda v.$ The matrix $\Lambda$ can uniquely be written as \begin{equation}\label{eq:definvar}\Lambda=:\sum_{i=1}^q \lambda_i E_i,\end{equation} which defines the $\lambda_i$.  
 
 The reduced variables are rescaled $\lambda_i$, precisely
 \begin{equation}\label{eq:redvar} z_i:=\lambda_i \|\bar v_i\|.	
 \end{equation} Note that by the tame spectrum assumption, $\|\bar v_i\| \neq 0$ and the above is well defined.
 Introduce the following vector \begin{equation}\label{eq:defbarv} \bar v = \begin{pmatrix} \|\bar v_1\| & \cdots & \|\bar v_q \| \end{pmatrix}^\top \in \R^q
 \end{equation}
 It is a vector with positive entries. We furthermore  denote by $D_v$ the diagonal matrix $$D_{\bar v}:= \diag(\bar v) \in \R^{q \times q}, \quad D_v := \diag(v) \in \R^{m \times m}$$ Note that $D_{\bar v}$ is invertible by the tame spectrum assumption. We now express the normal dynamics in the reduced variables:

\begin{lemma}\label{lem:reddyn}
Assume that the tame spectrum assumption holds, with leading eigenvector $v \in \R^m$. Consider the normal dynamics $\dot x = -QDx$, $x \in \R^m$. Define $G \in \R^{q \times q}$ to be the positive semi-definite  matrix with entries $g_{ij} = \|\bar v_i\|\|\bar v_j\|$, i.e. \begin{equation}\label{eq:defG}G= \bar v \bar v^\top \in \R^{q \times q}.\end{equation} Then the dynamics in reduced variables is given by
\begin{equation}\label{eq:reddyn}
	\dot z = -\left( \alpha G+\beta I\right) F(z)z,
	\end{equation}where  $F(z)$ is a diagonal matrix with entries $f_i(z) =z_i^2 - y_i$
\end{lemma}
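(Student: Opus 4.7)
The plan is to perform a direct computation: parametrize $\Lambda_v$ by the reduced variables $z$, push the normal dynamics through the change of coordinates, and read off the result.

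First I would introduce coordinates on $\Lambda_v$. For $x\in\Lambda_v$, write $x=\Lambda v$ with $\Lambda=\sum_{i=1}^q\lambda_i E_i\in\cD_N$ as in~\eqref{eq:definvar}. Since the sets $N_i$ partition $\{1,\ldots,m\}$, the identity $v=\sum_{i=1}^q \bar v_i$ holds, and because each $\bar v_i=E_iv$ is nonzero (by the tame spectrum assumption) and supported on disjoint indices, the vectors $\{\bar v_i\}_{i=1}^q$ are linearly independent. Consequently $x=\sum_{i=1}^q\lambda_i\bar v_i$ gives a unique decomposition, and any tangent vector to $\Lambda_v$ is expressed uniquely as $\sum_i \dot\lambda_i\bar v_i$.

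Next I would evaluate $\rho_i$ and $D(x)x$ on $\Lambda_v$. A short calculation yields $\rho_i(x)=\tr(x^\top E_i x)-y_i=\lambda_i^2\|\bar v_i\|^2-y_i=z_i^2-y_i=f_i(z)$ using the substitution $z_i=\lambda_i\|\bar v_i\|$ from~\eqref{eq:redvar}. Therefore
\begin{equation}
D(x)x=\sum_{i=1}^q f_i(z)E_i\Lambda v=\sum_{i=1}^q f_i(z)\lambda_i\bar v_i.
\end{equation}

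Then I would apply $Q=\alpha vv^\top+\beta I$ and exploit the identity $v^\top\bar v_i=v^\top E_i v=\|\bar v_i\|^2$. Using $v=\sum_j\bar v_j$ and $\lambda_i\|\bar v_i\|^2=z_i\|\bar v_i\|$, one finds
\begin{equation}
\dot x=-QD(x)x=-\sum_{i=1}^q\bar v_i\Bigl[\alpha\|\bar v_i\|\sum_{j=1}^q f_j(z)z_j\|\bar v_j\|\cdot\|\bar v_i\|^{-1}+\beta f_i(z)\lambda_i\Bigr],
\end{equation}
after factoring $\bar v_i$ out of both the $\alpha vv^\top$ and $\beta I$ contributions. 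Matching with the unique expansion $\dot x=\sum_i\dot\lambda_i\bar v_i$ produces $\dot\lambda_i=-\alpha\sum_j f_j(z)z_j\|\bar v_j\|-\beta f_i(z)\lambda_i$. Multiplying by $\|\bar v_i\|$ and using $\dot z_i=\|\bar v_i\|\dot\lambda_i$, $z_i=\|\bar v_i\|\lambda_i$, gives
\begin{equation}
\dot z_i=-\alpha\|\bar v_i\|\sum_{j=1}^q\|\bar v_j\|f_j(z)z_j-\beta f_i(z)z_i,
\end{equation}
which in vectorized form is precisely $\dot z=-(\alpha\bar v\bar v^\top+\beta I)F(z)z=-(\alpha G+\beta I)F(z)z$ by the definitions~\eqref{eq:defbarv} and~\eqref{eq:defG}.

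The only nontrivial points, which I would emphasize, are (i)~that $\Lambda_v$ is invariant under the normal dynamics so the computation makes intrinsic sense (supplied by Lemma~\ref{lem:invdyn1}), and (ii)~the unique-expansion step, which requires that the $\bar v_i$ be nonzero and supported on disjoint coordinate blocks---both guaranteed by the tame spectrum assumption together with the fact that the $N_i$ partition $\{1,\ldots,m\}$. Everything else is a routine bookkeeping exercise; there is no serious obstacle.
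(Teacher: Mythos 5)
Your proof is correct and follows essentially the same computation as the paper: substitute $x=\Lambda v$ into the normal dynamics, evaluate $\rho_i$ and $D(x)x$ on $\Lambda_v$, apply $Q=\alpha vv^\top+\beta I$, and read off the dynamics in the $z$-variables. The only difference is cosmetic but worth noting: the paper conjugates by $D_v=\diag(v)$ and tracks the full vector $\diag(\Lambda)\in\R^m$, which is why the paper has to add the extra hypothesis (stated at the start of the section) that $v$ has no zero entries, so that $D_v^{-1}$ exists. You instead expand $x=\sum_i\lambda_i\bar v_i$ directly in the basis $\{\bar v_i\}$, which are nonzero and supported on disjoint blocks, and extract coefficients; this needs only $\bar v_i\neq 0$, which is already part of the tame spectrum assumption, so your route dispenses with the ancillary no-zero-entries hypothesis. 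The computational content and the resulting equation $\dot z_l=-\alpha\|\bar v_l\|\sum_j\|\bar v_j\|f_j(z)z_j-\beta f_l(z)z_l$ are identical in both arguments.
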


\begin{proof}
Starting from the normal dynamics, replacing $x$ by $\Lambda v$, we obtain $\dot \Lambda v = -QD\Lambda v$. Now use the fact that since $\Lambda$ is diagonal, $\Lambda v = D_v \diag(\Lambda)$, (recall that $\diag$ applied to a vector yields a diagonal matrix and when applied to a diagonal matrix, it yields a vector) and the fact that diagonal matrices commute to obtain 
\begin{equation}
\frac{d}{dt} \diag(\Lambda) =- D_v^{-1}QD_v D\diag(\Lambda) = -(\alpha D_v^{-1} vv^\top D_v + \beta I)D\diag(\Lambda),	
\end{equation}
where we used the fact that $Q=\alpha vv^\top + \beta I$.

Consider the matrix $ D_v^{-1} vv^\top D_v.$  Clearly, it is of rank $1$, and a short calculation shows that it is explicitly given by $$D_v^{-1} vv^\top D_v= \begin{pmatrix} v_1^2 & \cdots & v_m^2\\\vdots & \ddots & \vdots \\v_1^2 & \cdots & v_m^2\end{pmatrix};$$ note that it has identical rows. 
We obtain, using the explicit form of $D_v^{-1} vv^\top D_v$ just derived, shows that $$\dot \lambda_l = -\alpha\sum_{i=1}^q \sum_{j \in N_i} v_j^2 \rho_i(x) \lambda_i - \beta \rho_l(x) \lambda_l.$$
We simplify the above expression as follows: (i)  $\sum_{j \in N_i} v_j^2 \rho_i(x) \lambda_i = \| \bar v_i\|^2 \rho_i(x) \lambda_i$  and (ii) $\rho_i(x) = x^\top E_i x - y_i = v^\top \Lambda E_i \Lambda v-y_i$, and recalling that $\Lambda = \sum_{i=1}^q \lambda_i E_i$ and that $E_iE_j = 0$ if $ i \neq j$, we obtain $$\rho_i(x) = \lambda_i^2 \|\bar v_i\|^2-y_i.$$ We conclude that $$ \dot \lambda_l = -\alpha \sum_{i=1}^q \|\bar v_i\|^2 (\|\bar v_i\|^2 \lambda_i^2 -y_l) \lambda_i -\beta (\lambda_l^2 \|\bar v_l\|^2-y_i)\lambda_l.$$

Replace $\lambda_i$ by  $z_i/\|\bar v_i\|$ in the last expression to get
$$\dot z_l =- \alpha\sum_{i=1}^q \|\bar v_i\| \|\bar v_l\|(z_i^2 - y_i) z_i -\beta (z_l^2 - y_l)z_l,$$ as announced. 
\end{proof}

The matrix $\alpha G+\beta I$ is a positive definite matrix  with {\it positive entries}. The latter fact will play a role in the next section.  Thanks to the former,  it defines an inner product on $\R^q$ and so does its inverse. We use this fact in the following result, characterizing the flow the normal dynamics in $\Lambda_v$ more precisely than in the previous section.

To this end, let $J_r$ be a Morse function on $\R^q$. The {\bf index} of a  critical point $x$ of $J_r$ is defined as the number of negative eigenvalues of the Hessian of $J_r$ evaluated at $x$. Note that local minima have index zero and local maxima have index $q$. As before, we denote by $\crit J_r$ the set of critical points of $J_r$. We decompose it as $$\crit J_r = \cup_{i=0}^q \crit_i J_r$$ where $\crit_i J_r$ is the set of critical points of $J_r$ of index $i$ (this agrees with our definition of $\crit_0 J_r$ as the set of local minima of $J_r$). We have the following result:

\begin{theorem}\label{th:reddyngradient}
The reduced dynamics of Eq.~\eqref{eq:reddyn} is the gradient flow of the Morse function \begin{equation}\label{eq:defJ}J_r:=\frac{1}{4}\sum_{i=1}^q (z_i^2- y_i)^2 \end{equation} for the inner product $\langle x,y\rangle:=x^\top \left( \alpha G+\beta I\right)^{-1} y$.                          	The critical points of $J_r$ have entries in the set $\{-\sqrt{y_i},0,\sqrt{y_i})$ and  $|\crit J_r|=3^q$. Furthermore, the index of a critical point $z$ is equal to the number of zero entries in $z$, consequently $$|\crit_i J_r| = 2^{q-i}{q \choose i}.$$
	\end{theorem}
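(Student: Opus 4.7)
The plan is to exploit the fact that $J_r(z)=\tfrac14\sum_i(z_i^2-y_i)^2$ is \emph{completely separable}: it is a sum of one-variable quartics, one per coordinate. First, I would compute the Euclidean gradient, which immediately reads $\nabla J_r(z)=(z_1^3-y_1z_1,\ldots,z_q^3-y_qz_q)^\top=F(z)z$ with $F(z)=\diag(z_i^2-y_i)$. Since $\alpha,\beta>0$ (from the tame spectrum assumption, where $\alpha'>\beta'>0$) the matrix $\alpha G+\beta I$ is positive definite, so $M:=(\alpha G+\beta I)^{-1}$ defines a genuine inner product. The gradient with respect to $\langle\cdot,\cdot\rangle_M$ is characterized by $\langle \grad J_r,v\rangle_M=(\nabla J_r)^\top v$ for all $v$, hence $\grad J_r=M^{-1}\nabla J_r=(\alpha G+\beta I)F(z)z$. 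Comparing with Eq.~\eqref{eq:reddyn} gives $\dot z=-\grad J_r$, which establishes the first claim.

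Next, I would locate the critical points and check the Morse condition. The equation $\nabla J_r=0$ decouples into $z_i(z_i^2-y_i)=0$, so each coordinate independently takes one of the three values $\{-\sqrt{y_i},0,\sqrt{y_i}\}$, giving exactly $3^q$ isolated critical points. Separability also makes the Euclidean Hessian diagonal, with entries $\partial^2 J_r/\partial z_i^2=3z_i^2-y_i$. At any critical point this entry equals $2y_i>0$ when $z_i=\pm\sqrt{y_i}$ and $-y_i<0$ when $z_i=0$; since the $y_i$ are strictly positive, no eigenvalue vanishes, so $J_r$ is Morse. Moreover the Morse index, defined as the number of negative Hessian eigenvalues, coincides with the number of coordinates equal to zero at the critical point.

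Finally, the counting formula follows by elementary combinatorics: a critical point of index $i$ is specified by choosing the set of $i$ vanishing coordinates (that is $\binom{q}{i}$ choices) and, independently, a sign $\pm$ for each of the remaining $q-i$ coordinates (that is $2^{q-i}$ choices), yielding $|\crit_i J_r|=2^{q-i}\binom{q}{i}$; the binomial identity $\sum_{i=0}^q 2^{q-i}\binom{q}{i}=3^q$ provides a consistency check. The only conceptual subtlety, which I would flag rather than struggle with, is that the Morse index is a property of the Hessian of $J_r$ itself and does not depend on the Riemannian metric used to form the gradient flow, so the computation in Euclidean coordinates is legitimate even though the flow is gradient for the $M$-metric; there is no genuine obstacle beyond this bookkeeping, since the separability of $J_r$ trivializes both the critical point analysis and the diagonalization of the Hessian.
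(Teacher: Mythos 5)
Your proposal is correct and follows essentially the same route as the paper's proof: compute the Euclidean partial gradient $F(z)z$, identify the flow as a gradient for the $(\alpha G+\beta I)^{-1}$-inner product, enumerate the $3^q$ critical points via coordinate separability, read off the diagonal Hessian $3z_i^2-y_i$, and invoke metric-independence of the Morse index to justify the Euclidean computation before counting $2^{q-i}\binom{q}{i}$. The only tiny slip is the aside ``$\alpha'>\beta'>0$'' where the relevant condition is $\alpha,\beta>0$ in $Q=\alpha vv^\top+\beta I$, but this does not affect the argument since positive definiteness of $\alpha G+\beta I$ needs only $\alpha,\beta>0$ and $G\succeq 0$.
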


\begin{proof}
	We first observe that $\frac{\partial J_r}{\partial z_i} = (z_i^2-y_i)z_i= f_i(z)z_i$, where $f_i(z)=(z_i^2-y_i)$ is as in the statement of Lemma~\ref{lem:reddyn}. Thus \begin{equation}\label{eq:defJrz}\frac{\partial J_r}{\partial z} = F(z) z,\end{equation} from which we see that the normal dynamics in reduced coordinates is the gradient flow of $J_r$ for the inner product described in the statement of the Theorem.
	
	The critical points of $J_r$ are so that $(z_i^2-y_i)z_i=0$ or, equivalently, $$z_i \in  \{- \sqrt{y_i},0,\sqrt{y_i}\}$$ and there are $3^q$ of them as announced. To determine the index of the critical points, recall that the signature of the Hessian at a critical point is independent of the inner product~\cite{milnor2016morse}. Hence, it suffices to analyze the matrix of second derivatives of $J_r$.  It is easy to see from   Eq.~\eqref{eq:defJrz} that $\frac{\partial^2 J_r}{\partial z^2}$ is diagonal, with entries $$\frac{\partial^2 J_r}{\partial z_i^2} = 3z_i^2-y_i.$$ From the above equation, we see that the index of a critical point $z$ is precisely the number of entries of $z$ that are zero, and that there are two choices for non-zero entries. This yields the last statement of the Theorem.
	\end{proof}

As mentioned at the beginning of this section,  we know that, generically for $z_0 \in \R^q$, the reduced dynamics will converge to a point in $\crit_0 J_r$, and that there are $2^q$ such points and that they all correspond to $x$ satisfying the constraints of~\eqref{eq:optx}. However, the corresponding value of the objective function is not the same for all elements of $\crit_0 J_r$. To see this, first recall that from Lemma~\ref{lem:Lvincl}, we know that all minimizers of the regularization problem are in $\Lambda_v$. Hence, we can without loss of generality study the regularization problem in reduced coordinates.

Denote by $\mathbbm{1}$ the matrix of all one entries. The regularization problem in reduced coordinates takes following form:
\begin{proposition}\label{prop:optred}
Consider the constrained optimization problem   $$\oR_1: \min x^\top Q^{-1} x \quad \mbox{ s.t. } x^\top E_i x = y_i, \quad i=1,\ldots,q,\mbox{ and } x \in \Lambda_v$$ where $Q=\alpha vv^\top + \beta I$. Let $x^*=\Lambda^*v$ be a critical point of this problem, with $\Lambda^* = \sum_{i=1}^q \lambda_i^* E_i.$ Then $z^* = \lambda_i^* \|v_i\|$ is a critical point of
	\begin{equation}\label{eq:optz}\oR_r: \min z^\top (-\alpha' \mathbbm{1} + \beta I) z,\quad \mbox{ s. t. } z_i^2 = y_i.\end{equation}  Furthermore, the problem has 2 global minima, at $z_i = \sqrt{y_i}$, $i=1,\ldots,q,$ and $z_i = -\sqrt{y_i}$, $i=1,\ldots,q,$. 
\end{proposition}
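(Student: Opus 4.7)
The plan is to substitute the explicit parametrization of $\Lambda_v$, namely $x = \Lambda v$ with $\Lambda = \sum_{i=1}^q \lambda_i E_i$, into both the cost and constraints of $\oR_1$, and then recognize the resulting problem in the reduced coordinates $z_i = \lambda_i\|\bar v_i\|$. Using $E_i E_j = \delta_{ij}E_i$ (the $E_i$ project onto disjoint coordinate blocks), each constraint becomes
\begin{equation*}
x^\top E_i x = v^\top \Lambda E_i \Lambda v = \lambda_i^2\, v^\top E_i v = \lambda_i^2 \|\bar v_i\|^2 = z_i^2,
\end{equation*}
so the constraints read $z_i^2 = y_i$. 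The tame spectrum assumption guarantees $\|\bar v_i\|\neq 0$, so the map $\lambda_i \mapsto z_i$ is a bijection and the $z$-feasible set is the finite collection $\{z\in\R^q : z_i\in\{\pm\sqrt{y_i}\}\}$ of $2^q$ points.

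For the cost, applying the Sherman-Morrison formula to $Q=\alpha vv^\top+\beta I$ yields $Q^{-1} = \beta^{-1}I - \gamma\,vv^\top$ with $\gamma = \alpha/\bigl(\beta(\beta+\alpha\|v\|^2)\bigr)>0$, and hence
\begin{equation*}
x^\top Q^{-1} x \;=\; \beta^{-1}\sum_i \lambda_i^2\|\bar v_i\|^2 \;-\; \gamma\Big(\sum_i \lambda_i\|\bar v_i\|^2\Big)^{\!2} \;=\; \beta^{-1}\|z\|^2 \;-\; \gamma\,(\bar v^\top z)^2,
\end{equation*}
which is a quadratic of the form asserted by the proposition, the ``rank-one'' term being generated by the strictly positive vector $\bar v$ (up to the obvious relabeling of the constants in the statement).

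The critical-point claim is then essentially automatic: since the feasible set in reduced coordinates is the finite set of $2^q$ points $z_i\in\{\pm\sqrt{y_i}\}$, every feasible point is a critical point of $\oR_r$. So if $x^*=\Lambda^*v$ is a critical (hence feasible) point of $\oR_1$ in $\Lambda_v$, the associated $z^*_i=\lambda^*_i\|\bar v_i\|$ is feasible, hence critical, for $\oR_r$.

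Finally, to identify the global minima, I would restrict the cost to the feasible set. Since $\|z\|^2=\sum_i y_i$ is constant there, minimizing the cost reduces to maximizing
\begin{equation*}
(\bar v^\top z)^2 \;=\; \Big(\sum_{i=1}^q \|\bar v_i\|\, z_i\Big)^{\!2}
\end{equation*}
over $z_i\in\{\pm\sqrt{y_i}\}$. Because $\|\bar v_i\|>0$ for every $i$ under the tame spectrum assumption, the Cauchy-Schwarz-type maximum is attained precisely when all the terms $\|\bar v_i\|z_i$ carry the same sign, i.e., either $z_i=\sqrt{y_i}$ for every $i$ or $z_i=-\sqrt{y_i}$ for every $i$, giving exactly two global minimizers. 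The only subtlety in the argument is reconciling the quadratic produced by Sherman-Morrison with the specific form $-\alpha'\mathbbm{1}+\beta I$ in the statement; both are of the shape ``positive multiple of the identity minus a positive rank-one term with a positive generator,'' and only this structural feature is needed for the sign-alignment that pins down the two global minima.
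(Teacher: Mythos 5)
Your proof is correct, and in fact it is more careful than the paper's own. The substitution $x=\Lambda v$ into the constraints giving $z_i^2=y_i$ is the same in both, and your Sherman--Morrison computation of the cost is exactly what the paper should obtain. Concretely, with $z_i=\lambda_i\|\bar v_i\|$ one gets
\begin{equation}
x^\top Q^{-1}x \;=\; -\alpha'\Big(\sum_{i=1}^q \lambda_i\|\bar v_i\|^2\Big)^{\!2} + \beta'\sum_{i=1}^q \lambda_i^2\|\bar v_i\|^2 \;=\; -\alpha'(\bar v^\top z)^2 + \beta'\|z\|^2 \;=\; z^\top(-\alpha'\bar v\bar v^\top + \beta' I)z,
\end{equation}
which is the form you derived. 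The paper's displayed form $z^\top(-\alpha'\mathbbm{1}+\beta I)z$ does not follow from the change of variables: the rank-one generator should be $\bar v$ rather than the all-ones vector, and tracing the paper's computation one finds an intermediate slip (the line $\sum_i\|\bar v_i\|^2\lambda_i=\bar v^\top\lambda$ is not an identity, since the left side involves $\|\bar v_i\|^2$ while $\bar v$ has entries $\|\bar v_i\|$; this error propagates and cancels another, producing $\mathbbm{1}$ in place of $\bar v\bar v^\top$). Your observation at the end that only the structural feature---a rank-one term with a strictly positive generator---is needed is exactly right: since $\|z\|^2=\sum_i y_i$ is constant on the feasible set and $\|\bar v_i\|>0$, maximizing $(\bar v^\top z)^2$ over $z_i\in\{\pm\sqrt{y_i}\}$ forces sign alignment and gives the two global minimizers. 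Your argument for the critical-point claim (finiteness of the reduced feasible set implies every feasible point is critical) is a mild repackaging of what the paper leaves implicit, and is valid since the constraint gradients $2z_ie_i$ at any feasible $z$ span $\R^q$. The one place to tighten the write-up is the remark that the two quadratic forms differ ``only up to relabeling of constants'': make explicit that the generator is $\bar v$, not $e$, and that this is a correction of the statement rather than a cosmetic identification.
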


\begin{proof}
Since $x \in \Lambda_v$, there exists $\Lambda \in \cD_N$ so that $x = \Lambda v$. Plugging this last relation in the problem $\oR_1$, it becomes $$\min v^\top \Lambda Q^{-1}  \Lambda v \quad \mbox{ s. t. } \|\bar v_i\|^2 \lambda_i^2 = y_i,  i=1,\ldots,q.$$ Recall that  $\Lambda v = D_v\diag(\Lambda)$, and $\diag(\Lambda)=\sum_{i=1}^q \sum_{j\in N_i} \lambda_i e_j$, and use the fact that $Q^{-1} = -\alpha' vv^\top + \beta' I$ for some constants $\alpha', \beta' >0$, to rewrite the cost in the above problem as $$\diag(\Lambda)^\top D_v( -\alpha' vv^\top + \beta' I)  D_v \diag(\Lambda).$$ We have that $D_v v = \begin{pmatrix} v_1^2 & \cdots & v_m^2 \end{pmatrix}^\top$, and thus
\begin{align*}
	v^\top D_v \diag(\Lambda)^\top &=  \begin{pmatrix} v_1^2 & \cdots & v_m^2 \end{pmatrix}(\sum_{i=1}^q \sum_{j\in N_i} \lambda_i e_j)\\
	&=\sum_{i=1}^q \sum_{j \in N_i} v_j^2 \lambda_i = \sum_{i=1}^q \|\bar v_i\|^2 \lambda_i\\
	&=  \bar v^\top \lambda.
\end{align*} For the second term of the cost, we have
\begin{align*}
	 \diag(\Lambda)^\top D_v D_v \diag(\Lambda)&=  v^\top \Lambda^2 v= \sum_{i=1}^q \lambda_i^2 v^\top E_i v \\
	 &= \sum_{i=1}^q \lambda_i^2 \|\bar v_i\|^2.\\
	\end{align*}
Putting the two terms together, the cost is $-\alpha' \lambda^\top \bar v \bar v^\top \lambda +\sum_{i=1}^q \lambda_i^2 \|\bar v_i\|^2.$ Replacing $\lambda_i$ by $z_i/\| \bar v_i\|$ or in matrix form $\lambda = D_{\bar v}^{-1} z$, the regularization problem becomes $\min z^\top D_{\bar v}^{-1} \bar v \bar v^\top D_{\bar v}^{-1} z + \sum_{i=1}^q z_i^2$. Since $D_{\bar v}^{-1} v$ is the vector of all ones, we get

$$\min z^\top (-\alpha' \mathbbm{1} +\beta I)z \quad \mbox{ s.t. } z_i^2 =y_i, i=1,\ldots,q,$$
as announced.
 
To prove that the global minima are such that the entries of the vector $z$ have the same sign, recall that the  $2^q$ feasible points for the problem $\oR_r$ of Eq.~\eqref{eq:optz} are so that $z_i = \pm \sqrt{y_i}$. Writing $\mathbbm{1}=ee^\top$, where $e$ is the vector of all ones, we see that in order to minimize $z^\top (-\alpha' \mathbbm{1} + \beta' I) z$, we need to maximize $|e^\top z|$, from which the statement follows.
\end{proof}

The next Proposition shows that converging to a global minimum of $\oR_r$ in $\Lambda_v$, which took place when the dynamics was constrained to the subspace of rank $1$ matrices,  implies that the primal problem has converged to a global minimum of the original regularization problem $\oR$.		

\begin{theorem}\label{th:equivnfullselecmin}
Assume that the tame spectrum assumption holds with leading eigenvector $v \in \R^m$. Then there are global minima of the regularization problem $\oR_k$,  which are of rank $1$.	
\end{theorem}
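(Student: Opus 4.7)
The plan is to show that the rank-$1$ global minimizer of $\oR_1$ already furnished by Proposition~\ref{prop:optred} remains optimal when the rank constraint is relaxed to $k$. The natural route is a semidefinite relaxation argument in normal coordinates. By Proposition~\ref{prop:optnormcoord}, $\oR_k$ is equivalent to $\min \tr(x^\top Q^{-1} x)$ over $x \in \R^{m \times k}$ subject to $\tr(x^\top E_i x) = y_i$. Setting $Z := x x^\top$ turns this into
$$
\min_{Z \succeq 0,\ \operatorname{rank} Z \le k} \tr(Q^{-1} Z) \quad \text{s.t.} \quad \tr(E_i Z) = y_i, \ i=1,\ldots,q. \qquad (\star)
$$
Under the tame spectrum assumption $Q = \alpha v v^\top + \beta I$, so by Sherman--Morrison $Q^{-1} = \beta' I - \alpha' v v^\top$ with $\alpha', \beta' > 0$. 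Because $\cup_i N_i = \{1,\ldots,m\}$, we have $\sum_i E_i = I_m$, so $\tr Z = \sum_i y_i$ is constant on the feasible set and $\tr(Q^{-1} Z) = \beta'\sum_i y_i - \alpha' v^\top Z v$. Therefore $(\star)$ is equivalent to maximizing $v^\top Z v$ over the same feasible set.

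Next I would prove a rank-free upper bound on $v^\top Z v$. Writing $v = \sum_i \bar v_i$ with $\bar v_i = E_i v$ and applying Cauchy--Schwarz to the PSD bilinear form $(u,w) \mapsto u^\top Z w$,
$$
v^\top Z v = \sum_{i,j} \bar v_i^\top Z \bar v_j \le \Bigl(\sum_i \sqrt{\bar v_i^\top Z \bar v_i}\,\Bigr)^2.
$$
Since $\bar v_i$ is supported in $N_i$, $\bar v_i^\top Z \bar v_i = \tilde v_i^\top Z_i \tilde v_i$ where $Z_i$ denotes the principal submatrix of $Z$ indexed by $N_i$ and $\tilde v_i$ is $v$ restricted to $N_i$. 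Since $Z_i \succeq 0$, $\lambda_{\max}(Z_i) \le \tr(Z_i) = y_i$, giving $\bar v_i^\top Z \bar v_i \le y_i \|\bar v_i\|^2$. Combining these yields
$$
v^\top Z v \;\le\; \Bigl(\sum_i \sqrt{y_i}\,\|\bar v_i\|\Bigr)^2,
$$
a bound valid for every feasible $Z$ irrespective of its rank.

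Finally, I would verify that this bound is attained by an explicit rank-$1$ $Z^*$. Take $x^* = \Lambda^* v \in \Lambda_v$ with $\lambda_i^* = \sqrt{y_i}/\|\bar v_i\|$ (the global minimizer of $\oR_1$ in $\Lambda_v$ given by Proposition~\ref{prop:optred}), and set $Z^* := x^* (x^*)^\top$. A direct computation gives
$$
\tr(E_i Z^*) = \bar v_i^\top (\Lambda^*)^2 \bar v_i = (\lambda_i^*)^2 \|\bar v_i\|^2 = y_i,
$$
so $Z^*$ is feasible, and
$$
v^\top Z^* v = (v^\top \Lambda^* v)^2 = \Bigl(\sum_i \lambda_i^* \|\bar v_i\|^2\Bigr)^2 = \Bigl(\sum_i \sqrt{y_i}\,\|\bar v_i\|\Bigr)^2,
$$
matching the upper bound. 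Hence $Z^*$ solves $(\star)$ for every $k \ge 1$, and pulling back through the change of variables $U = P\bar U$ of Lemma~\ref{lem:P} (padding the last $n-m$ rows with zeros as in Proposition~\ref{prop:optnormcoord}) produces a rank-$1$ global minimizer of the original problem $\oR_k$.

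I do not anticipate a substantive obstacle: the only subtlety is that the Cauchy--Schwarz chain must be tight at the rank-$1$ candidate, which follows because $Z^* = x^* (x^*)^\top$ makes each $\bar v_i^\top Z^* \bar v_j$ a rank-$1$ outer product and the uniform sign choice $\lambda_i^* > 0$ ensures the final triangle-inequality step is saturated as well. A minor bookkeeping point to handle is that the rank-spread assumption used to build the normal form gives $\sum_i r_i = m$ rather than $n$; since the $n - m$ free rows of $\bar U$ simply vanish at the optimum, this does not affect the rank of the final minimizer.
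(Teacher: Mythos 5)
Your proof is correct, and it takes a genuinely different and substantially cleaner route than the paper's. The paper reduces, as you do, to maximizing $\tr(x^\top vv^\top x)$ over $x\in\R^{m\times k}$ subject to $\tr(x^\top E_ix)=y_i$, but then proceeds by induction on $k$: for $k=2$ it argues via sign-consistency, column-alignment with $\bar v_i$ (using the $SO(|N_i|)$ action on each block), and a Lagrange-multiplier case split into ``$\lambda=c\mu$'' versus ``one of $\lambda,\mu$ vanishes,'' and then bootstraps from $k-1$ to $k$ by recombining a rank-$1$ solution of a sub-problem with the last column. Your argument bypasses all of this. Lifting to $Z=xx^\top$ and using $\sum_i E_i=I_m$ to fix $\tr Z$, you establish a rank-free upper bound
$$
v^\top Z v \;\le\; \Bigl(\sum_{i}\sqrt{\,\bar v_i^\top Z\,\bar v_i\,}\Bigr)^2
\;\le\; \Bigl(\sum_i \sqrt{y_i}\,\|\bar v_i\|\Bigr)^2
$$
via Cauchy--Schwarz for the PSD form $(u,w)\mapsto u^\top Z w$ and the block-wise estimate $\lambda_{\max}(Z_i)\le\tr(Z_i)=y_i$, and then exhibit the rank-$1$ feasible $Z^*=x^*(x^*)^\top$ with $x^*=\Lambda^*v$, $\lambda_i^*=\sqrt{y_i}/\|\bar v_i\|$, whose explicit value matches the bound. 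Because the bound is independent of the rank constraint, the conclusion holds simultaneously for all $k\ge 1$, with no induction and no case analysis. Two small remarks: (i) the final paragraph about ``the chain must be tight'' is not actually needed --- your explicit evaluation of $v^\top Z^*v$ already closes the argument; (ii) it would be worth stating explicitly that $v=\sum_i\bar v_i$ because $\sum_iE_i=I_m$ under the tame spectrum (hence rank spread) assumption, since this identity underlies the double-sum expansion of $v^\top Zv$. Beyond cosmetics, your argument is complete and, in my view, preferable to the paper's.
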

Recall that in normal coordinates, the regularization problem takes the form 	\begin{equation}
\label{eq:optx2}
\oR_k: \min_{x \in \R^{m\times k}} \tr(x^\top Q^{-1}x) \quad \mbox{s.t. } \tr(x^\top E_i x) =y_i, i=1,\ldots,q\end{equation} and that  from Proposition~\ref{prop:optnormcoord} we know that minimizers of $\oR$ are of the form $(Px, 0)^\top$, where $x$ is a minimizer of $\oR_k$ with $k=n$.
\begin{proof}
	 Recall that if $Q=\alpha vv^\top + \beta I$, with $\alpha, \beta >0$, then $Q^{-1} = -\alpha' vv^\top  + \beta' I$, with $\alpha', \beta' >0$. Furthermore, since the tame spectrum assumption implied the rank spread condition, we know that $\sum_{i=1}^q E_i = I_m$. Thus $$\sum_{i=1}^q \tr(x^\top E_i x)= \tr (x^\top x) = \sum_{i=1}^q y_i.$$
Plugging this relation into the cost, we get 
$$ x^\top Q^{-1} x = - \alpha'  \tr (x^\top vv^\top x) + \beta' \sum_{i=1}^q y_i.$$ 
We thus need to show that there is a global maximum of rank $1$ for the problem \begin{equation}\label{eq:optglobinter}\operatorname{T}_k: \max_{x \in \R^{m \times k}} \tr (x^\top vv^\top x) \quad \mbox{s.t. } \tr(x^\top E_i x) =y_i, i=1,\ldots,q.\end{equation} 
Denote by $x^j$, $j=1,\ldots, k$, the $j$th column of $x$. We will show that the above problem admits a global maximum with $x^j=0$ for $j \geq 2$. The proof goes by induction on $k$. 

We start with $k=2$, and for ease of notation, we let $x=x^1$ and $z=x^2$. The problem~\eqref{eq:optglobinter} is
$$\max (|v^\top x|^2+|v^\top z|^2) \quad \mbox{ s.t. } x^\top E_i x + z^\top E_i z = y_i, i=1,\ldots, q.$$
The terms of the cost function can be expressed as $$|v^\top x|^2 = |\sum_{i=1}^q \bar v^\top_i \bar x_i|^2,$$ and similarly for $|v^\top z|^2$, while the constraints are $x^\top E_i x + z^\top E_i z= \|\bar x_i\|^2 +\|\bar z_i\|^2.$ 
We can thus rewrite~\eqref{eq:optglobinter} as
\begin{equation}\label{eq:optglobinter2}\max (|\sum_{i=1}^q \bar v^\top_i \bar x_i|^2+|\sum_{i=1}^q \bar v^\top_i \bar z_i|^2)\quad  \mbox{ s.t. } \|\bar x_i\|^2 + \|\bar z_i \|^2 = y_i, i=1,\ldots, q\end{equation}

We claim that if a pair $x,z$ is a global maximizer of~\eqref{eq:optglobinter2}, then the terms $\bar v_i^\top \bar x_i$ are sign-consistent, for $i=1,\ldots,q$, and similarly for $\bar v_i^\top \bar z_i$.  Indeed, if $x$ (resp.\ $z$) satisfies the constraints, changing the sign of $\bar x_i$ (resp.\ $\bar z_i$) yields an $x$ (resp.\ $z$) that also does satisfy the constraints but changes the sign of $\bar v_i^\top \bar x_i$. For {\it any} $x$ satisfying the constraints, arranging the signs of $\bar x_i$ so that $\bar v^\top_i \bar x_i$ are consistent clearly increases $|\sum_{i=1}^q \bar v_i^\top \bar x_i|$, and similarly for $z$,  which proves the claim. We assume without loss of generality that all terms $\bar v_i^\top \bar x_i$ and $\bar v_i^\top \bar z_i$  are positive.

We now furthermore claim that if $x,z$ is a global maximizer of~\eqref{eq:optglobinter2}, then the pairs $\bar x_i$ and $\bar z_i$ are both aligned with each other, and aligned with $\bar v_i.$ Indeed, assume it is not the case for $\bar x_i$, $1 \leq i \leq q$,   and without loss of generality, all $\bar v_i^\top \bar x_i$, $\bar v_i^\top \bar z_i$ are positive.  Consider the map $\bar x_i \to \Theta \bar x_i$, $\Theta \in SO(|N_i|)$: keeping all other entries of $x,z$ constant, it maps a feasible point to another feasible point since $\|\Theta \bar x_i\|^2= \|\bar x_i\|^2$. This map is surjective onto the sphere of radius $\|\bar x_i\|$ and thus contains a vector aligned with $\bar v_i$ in its image.  Maximizing over $\Theta$ the quantity $\bar v_i^\top \Theta \bar x_i$, which is clearly done when $\Theta$ is such that $\Theta\bar x_i$ is aligned with $\bar v_i$, provides a feasible point with a higher cost, which proves the claim.
 We can thus exhibit global maximizers $x,z$ of problem~\eqref{eq:optglobinter} so that $$\bar x_i =\lambda_i \bar v_i \mbox{ and } \bar z_i = \mu _i \bar v_i, \quad i=1,\ldots, q,$$ for some $\mu_i, \lambda_i\geq 0$, $i=1,\ldots, q$.  Plugging this into~\eqref{eq:optglobinter2}, we have that $\lambda_i,\mu_i$ are solutions of \begin{equation}
  \label{eq:optglobinter3}
\max \left((\sum_{i=1}^q\lambda_i \|\bar v_i\|^2)^2 +(\sum_{i=1}^q\mu_i \|\bar v_i\|^2)^2\right) \quad \mbox{ s.t. } \lambda_i^2+\mu_i^2 = \frac{y_i}{\|\bar v_i\|^2}, i=1,\ldots,q.\end{equation}
  
We now claim that if the pair $\lambda,\mu \in \R^q$ is a global maximizer of~\eqref{eq:optglobinter3}, then one of the following two alternatives hold:

\begin{enumerate}
\item 	there exists a constant $c > 0$ such that \begin{equation}
\label{eq:lmuc}
\lambda_i = c \mu_i,\quad \mbox{ for } i=1, \ldots,q, \end{equation}
\item $\lambda >0$ and $\mu = 0$ or $\lambda = 0$ and $\mu >0$, where the inequalities are to be understood entrywise.
\end{enumerate}
   To see this, introduce the Lagrange multipliers $\nu_i, i=1,\ldots, q$ and differentiate the Lagrangian  of problem~\eqref{eq:optglobinter3} with respect to $\lambda_i$ and $\mu_i$. We obtain
\begin{equation}
\left\lbrace \begin{aligned}
\frac{\partial}{\partial \lambda_i}&:\sum_{i=1}^q\lambda_i \|\bar v_i\|^2 - \nu_i \lambda_i &=0\\
\frac{\partial}{\partial \mu_i}&:\sum_{i=1}^q\mu_i \|\bar v_i\|^2 - \nu_i \mu_i &=0
\end{aligned}\right.
\end{equation} Assume that $\lambda_1=0$, then $\sum_{i=2}^q \lambda_i \|\bar v_i\|^2 =0$. Since we know that $\lambda_i \geq 0$, this implies that $\lambda_i=0$ for $i=1,\ldots,q$. Hence $\lambda =0$. It is easy to see that having $\mu=0$ additionally is not a maximizer. The same holds when switching the role of $\mu$ and $\lambda$. This shows the second alternative holds. We can now assume that  $\lambda_i \neq 0,\mu_i \neq 0$ for all $1 \leq i \leq q$ (otherwise, we are back to the case above).  Solve the above equation for $\nu_i$, and we get that $$\frac{\sum_{i=1}^q\lambda_i \|\bar v_i\|^2 }{\lambda_i}=\frac{\sum_{i=1}^q\mu_i \|\bar v_i\|^2 }{\mu_i}, \quad i=1,\ldots,q.$$ Since the terms in the numerators are the same for all $i$, the ratios $\lambda_i/\mu_i$ are all the same, which proves the first alternative. In either case, this implies the global maximizer of $T_2$ is of rank $1$. 

When $\mu=0$ or $\lambda=0$, the cost in~\eqref{eq:optglobinter3} is easily seen to be $(\sum_{i=1}^q\sqrt{y_i}\|\bar v_i\|)^2$, and when $\lambda_i=c \mu_i$ for $i=1,\ldots, q$, the constraints yields $\lambda_i^2 = \frac{y_i}{(1+c^2)\|\bar v_i\|^2}$. Plugging this into the cost in~\eqref{eq:optglobinter3}, we see that the cost is the same at such points.  We conclude that there are global maximizers with $\mu_i=0$, $i=1,\ldots, q$, which proves the claim for $k=2$.

We are now done with the base case of the induction, and proceed with the induction step. Assume that there is a global maximum  $x \in \R^{m\times (k-1)}$ for $\operatorname{T}_{k-1}$ so that $x^1$ is the only non-zero column. We show that the statement holds true for $\operatorname{T}_{k}$. To this end, let $z \in \R^{m \times k}$ be a global maximum for $\operatorname{T}_k$. We have that $z$ obeys
$$\max (\sum_{j=1}^{k-1} |v^\top z^j|^2 + |v^\top z^k|^2) \quad \mbox{ s.t. } \sum_{j=1}^{k-1} (z^j)^\top E_i z^j + (z^k)^\top E_i z^k =y_i.$$ Let $\tilde z \in \R^{m \times (k-1)}$ be a global maximizer of $$\max \sum_{j=1}^{k-1} |v^\top \tilde z^j|^2   \quad \mbox{ s.t. } \sum_{j=1}^{k-1} (\tilde z^j)^\top E_i \tilde z^j  =y_i- (z^k)^\top E_i z^k.$$ We can assume  using the induction hypothesis that only $\tilde z^1$ is non-zero. Let $w \in \R^{m \times k}$ be the concatenation of $\tilde z$ and $z^k$. Then $w \in \feas(\oT_k)$ by construction, and it is also a global maximizer. Since $w$ only has two non-zero columns, it is also the solution of the problem 
$$\max |v^\top w^1|^2+|v^\top w^k|^2 \quad \mbox{ s.t. } (w^1)^\top E_i w^1 + (w^k)^\top E_i w^k = y_i, i=1,\ldots, q$$ where $w^j=0, j=2,\ldots,k-1.$ We have shown above that this problem admits a solution so that $w^k=0$. Hence, there is a global optimizer $x$ of $\oT_k$ with only one non-zero column, which concludes the proof of the Proposition.
\end{proof}

\paragraph{Convergence to the global minima of $\oR_n$}
We now argue that the primal dynamics will converge to near a global minimum of the regularization problem $\oR_1$. From Corollary~\ref{cor:rank1botrank1}, we know that for some $t_1>0$,  $U(t_1,\delta)$ is arbitrarily close to a matrix $U_1$ of rank $1$. Without loss of generality (thanks to Lemma~\ref{lem:equivXU}), we can assume that $U_1=u_1e_1^\top$, where $u_1 \in \R^n$ is an eigenvector of $\sum_{i=1}^q y_i A_i$ corresponding to the largest eigenvalue (see Corollary~\ref{cor:rank1botrank1}). We know from Proposition~\ref{prop:winLv} that $u_1 \in \Lambda_v$, and if the dynamics is initialized close to $\Lambda_v$, it converges to a point close to $\Lambda_v$ (by Proposition~\ref{prop:closeLoja}). Assuming for a moment that $X(0)$ is of rank $1$,  we thus want to show that when initialized at $u_1$, the normal dynamics (in $\R^n$) converges to a global minimum of the regularization problem $\oR_1$ in $\R^n$. From Theorem~\ref{th:equivnfullselecmin}, we know that it is also a global minimum of $\oR_n$.

 The results of the previous section guaranteed that when initialized at $u_1$, the flow will converge generically to a critical point of $\oR_1$.
We know that we can consider the system in the reduced coordinates of Eq.~\eqref{eq:redvar}, and the corresponding dynamics is given in Eq.~\eqref{eq:reddyn}. We have shown in Theorem~\ref{th:reddyngradient} that this dynamics was gradient for a Morse function $J_r$ had exactly $2^q$ local minima, one local minima per orthant. 

On the one hand, from Prop.~\ref{prop:optred}, the global minima of the regularization problem $\oR$ are in the positive orthant and negative orthant. On the other hand, from Corollary~\ref{cor:lambdasign}, we know that if we write $u_1 = \Lambda v$,  $\lambda_i$ are either all negative or all positive, $i=1,\ldots,q$, which implies that in normal coordinates, we can assume that the flow is initialized in either the positive or negative orthant. It thus suffices to show that when initialized at a small value in the positive or negative orthant, the primal dynamics will converge to the sink in that orthant. 

We can do so by exhibiting positively invariant subspace for the dynamics. We illustrate how this can be done in the case $q=2$; a similar approach applies to $q >2$.
 Recall that $Q=\alpha'G+\beta' I$ with $G=\bar v \bar v^\top$, with $\bar v \in \R^q$ a vector with {\it strictly positive} entries. Assume without loss of generality that $\|\bar v_2\| > \|\bar v_1\|=1$. We claim that the following subset of $\R^2$ is positively invariant for the gradient flow:
\begin{equation}\label{eq:defsetD}
z \in D \mbox{  if  } \left\lbrace
\begin{aligned} z_1 &>0\\
 z_2 &>0\\
 z_2 &< \gamma_1 z_1+\sqrt{y_2}\\
 z_2 &> \gamma_2 	z_1-\sqrt{y_1}
 \end{aligned}\right.
\end{equation}
for some $\gamma_1,\gamma_2>0$. We illustrate the set in Figure~\ref{fig:invset}. Note that the points $(0, \sqrt{y_1})$ and $(\sqrt{y_2},0)$ are saddle points of the dynamics.  The global minimum $(\sqrt{y_1},\sqrt{y_2})$ belongs to this set and is the only sink in this set. To verify that the set is invariant, one has to show that  the vector field, when evaluated at the set's boundary, points toward the inside of the set (which is well-defined, since the set is a closed, contractible set of codimension 0). For the sides $z_1=0, 0 \leq z_2 \leq \sqrt{y_2}$, and $z_2=0, 0 \leq z_1 \leq \sqrt{y_1}$, this is clear from the expression of the dynamics~\eqref{eq:reddyn}: when $z_1=0$ and $0 \leq z_2 \leq \sqrt{y_2}$, we see that $\dot z_2 >0$, and similarly $\dot z_1>0$ on the boundary $z_2=0, 0 \leq z_1 \leq \sqrt{y_1}$. A normal vector to the side $z_2 =\gamma_1  z_1+\sqrt{y_2}$ is the vector $\vec{n}_1=[\gamma_1, -1]^\top$. It thus suffices to verify that $\vec{n}_1^\top \grad J_n |_{z_1 \geq 0, z_2=x1+\sqrt{y_2} \geq 0}$. Taking, for example, $\gamma_1 = \|\bar v_2\|$, we obtain 
\begin{multline}\vec{n}_1^\top \grad J_n |_{ z_2=z_1+\sqrt{y_2} \geq 0}\\=(\beta  \|\bar v_2\| (\|\bar v_2\|  - 1)) z_1^3  + (3 \beta \|\bar v_2\|  \sqrt{y_2}) z_1  + (\beta \|\bar v_2\| (y_1 + 2 y_2)) z_1. \end{multline} 
When $z_1>0$, since all the coefficients are positive, the previous expression is clearly positive. A similar approach with $\gamma_2= \|\bar v_2\| +\beta/\alpha$ yields a similar result for the other boundary. In the case $q \geq 3$, the complexity of writing down the boundary of the invariant subspace increases exponentially, and we omit this here.

\begin{figure}
\centering
\includegraphics{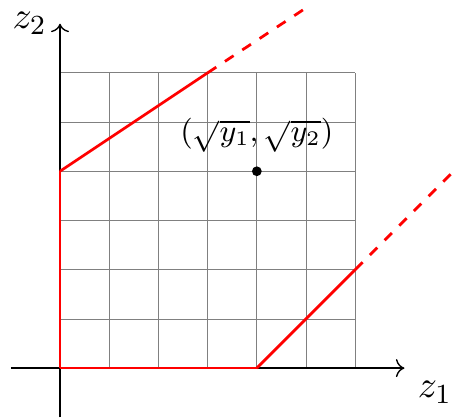}
\caption{The area inside the red boundary is invariant for the dynamics and contains a unique sink at $(\sqrt{y_1},\sqrt{y_2})$.}\label{fig:invset}
\end{figure}

\paragraph{The case of $X$ of full rank} When the primal flow is initialized {\it exactly} at a matrix $X_0$ of rank $1$, and the tame spectrum assumption holds,  using the above arguments, one can show that this dynamics will converge close to a global minimizer of the regularization problem of $\oR_n$. 

We know from the first part of the paper that, whether or not the tame spectrum assumption holds, the primal system will go arbitrarily close to a rank $1$ matrix, and that the space of rank $1$ matrices is invariant for the dynamics. However, even with the tame spectrum assumption, a result such as Prop.~\ref{prop:closeLoja} cannot be used to show that the dynamics remains close (up to $t=+\infty$) to the set of rank one matrices. The {\it additional requirement} here is that $\alpha\gg \beta$. More precisely, if the tame spectrum assumption holds and $\alpha$ is much larger than $\beta$, then we will remain close to the set of rank one matrices. The intuition behind this fact is the following: consider the normal dynamics 
$$\dot x = (\alpha vv^\top + \beta I) QD(x) x,$$
 with $x \in \R^{m \times n}$. Then clearly the term $\alpha vv^\top QD(x) x$ does not contribute to the columns of $x$ becoming more linearly independent, since its contribution to $\dot x^j$ is aligned with $v$ for all columns $x^j$. The term $\beta I  D(x) x$ can increase the rank however. Hence if $\alpha \gg 0$, the dynamics, which we know starts arbitrarily close to a rank one matrix, will converge to its equilibrium before other modes in $x$, which can arise thanks to $\beta I D(x) x$ grow large. An asymptotic analysis (for large spectral gap, i.e. $\alpha$ large) is possible, but we omit it here). We confirm this analysis in simulations.

\section{Conclusion and numerical validation}\label{sec:conclusion}
We have provided in the appendix an in-depth analysis of  implicit regularization for matrix factorization following the blueprint provided in Section~\ref{sec:impltheory}. Amongst the main findings was that under certain conditions, namely the tame spectrum condition, implicit the primal and regularization problem are compatible and approximate implicit regularization provably holds. We now discuss briefly the assumption and provide numerical evidence showing that when the tame spectrum assumption is in a sense squarely contradicted, the regularization problem and the points to which the primal flow converge seem to differ even in the limit $\delta \to 0$, where we recall that $\delta$ is the magnitude of the initial condition.

\paragraph{What else can be proved about implicit regularization for matrix factorization?} We focused in the appendix on providing a complete proof of the parts of the blueprint when the techniques involved could be applied to other settings besides matrix factorization. As such, we omitted some aspects of what would constitute a complete proof of the conjecture of~\cite{implicitregmatr2017}. Most notably, we did not provide bounds guaranteeing that when the primal system starts near a rank $1$ matrix with span in the invariant subspace  $\Lambda_v$, it converges to a point of rank $1$ close to that subspace. We emphasize again that it is here important to verify that the limit point of the trajectory of the flow, when initialized {\it near} the precritical space $\crit_0^* \oR_n$, does not leave the vicinity of that subspace. An important fact supporting this outcome is of course that there are no saddle point of the general dynamics (i.e. the dynamics not restricted to $\crit_0^* \oR_n$) that reduce to sinks (i.e. locally stable equilibria) in $\crit_0^* \oR_n$.

Besides this, as we mentioned earlier, the hypotheses can be relaxed. For example, as a consequence of the  rank spread condition,  the intersection of the range spaces of the matrices $A_i$ only contains $\{0\}$. This leaves out the trivial case of commuting, full rank (or generically any rank) matrices. One can extend the approach presented here to allow for matrices whose range spaces do not intersect trivially, but at the expense of a much heavier notation and computations. In particular, relaxing the rank spread assumption results in a version of the normal dynamics of the type $QDx$ where now $D$ is a {\it block} diagonal matrix, instead of a diagonal matrix.

Finally, we mention that the relaxation mentioned in Remark~\ref{rem:relax} may be worth exploring on its own. The conditions under which it holds are constraining, but we show in simulations (see Fig.~\ref{fig:effecttame}) that the solution we obtain is close to optimal when the assumptions are violated.

\paragraph{Numerical validation}
We present here numerical evidence supporting the conclusions made in the paper. We do not make a broad numerical study of implicit regularization for matrix factorization---we refer the reader to~\cite{implicitregmatr2017,arora2019implicit} for such studies--- but focus on addressing a few points, namely: how robust are the results when the tame spectrum assumption is not exactly met, and does implicit regularization hold when we strongly break the hypothesis?

The tame spectrum assumption can be thought of as having two characteristics: the value of $\alpha $, which is equal to the spectral gap of the matrix $\sum_{i=1}^q A_i$, and the fact that $n-1$ smallest eigenvalues of $\sum_{i=1}^q A_i$ are equal. We thus explore how the performance depends on variations in these two aspects.
To obtain the results below, we solved the ODE~\eqref{eq:mainsys} and the regularization problem~\eqref{eq:maincostU} numerically. We denote by  $\varphi_\infty(x_0)$ the point to which the ODE converged, and by $\min \oR$ the solution of the regularization problem obtained numerically. 

In order to measure the performance of the regularization problem and identify to which point the primal converges, one needs to carefully chose a metric reflecting how well the problem has been regularized, and insure that this metric can be efficiently computed. The most appropriate metric, namely $ \operatorname{dist}(\varphi_\infty(x_0)), \arg \min \oR)$, is unfortunately not easy to compute in general. Indeed, while we showed when the tame spectrum assumption holds exactly, the set $\arg \min \oR$ is essentially of cardinality two, and the two values can be computed analytically, this may not hold when the assumption is not met  exactly: the set can have high-cardinality, and numerical methods can land on an element which is far away from $\varphi_\infty(x_0)$, yet a closer element in $\arg \min\oR$ may exist. To avoid having to approximate the contents of the  $\arg \min$-set in the general case,  we instead use the average ratio 
\begin{equation}\label{eq:relerrratio}\operatorname{Relative Error}:=\frac{K(\varphi_{\infty}(x_0)) - K(\min \oR)}{K(\max \oR) - K(\min \oR)},
\end{equation}
 where we denoted by $\max \oR$ the maximal value of $K(x)$ under the constraints $\rho_i(x)=0$.\footnote{Here, we assumed that $m=n$, i.e., the $V$ variables of the normal dynamics (see Sec.~\ref{ssec:normalform}) are of the same dimension as the original variables. Clearly, if this does not hold, then $\max \oR = \infty$.}  We normalized by the difference $K(\max \oR) - K(\min \oR)$  for two reasons: first, it gives us a scale-free quantity and, perhaps more importantly, in many cases, a small difference $K(\varphi_{\infty}(x_0)) - K(\min \oR)$ misleadingly suggests that implicit regularization takes place, but in fact only reflects a set of parameters for which $K(\varphi_{\infty}(x_0)) - K(\min \oR)$ is {\it always} small.

Another aspect we investigated is the dependence of the spectrum of $\varphi_\infty(x_0)$ on the spectral gap. We mentioned at the end of the previous section, without giving a formal proof, that as $\alpha$ increases, $\varphi_\infty(x_0)$ is closer to being of rank one, and  the overall performance improves. In order to measure the distance of a rank one matrix, one could use the singular values (here, eigenvalues in fact) of $\varphi_\infty(x_0)$, but this measure is again unit dependent. We use here instead the ratio \begin{equation}\label{eq:defspectralratio}\operatorname{SpectralRatio}:=\frac{\lambda_{1}}{\sum_{i=1}^n \lambda_i},\end{equation} where $\lambda_1 \geq \cdots \geq \lambda_n$ are the eigenvalues of $\varphi_\infty(x_0)$.  Hence if $\varphi_\infty(x_0)$ is of rank $1$, the ratio above is one. In the worst case, all eigenvalues are equal and the ratio is $\frac{1}{n}$.

\paragraph{Effect of spectral gap} In a first set of simulations, we let $n=7, q=3$ and $m=7$. We sampled $N=10^4$ triplet of matrices $A_1 \in S_3,A_2 \in S_2 ,A_3\in S_2$. We took $\delta = 10^{-10}$ and solved the ODE for $T=2500$, after verifying that for a typical run, the ODE solver had converged in less than $T=50$. The initial condition is $X(0)=X_0\delta$ where $X_0= U_0U_0^\top$, and $U_0$ is sampled from a Gaussian ensemble with zero mean and unit norm. The $y_i$ where sampled from a uniform distribution with support $(0,5]$. 
We show in Fig.~\ref{fig:effectalphadeterm} the average relative error and spectral ratio as a function of the spectral gap $\alpha$, where $\beta =1 $. We see that the relative error indeed decreases rapidly as the spectral gap increases and, furthermore, performance is highly correlated with the spectral ratio as predicted.

\paragraph{Effect of equality of smaller eigenvalues}In a second set of simulations, we explore the effect of violating the tame spectrum assumption. To this end,  we sampled $N=10^4$ pair of matrices $A_1,A_2 \in S_2$, hence $m=n=4$. The eigenvalues of $\sum_{i=1}^4 A_i$ are so that $\lambda_1=1.5$, and $\lambda_2,\ldots,\lambda_4$ are sampled independently  at random from a uniform distribution with support $[1,1+\gamma]$.  Hence for $\gamma=0$, the tame spectrum assumption is met exactly, but as $\gamma$ increases, the variance in the lower eigenvalues is increased. Since for increasing $\gamma$, the spectral gap {\it decreases}, we also measured the performance for pairs $A_i$ with spectrum of the associated $Q$ being $\{1,1,1,1.5\}$ and $\{1.25,1.25,1.25,1.5\}$. We see from this experiment, see Fig.~\ref{fig:effectbeta}, that increasing the variance in the lower eigenvalues affects performance {\it minimally} when compared to the effect of the spectral gap.

\paragraph{Limit $\delta \to 0$ and tame spectrum assumption} In a last set of experiments, we investigated whether one should expect that when the tame spectrum assumption is not met, the limit as the size of $x_0$ goes to zero still implies implicit regularization. To this end, we plot the relative error as a function of $\delta$. For this case, we let $n=m=4$ and $q=2$. We let the spectrum of $\sum_{i=1}^4 A_i$ be $\{1,1,1,2\}$ and $\{1,1,2,2\}$, where we understand the second case as {\it  strongly breaking} the tame spectrum assumption. We see in Fig.~\ref{fig:effecttame} that in the former case, simulations seem to indicate that as $\delta \to 0$, the relative error indeed vanishes, whereas in the latter case, it reaches a minimum for a certain value of $\delta >0$, indicating that $\varphi_\infty(x_0)$ does not converge to the minimum of $\oR$.

\begin{figure}[h!]
\pgfplotsset{compat=1.11}
\centering
\includegraphics[scale=1]{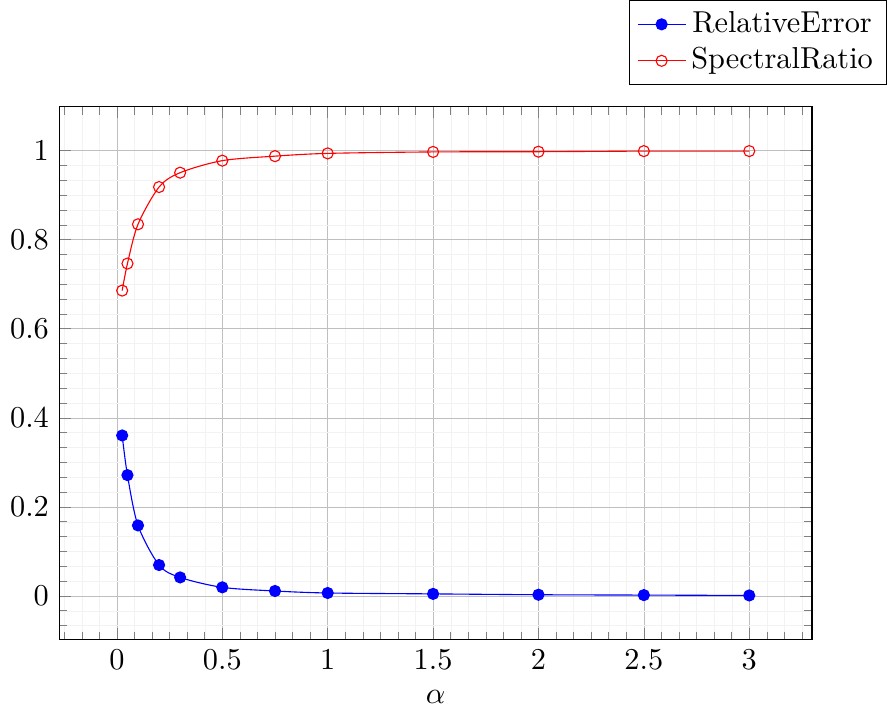}
    \caption{The relative error of Eq.~\eqref{eq:relerrratio} and spectral ratio of Eq.~\eqref{eq:defspectralratio} as a function of the spectral gap $\alpha$.}\label{fig:effectalphadeterm}
  \end{figure}

\begin{figure}[h!]
\centering
\pgfplotsset{compat=1.11}
\includegraphics{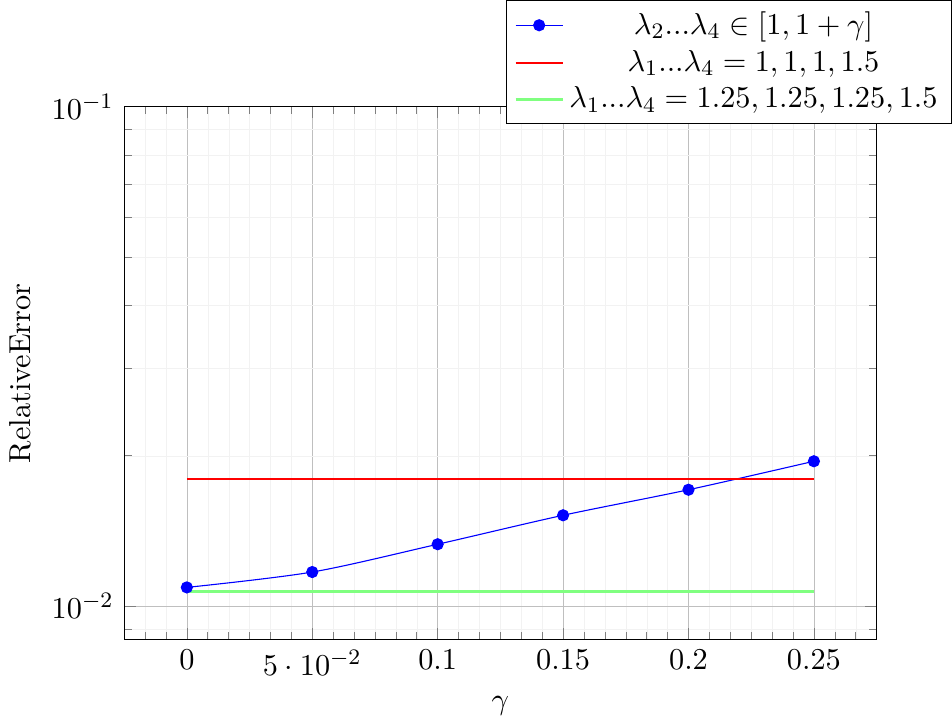}
    \caption{The relative error as a function of the dispersion of the eigenvalues $\lambda_2,\cdots,\lambda_4$. They are sampled uniformly at random from a uniform distribution with support $[1,1+\gamma]$. The eigenvalue $\lambda_1=\frac{3}{2}$. We compare with the relative error with spectrum $\{1,1,1,\frac{3}{2}\}$ and $\{\frac{5}{4},\frac{5}{4},\frac{5}{4},\frac{3}{2},\}$.}\label{fig:effectbeta}
  \end{figure}

\begin{figure}[h!]
\centering
\includegraphics{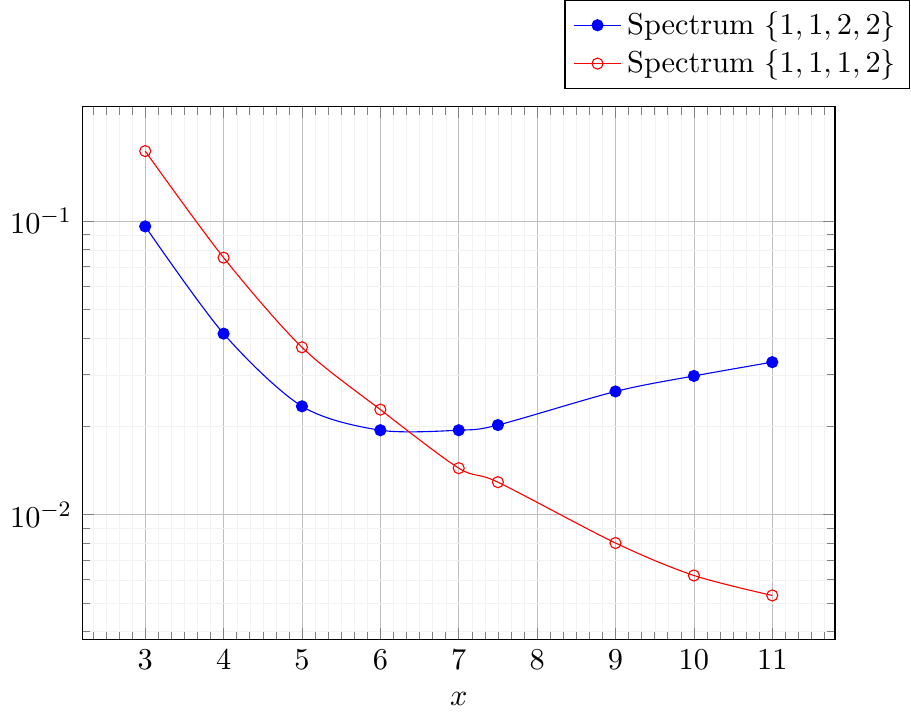}
    \caption{Relative error as a function of $\delta=10^{-x}$, where the primal flow~\eqref{eq:mainsys} is initialized at $\delta X_0$, and the spectrum of $\sum_{i=1}^q A_i$ is $\{1,1,1,2\}$ (i.e.,tame spectrum assumption met) or $\{1,1,2,2\}$.}\label{fig:effecttame}
    \end{figure}

\bibliographystyle{plain}
 \bibliography{implicitbib}

\end{document}